\newtheorem{definition}{Definition}
\newtheorem{theorem}{Theorem}
\newtheorem*{remark}{Remark}
\newtheorem{assumption}{Assumption}
\newtheorem{proposition}{Proposition}
\newcommand{\supp}{\textup{supp}}
\newcommand{\vol}{\textup{vol}}
\newcommand{\diam}{\textup{diam}}
\newcommand{\cl}{\text{cl}}
\newcommand{\vc}{\textup{VC}}
\newcommand{\dist}{\textup{dist}}
\newcommand{\inter}{\textup{int}}
\newcommand{\prob}{\mathbb{P}}
\newcommand{\reals}{\mathbb{R}}
\newcommand{\nats}{\mathbb{N}}
\newcommand{\expect}{\mathbb{E}}
\newcommand{\cmmnt}[1]{\ignorespaces}
\DeclareMathOperator*{\argmax}{arg\,max}
\title{A Closer Look at the Learnability of Out-of-Distribution (OOD) Detection}
\author{%
  Konstantin Garov \\
  University of California - San Diego\\
  \texttt{kgarov@ucsd.edu} \\
  \And
Kamalika Chaudhuri \\
  University of California - San Diego\\
  \texttt{kamalika@cs.ucsd.edu} \\
}
\begin{document}

\maketitle

\begin{abstract}
  Machine learning algorithms often encounter different or ``out-of-distribution'' (OOD) data at deployment time, and OOD detection is frequently employed to detect these examples. While it works reasonably well in practice, existing theoretical results on OOD detection are highly pessimistic. In this work, we take a closer look at this problem, and make a distinction between uniform and non-uniform learnability, following PAC learning theory. We characterize under what conditions OOD detection is uniformly and non-uniformly learnable, and we show that in several cases, non-uniform learnability turns a number of negative results into positive. In all cases where OOD detection is learnable, we provide concrete learning algorithms and a sample-complexity analysis.%
\end{abstract}

\section{Introduction}

The statistical learning framework, which lies at the core of much of machine learning, states that there is an underlying data distribution, from which both training and test data are drawn; the goal of the learner is then to achieve high performance on this distribution. This statistical learning assumption, however, rarely holds in practice. New kinds of inputs that were absent from the training data often materialize at deployment; for example, a self-driving car may encounter novel driving conditions \cite{filos2020can, sun2020scalability, bojarski2016end}; or, a medical-image classifier may encounter distributions shifts such as: different demographic of patients \cite{finlayson2021clinician, chen2021ethical}, difference in hospital protocols \cite{subbaswamy2021evaluating}, and others \cite{nestor2019feature, zech2018variable}. These kinds of different ``unseen'' examples are known in the literature as out-of-distribution or OOD examples.

The simplest way to address the challenge of OOD examples is to detect them, and then abstain from classification \cite{chow1970optimum}. This has led to a large body of empirical work on OOD detection that works reasonably well in practice \cite{ liu2020energy, ren2019likelihood, devries2018learning, fort2021exploring, sun2022out, vyas2018out, winkens2020contrastive, sun2021react, lin2021mood, Graham_2023_CVPR, zhang2023decoupling, liu2023good}. Theoretically, however, OOD detection is much less well-understood. \cite{fang2022out} formalized OOD detection as the problem of distinguishing between two distributions $D_{in}$ and $D_{out}$. $D_{in}$ is the {\em{ID distribution}}, and is observed during training, whereas $D_{out}$, the {\em{OOD distribution}}, is a novel distribution that only appears at test time with some frequency. The goal of the learner is to learn a rule to distinguish examples from $D_{in}$ and $D_{out}$ at test time. 

Unfortunately, most of the known theoretical results on OOD detection are highly pessimistic. \cite{fang2022out} showed that even under fairly stringent assumptions, OOD detection may still be impossible; moreover, while they do provide some positive results, the majority of them are purely learnability results \cmmnt{with no associated learning algorithms}. This contrasts with what has been observed in practice, where OOD detection methods have shown to achieve reasonable performance on a number of different benchmarks \cite{yang2021generalized}. In this paper, we attempt to bridge this gap between theoretical understanding and practical performance of OOD detection. To this end, we take a closer look at the problem with the view towards characterizing when OOD detection is possible in a structured and principled manner.

We begin by going back to the definition of learnability for OOD detection, and in line with classical theory of supervised learning, we make a distinction between two kinds of learnability -- uniform, where there is a uniform bound on the number of samples for a fixed expected risk, and non-uniform, where this is not the case. This allows a more nuanced understanding as we find that in some cases OOD detection is only non-uniformly learnable. 

Clearly, perfect learning OOD detection with no risk is impossible in the most general setting when the supports of the in-distribution and out-distribution overlap \cite{fang2022out}. We show that it may not even be non-uniformly learnable when these two supports are disjoint. To better understand learnability, we then investigate a series of further natural \cmmnt{and relatively mild} assumptions on the two distributions. We characterize under what conditions OOD detection is uniformly and non-uniformly learnable, and we find that in several cases, non-uniform learnability turns a number of negative results into positive. In all cases where OOD detection is learnable, we provide concrete learning algorithms. 

Specifically, our contributions are as follows:
\begin{itemize}
    \item We introduce uniform and non-uniform learnability and extend previous results to show that OOD detection is not always learnable even in a non-uniform sense when the supports of the ID and OOD distributions are disjoint. Doing so, we disprove a conjecture of ~\cite{fang2022out} . 

    \item We provide a non-uniform OOD detector for Far-OOD detection, where the supports of the ID and OOD distributions are separated by a distance more than a constant $\tau > 0$. We also provide a uniform learner when the diameters of all supports of ID distributions are less than a constant $R > 0$.
    
    \item We provide non-uniform OOD detectors when the ID and OOD supports are disjoint and, additionally, either all ID or all OOD distributions have a Hölder continuous density function. As above, when the diameters of the supports of all ID distributions are uniformly bounded, we provide uniform learners. 
    
    \item We provide a non-uniform learner of OOD detection if all ID distributions have a convex support disjoint from the supports of their respective OOD distributions.
    
    \item We extend the popular No Free Lunch Theorem in learning theory to the context of OOD detection.
\end{itemize}

Our results suggest that OOD detection may not be as impossible in theory as previously thought; it is plausible that the practical methods may be succeeding because in reality, the data distributions and the OOD detection benchmarks have certain good properties, and also because they might be working with ``average-case'' samples. Further investigation into the nature of these algorithms is left for future work. 

\section{Preliminaries}
\label{sec:prelims}

We begin by introducing some basic concepts and notation. Similar to the standard PAC learning setup, we call the set $\mathcal{X}$ of all possible inputs the \textbf{instance space}. 

Let $D_{in}$ be a distribution over $\mathcal{X}$ that we call the \textbf{ID (in-distribution) distribution}. At training time, our learning algorithm $A$ has access to an example oracle $EX(D_{in})$ from which it can draw independent samples $x \in \mathcal{X} \sim D_{in}$. In addition, there is an \textbf{OOD (out-of-distribution) distribution} $D_{out}$ over $\mathcal{X}$ which we do not observe during training. 

At test time, we get samples from $D_{\alpha}$ - a mixture of $D_{in}$ and $D_{out}$: $D_{\alpha} =  (1 - \alpha) D_{in} + \alpha D_{out}$ for some $\alpha \in (0,1)$, which we will call the \textbf{OOD probability}. At the same time a pseudolabel $y(x) \in \{0,1\}$ is recorded indicating whether $x$ was generated from $D_{in}$ ($y(x) = 1$) or $D_{out}$ ($y(x) = 0$). Our goal is to design an algorithm to predict these pseudolabels at test time. In practice, an instance $x$ may also have a categorical label as in the standard supervised setting; for our purposes, however, we ignore this label. In \nameref{sec:discdef} we discuss this scenario in more detail.

We remark that this is broadly equivalent to a PAC learning setup in which data from one of the labels is designated as out-of-distribution and is filtered out and not observed during training.


\subsection{Some Necessary Definitions}




Following \cite{fang2022out}, we call a \textbf{domain} a pair of distributions $(D_{in}, D_{out})$ over an instance space $\mathcal{X}$. A \textbf{domain space} $\mathcal{D}$ over $\mathcal{X}$ is a collection of such domains. The learning algorithm has knowledge of the domain space $\mathcal{D}$ but not of the specific domain used to generate the training and test data. 


A \textbf{hypothesis} is a classifier $h: \mathcal{X} \rightarrow \{0, 1\}$ that maps an instance to a pseudolabel. A \textbf{hypothesis class} is a collection of hypotheses. Abusing the notation a bit, we also interpret a subset $h \subseteq \mathcal{X}$ as the hypothesis specified by the indicator function of $h$.

We use the 0-1 loss function $l: \{0,1\}^2 \rightarrow \{0,1\}$, defined as $l(y_1, y_2) = \mathds{1}\{y_1 = y_2\}$, to evaluate the performance of a hypothesis. For any $h: \mathcal{X} \rightarrow \{0,1\}$ we then define its \textbf{risk} $R^{\alpha}_D(h)$ with respect to a domain $D = (D_{in}, D_{out})$ and OOD probability $\alpha \in (0,1)$ to be:

\begin{equation*}
    R^{\alpha}_D(h) = \expect_{x \sim (1-\alpha)D_{in} + \alpha D_{out}}[l(h(x), y(x))]
\end{equation*}

We also define the in-distribution and and out-of-distribution risks to be:

\begin{equation*}
    R^{in}_D(h) = \expect_{x \sim D_{in}}[l(h(x), 1)] \quad \mathrm{and} \quad R^{out}_D(h) = \expect_{x \sim D_{out}}[l(h(x), 0)]
\end{equation*}

We observe that $R^{\alpha}_D(h) = (1-\alpha) R^{in}_D(h) + \alpha R^{out}_D(h)$.


Extending of the concept of PAC learnability for supervised learning, the goal of OOD detection is to find a hypothesis $h$ in a hypothesis class $\mathcal{H}$ that achieves approximately the lowest possible risk among all elements of $\mathcal{H}$. We formalize this with the following definition:

\begin{definition}[Uniform learnability of OOD detection] 
\label{def:unif}
Given a domain space $\mathcal{D}$, a hypothesis space $\mathcal{H}$, and an OOD probability $\alpha \in (0,1)$ we say that OOD detection is \textbf{uniformly learnable} for $\mathcal{D}$ in $\mathcal{H}$ if there exists a function $N: (0, 1/2)^2 \rightarrow \nats$ and a learning algorithm $\mathcal{A}$ such that for every $0< \epsilon < 1/2$ and $0< \delta < 1/2$, every domain $D = (D_{in}, D_{out}) \in \mathcal{D}$, if $\mathcal{A}$ is given access to EX($D_{in}$) and inputs $\epsilon$ and $\delta$, $\mathcal{A}$ makes at most $N(\epsilon, \delta)$ calls to the oracle EX($D_{in}$) and outputs a hypothesis $h \in \mathcal{H}$, which with probability of at least $1 - \delta$ satisfies:

\begin{equation*}
    R^{\alpha}_D(h) \leq \inf_{h' \in H} R^{\alpha}_D(h') + \epsilon
\end{equation*}

where the probability is taken over the random samples drawn from $EX(D_{in})$ and the possible internal randomization of $\mathcal{A}$. 

\end{definition}

We say that OOD detection is \textbf{realizable} for a domain space $\mathcal{D}$ in a hypothesis space $\mathcal{H}$ if $\inf_{h' \in H} R^{\alpha}_D(h') = 0$ holds for all domains $D \in \mathcal{D}$  .
 
In addition, we introduce the concept of non-uniform learnability of OOD detection again extending the supervised learning theory \cite{shalev2014understanding} in which the sample complexity of the learner for fixed values of $\epsilon$ and $\delta$ is not uniformly bounded. We later observe that this relaxation leads to big differences in the learnability of OOD detection:

\begin{definition}[Non-uniform learnability of OOD detection] 
\label{def:nonunif}
Given a domain space $\mathcal{D}$. a hypothesis space $\mathcal{H}$, and an OOD probability $\alpha \in (0,1)$  we say that OOD detection is \textbf{non-uniformly learnable} for $\mathcal{D}$ in $\mathcal{H}$ if there exists a learning algorithm $\mathcal{A}$ such that for every $0 < \epsilon< 1/2$ and $0 < \delta< 1/2$
, if $\mathcal{A}$ is given access to EX($D_{in}$), $\epsilon$, and $\delta$, $\mathcal{A}$ terminates with probability $1$ and outputs a hypothesis $h \in \mathcal{H}$, which with probability of at least $1 - \delta$ satisfies:

$$R^{\alpha}_D(h) \leq \inf_{h' \in H} R^{\alpha}_D(h') + \epsilon$$

where the probability is taken over the random samples drawn from $EX(D_{in})$ and the possible internal randomization of $\mathcal{A}$.

\end{definition}

A feature of Definitions \ref{def:unif} and \ref{def:nonunif} is that the OOD probability $\alpha$ is known a priori; since this is often not the case in practice, a natural question is whether it makes a difference in learnability. In Theorem \ref{thmodes} in \nameref{sec:discdef}, we show that in a realizable setting OOD detection is learnable with a priori known OOD probability iff it is learnable with a priori unknown OOD probability. We remark, however, that this is not the case in an agnostic setting, and, thus, our definitions further relax the notion of learnability (compare to \cite{fang2022out}). For example, OOD detection can be learnable even for domains with overlapping supports.

Lastly, we remark that in a realizable setting, the exact hypothesis space $\mathcal{H}$ is also of no significance. That is, as long as the domain space $\mathcal{D}$ remains realizable in $\mathcal{H}$ the the learnability of OOD detection does not depend on the exact choice for $\mathcal{H}$. Therefore, as in this work we focus on the learnability of realizable OOD detection, we assume that $\mathcal{H} = \mathcal{P}(\mathcal{X})$, the set of all subsets of $\mathcal{X}$, and we do not discuss the exact value of the OOD probability. Further details alongside proofs of all the claims made above are provided in \nameref{sec:discdef}.

\section{A Summary of the Learnability Results}
\label{sec:summary}

Before we get to our main results, we highlight some conditions under which we will investigate the learnability of OOD detection as it is clear that without any restriction on the domain and the hypothesis spaces, OOD detection is not possible. Intuitively, the set of domains consistent with samples $S$ can be too large and diverse and there might be no single hypothesis which achieves low risk with respect to all such domains.

\subsection{Assumptions}
\label{sec:assumptions}

In particular, we introduce several assumptions on domains. We also extend them to domain spaces $\mathcal{D}$: we say that a domain space satisfies a particular assumption if all domains in $\mathcal{D}$ satisfy it.

\paragraph{Separation Assumptions.} We first look at assumptions that relate to how the ID and OOD distributions are separated in space. A plausible assumption often considered in the context of OOD detection is the following:

\begin{assumption}\label{ass:dsa}We say that a domain $D$ over a topological space satisfies the \textbf{Disjoint Supports Assumption (DSA)} if it has disjoint ID and OOD supports, that is, if:
    $\supp(D_{in}) \cap \supp(D_{out}) = \emptyset$

\end{assumption}

We note that a domain spaces $\mathcal{D}$ over $\mathcal{X}$ which satisfies DSA is always realizable in $\mathcal{P}(\mathcal{X})$. DSA is further strengthen by:

\begin{assumption}
We say that a domain $D$ over a metric space satisfies the \textbf{$\tau$ - Far Supports Assumption ($\tau$-FSA)} if:
  $\dist(\supp(D_{in}), \supp(D_{out})) > \tau $

\end{assumption}

Generally, we observe such separation: when the ID and OOD data have some inherent differences such as different semantic labels, different styles, or other; or in the representation space of a suitably trained neural network \cite{winkens2020contrastive}. In particular, many current practical algorithms (such as all Distance Methods discussed in the \nameref{relwork} section) rely on some kind of geometric separation between the ID and OOD data. This is also true for many popular OOD detection benchmarks, such as Imagenet \cite{deng2009imagenet} as ID, and Imagenet-O \cite{hendrycks2021natural} as OOD.

\paragraph{Structural Assumptions.} A second class of plausible in the context of OOD detection assumptions are geometric restrictions on the support of the ID distribution. In this work, we consider two such assumptions:

\begin{assumption}
We say that domain $D$ over $\reals^n$ satisfies the \textbf{Convex ID  Support Assumption (ConvexID)} if the support of $D_{in}$ is a convex subset of $\reals^{n}$.
    
\end{assumption}

\begin{figure}[t!]
    \includegraphics[width=151mm,scale=1.5]{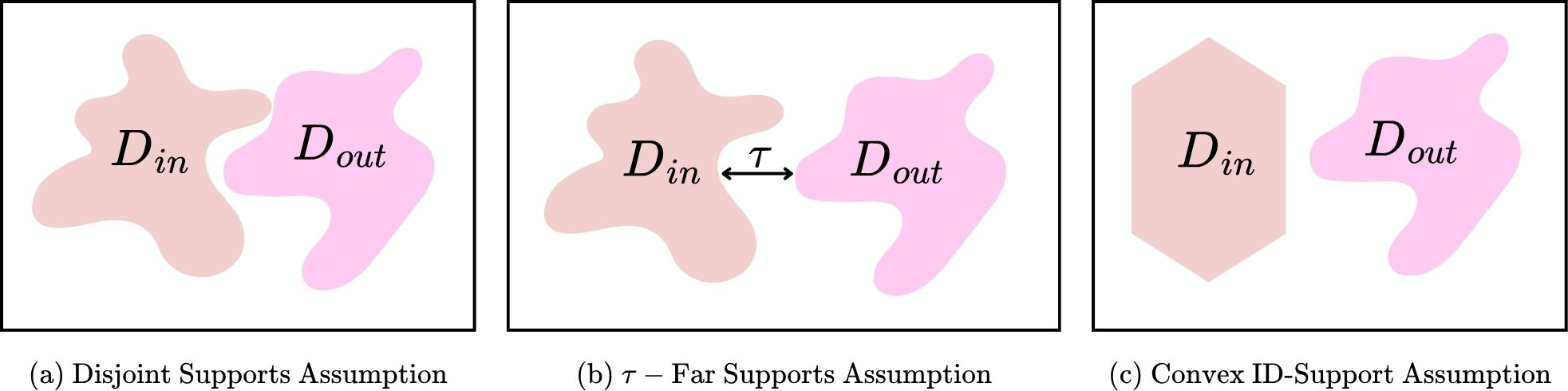}
    \caption{Examples of domain restrictions}
    \centering
\end{figure}

\begin{assumption}
For a positive real constant $R$ we say that a domain $D$ over $\reals^n$ satisfies the \textbf{Bounded ID  Support Assumption (BoundedID)}  for $R$ if:
    $\diam(\supp(D_{in})) < R$
\end{assumption}

\paragraph{Distributional Assumptions.} Intuitively, the complexity of a domain $D$ can be decomposed into two factors: the supports of $D_{in}$ and $D_{out}$ and the actual distributions within these supports. Having already introduced restrictions of the former above, we now look at the following distributional assumptions:  

\begin{assumption}
We say that a domain $D$ over $\reals^n$ satisfies the \textbf{Absolutely Continuous ID  distribution Assumption (ContID)} if the ID distribution $D_{in}$ is absolutely continuous with respect to the Lebesgue measure on $\reals^{n}$. By Radon-Nikodym Theorem \cite{cohn2013measure} this is equivalent to $D_{in} = f(x)dx$ for some density function $f$ on $\reals^n$.
\end{assumption}

\begin{assumption}
For real $\alpha>0$ and $C \geq 0$ we say that a domain $D$ over $\reals^n$ satisfies the \textbf{$(\alpha, C)$-Hölder-continuous ID  Distribution Assumption (ID  Hölder)} if $D_{in}$ has a $(\alpha, C)$-Hölder continuous density function.

Similarly, we say that $D$ satisfies the \textbf{$(\alpha, C)$-Hölder-continuous OOD Distribution Assumption (OOD Hölder)} if $D_{out}$ has a $(\alpha, C)$-Hölder continuous density function.

\end{assumption}

We remark that most standard distribution modeling naturally generated data (such as the Gaussian, Beta, ant Exponential Distributions) satisfy both of these assumptions for a suitable choice of the parameters.

\subsection{A Principled Learner}
\label{sec:procedures}

A fundamental question in OOD detection is: \textit{What can a principled algorithm for learning OOD detection look like?} To answer this, we introduce the following natural procedure, that is a map of the form $\mathcal{B}: \mathcal{X}^* \rightarrow \mathcal{H}$:

\begin{definition} We call \textbf{Maximal Zero OOD Risk Procedure} the map $\mathcal{B}: \mathcal{X}^* \rightarrow \mathcal{P}(\mathcal{X})$ defined by:

\begin{equation*}
    \mathcal{B} (S) = \bigcap_{D \in \mathcal{D}, S \subseteq \supp(D_{in})} \supp(D_{out})^c
\end{equation*}

That is, given samples $S \in \mathcal{X}^*$, the procedure $\mathcal{B}$ looks at all domains that could have generated $S$, and returns the intersection of the complement of the supports of the OOD distributions of these domains. 

\end{definition}

We note that for any domain space the output of the Maximal Zero OOD Risk Procedure always has OOD risk equal to 0.  A natural subsequent question is how to convert a procedure into a learning algorithm. For some domain spaces it might be the case that for a sufficiently large sample set $S$ the hypothesis $B(S)$ has small risk with high probability. A truncated version of the procedure $B$ will then directly yield a uniform learner. 

On the other hand, even if such guarantees are not possible, it might still be possible to construct a non-uniform learner. We formalize this in the following proposition:

\begin{restatable}{proposition}{prtol}
    \label{prtol}
     Let $\mathcal{D}$ be a domain space over $\mathcal{X}$ and let $\mathcal{B}: \mathcal{X}^* \rightarrow \mathcal{P}(\mathcal{X})$ be a procedure such that for all domains $D \in \mathcal{D}$ the ID risk of $\mathcal{B}(S)$ converges in probability to $0$ as the number of samples tends to infinity. 
     \cmmnt{That is: for every $D \in \mathcal{D}$ and $\epsilon > 0$:

    \begin{equation*}
        \prob_{S \sim D^{n}_{in}}[R^{in}_{D}(\mathcal{B}(S)) > \epsilon] \rightarrow 0 \text{ as } n \rightarrow \infty
    \end{equation*}}
    Additionally, assume that for every $\epsilon > 0$ there exists a positive integer $N$ such that for all $n > N$:
    
    \begin{equation*}
        \prob_{S \sim D^{n}_{in}}[R^{out}_{D}(\mathcal{B}(S)) > \epsilon] = 0
    \end{equation*}

    holds for all domains $D$ in $\mathcal{D}$. Then OOD detection for $\mathcal{D}$ is non-uniformly learnable.
\end{restatable}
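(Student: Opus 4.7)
The plan is to construct an adaptive learner $\mathcal{A}$ that splits the target risk budget $\epsilon$ between the ID and OOD contributions. Condition 2 provides a sample-size threshold that controls the OOD risk uniformly over $\mathcal{D}$, while condition 1 only guarantees pointwise (in $D$) convergence of the ID risk and so forces an adaptive stopping rule. Concretely, I would first invoke condition 2 with parameter $\epsilon/(2\alpha)$ to obtain an integer $N_0 = N_0(\epsilon)$ such that for every $D \in \mathcal{D}$ and every $n > N_0$ the inequality $R^{out}_D(\mathcal{B}(S)) \leq \epsilon/(2\alpha)$ holds with probability one; this already pins $\alpha R^{out}_D$ below $\epsilon/2$.

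To control the ID risk, set $\tau = \epsilon/(2(1-\alpha))$. The learner proceeds in rounds $k = 1,2,\dots$. In round $k$ it draws a fresh training batch $S_k$ of size $n_k = N_0 + k$ from $\mathrm{EX}(D_{in})$ and computes $h_k = \mathcal{B}(S_k)$; it then draws an independent test batch of size $m_k = \lceil 8\log(2^{k+1}/\delta)/\tau^2 \rceil$ from $\mathrm{EX}(D_{in})$ and forms the empirical ID risk $\hat R_k = \frac{1}{m_k}\sum_i \mathbf{1}\{x_i \notin h_k\}$. It halts and outputs $h_k$ as soon as $\hat R_k \leq 3\tau/4$; otherwise it continues to round $k+1$.

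Correctness follows from a single union bound. Let $\mathcal{E}_k$ be the event $\{|\hat R_k - R^{in}_D(h_k)| > \tau/4\}$. Hoeffding's inequality together with the choice of $m_k$ gives $\prob(\mathcal{E}_k) \leq \delta/2^k$, hence $\prob(\bigcup_k \mathcal{E}_k) \leq \delta$. On the complementary event, any round in which the algorithm halts satisfies $R^{in}_D(h_k) \leq \hat R_k + \tau/4 \leq \tau$, and combined with the OOD bound this yields $R^\alpha_D(h_k) \leq \epsilon$. For termination with probability one, condition 1 gives $\prob[R^{in}_D(h_k) > \tau/4] \to 0$, and on the complementary event Hoeffding certifies $\hat R_k \leq \tau/2 < 3\tau/4$ with probability at least $1-\delta/2^k$, so the per-round halting probability $p_k$ tends to $1$. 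Independence of the rounds then gives $\prob[\mathcal{A}\text{ not halted by round }k] \leq 1 - p_k \to 0$.

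The main obstacle is the nonuniformity of condition 1 in $D$: because the rate at which the ID risk shrinks can depend arbitrarily on the unknown domain, the algorithm cannot commit to any fixed sample size and must verify its current hypothesis with fresh test data. The delicate step is to schedule $(n_k, m_k, \delta_k)$ so that the union bound over all rounds closes (correctness) while the per-round halting probability still tends to $1$ (termination); the geometric budget $\delta_k = \delta/2^k$ together with the Hoeffding gap $\tau/4$ is what achieves these two goals simultaneously.
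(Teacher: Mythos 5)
Your proof is correct. The paper does not actually print a proof of Proposition~\ref{prtol}, but its remark following Theorem~\ref{farood} --- contrasting the ``more hands-on algorithm (with no hypothesis testing)'' of Theorem~\ref{iofsa} against Proposition~\ref{prtol} --- indicates that the intended argument is precisely the validation/hypothesis-testing scheme you use: retrain on batches of growing size, estimate the ID risk on fresh held-out data, and halt under a geometrically decaying confidence budget so that the union bound over rounds stays below $\delta$ while the per-round halting probability tends to one (guaranteeing almost-sure termination).
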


\subsection{Summary of the Results}

With our assumptions and a conceptual algorithm in place, we are now ready to provide a summary of our technical results. Table 1 summarizes the learnability of OOD detection under the assumptions introduced above. 

\renewcommand{\arraystretch}{2}
\begin{table}
    \centering
    \caption{Learnability results}
    \label{sample-table}
      \begin{tabular}{>{\centering\arraybackslash}m{3.2cm}>{\centering\arraybackslash}m{4.6cm}>{\centering\arraybackslash}m{4.4cm}}
    \toprule
    \cmidrule(r){1-2}
             &  \textbf{Non-uniform Learnability}   & \textbf{Uniform Learnability} \\
    \midrule
    \textbf{DSA (\ref{sec:dsano})} &  \multicolumn{2}{c}{Not learnable} \\  
    \textbf{$\tau$-FSA (\ref{sec:farood})} &  Learnable via the Maximal Zero OOD Risk Procedure & Learnable under BoundedID \\
    \textbf{DSA + Hölder (\ref{sec:holder})} &  Learnable via a variation of the Maximal Zero OOD Risk Procedure & Learnable under BoundedID \\
    \textbf{DSA + ConvexID (\ref{sec:convno})} & \multicolumn{2}{c}{Not learnable over $\reals^n$ for $n \geq 2$} \\
     \textbf{DSA + ConvexID + ContID (\ref{sec:convcont})} &  Learnable via the Maximal Zero OOD Risk Procedure & Not learnable even under BoundedID \\
    \bottomrule
  \end{tabular}
\end{table}

We remark that as similar OOD detection learnability results hold under DSA + ID  Hölder and DSA + OOD Hölder, we represent them in a single row. The exact learning 
algorithms, however,  differ in these two cases.

\section{OOD detection and VC-dimension} \label{sec:nfl}

In supervised learning the connection between PAC learnability and the VC dimension of the concept class is well understood: the Fundamental Theorem of Statistical Learning states that a concept class $\mathcal{C}$ realized in a hypothesis space $\mathcal{H}$ is PAC learnable iff $\mathcal{C}$ has a finite VC dimension \cite{shalev2014understanding}. Thus, a natural question is whether a similar result holds in the context of OOD detection.

As a domain $D$ would generally split its instance space $\mathcal{X}$ into four regions: $\supp(D_{in}) \cap \supp(D_{out})$, $\supp(D_{in}) \setminus \supp(D_{out})$, $\supp(D_{out}) \setminus \supp(D_{in})$, and $\supp(D_{out})^c \cap \supp(D_{in})^c$; (or three under DSA) we need to modify the definition of VC dimension to capture this additional complexity.  

\begin{definition}
\label{def:vc}
We say that a subset $S$ of the instance space $\mathcal{X}$ is \textbf{shattered} by a domain space $\mathcal{D}$ if for every subset $A \subseteq S $ there exists a domain $D \in \mathcal{D}$ such that:

\begin{equation*}
    \supp(D_{in}) \cap S = A \text{ and } \supp(D_{out}) \cap S = S \setminus A
\end{equation*}

The \textbf{VC dimension} of $\mathcal{D}$ is then the cardinality of the largest set $S \subseteq \mathcal{X}$ shattered by $\mathcal{D}$. If $\mathcal{D}$ shatters arbitrarily large sets $S$, we set $\vc(\mathcal{D}) = \infty$.
    
\end{definition}

Unfortunately, the positive direction of the Fundamental Theorem of Statistical Learning does not hold; that is, a finite VC dimension of $\mathcal{D}$ does not necessarily imply even the non-uniform learnability of OOD detection . Indeed, in the proof of Theorem \ref{dsa:no} we show the impossibility of non-uniform learnability of OOD detection for a domain space of VC dimension 1. For a more detailed discussion of this direction refer to \nameref{sec:oodvc}.

On the other hand, a variation of the opposite direction also known as the No Free Lunch Theorem holds. We call a domain space $\mathcal{D}$ \textbf{closed-under-mass-shifting} if for any $D \in \mathcal{D}$ all possible domains $D'$ with $\supp(D_{in}) = \supp(D'_{in})$ and $\supp(D_{out}) = \supp(D'_{out})$ also belong to $\mathcal{D}$. That is, given an ID and an OOD support either all domains with those supports belong to $\mathcal{D}$ or none of them. We then have:

\begin{restatable}[No Free Lunch Theorem for OOD Detection]{theorem}{freelunch}
\label{freelunch}
Let $\mathcal{D}$ be a closed-under-mass-shifting domain space over $\mathcal{X}$ of infinite VC dimension. Then uniform learnabilily of OOD detection is impossible for $\mathcal{D}$ in $\mathcal{P}(\mathcal{X})$.
\end{restatable}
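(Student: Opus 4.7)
The plan is to adapt the classical No Free Lunch argument to the OOD setting, which I present for $\alpha = 1/2$ (the general case is identical after rescaling the target $(\epsilon,\delta)$). Suppose for contradiction that $\mathcal{A}$ is a uniform learner for $\mathcal{D}$ with sample complexity function $N$, fix $\epsilon = 1/8$ and $\delta = 1/4$, and let $n = N(\epsilon, \delta)$. Since $\vc(\mathcal{D}) = \infty$, Definition \ref{def:vc} produces a set $S \subseteq \mathcal{X}$ of size $m := 8n$ shattered by $\mathcal{D}$: for every $A \subseteq S$ there exists $D_A \in \mathcal{D}$ with $\supp((D_A)_{in}) \cap S = A$ and $\supp((D_A)_{out}) \cap S = S \setminus A$.

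I restrict attention to $A$ of size $m/2$. Let $X_A, Y_A$ denote the full ID and OOD supports of $D_A$. Using the closed-under-mass-shifting hypothesis, I replace $D_A$ by a domain $D^A \in \mathcal{D}$ whose ID distribution is $(1-\eta)$ times the uniform distribution on $A$ plus an $\eta$-weighted tail supported on $X_A$ (and similarly for the OOD distribution with $S \setminus A$ and $Y_A$), for $\eta > 0$ arbitrarily small. Each $D^A$ is realized by the hypothesis $A$, so the uniform guarantee forces $\expect_\sigma[R^\alpha_{D^A}(\mathcal{A}(\sigma))] \leq \epsilon + (1-\epsilon)\delta = 11/32$ for every such $A$.

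The core step shows that on average over a uniformly random $A$ of size $m/2$ and $\sigma \sim (D^A_{in})^n$, the expected risk strictly exceeds $11/32$. With $\eta$ negligible, condition on $\sigma$ being drawn from the uniform part of $D^A_{in}$; the posterior of $A$ given $\sigma$ is then uniform over $m/2$-subsets of $S$ containing $T := \{x : x \in \sigma\}$. Consequently each $x \in S \setminus T$ lies in $A$ with probability $p = (m/2 - |T|)/(m - |T|) \in [3/7, 1/2]$, since $|T| \leq n = m/8$. For any hypothesis $h$, write $V = (S \setminus T) \cap h^{-1}(0)$ and $U = (S \setminus T) \cap h^{-1}(1)$; a direct computation gives
\[
\expect_A\bigl[R^\alpha_{D^A}(h) \mid \sigma\bigr] \;\geq\; \tfrac{2}{m}\bigl[(1-\alpha)|V|\,p + \alpha |U|(1-p)\bigr] \;\geq\; \tfrac{2(m - |T|)}{m}\,\min\bigl((1-\alpha)p,\,\alpha(1-p)\bigr),
\]
which for $\alpha = 1/2$ and $|T| \leq m/8$ evaluates to at least $3/8$.

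Taking the outer expectation yields $\expect_{A,\sigma}[R^\alpha_{D^A}(\mathcal{A}(\sigma))] \geq 3/8 > 11/32$, so some $A^*$ satisfies $\expect_\sigma[R^\alpha_{D^{A^*}}(\mathcal{A}(\sigma))] > 11/32$, which in turn gives $\prob_\sigma[R^\alpha_{D^{A^*}}(\mathcal{A}(\sigma)) > \epsilon] > \delta$ (integrating the tail of the risk against the threshold $\epsilon$) and contradicts the uniform guarantee on $D^{A^*}$. The main obstacle is the conditional risk calculation, which is more intricate than in classical NFL because OOD risk weighs ID and OOD errors asymmetrically; minor additional care is needed to handle the learner's internal randomization (absorbed into another expectation) and the possibility that $X_A, Y_A$ extend strictly beyond $S$ (absorbed by taking $\eta$ small).
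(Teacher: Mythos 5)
Your proposal is correct and follows essentially the same route as the paper's proof: assume a uniform learner, take a shattered set whose size is a constant multiple of the sample complexity, use closed-under-mass-shifting to place (almost all) ID and OOD mass uniformly on the two halves of the shattered set, and then lower-bound the Bayes risk by conditioning on the sample and computing the posterior over consistent splits. The only differences are bookkeeping (you use a symmetric $m/2$ split and explicit constants for $\alpha=1/2$, whereas the paper uses a $2N{:}N$ split to extract the explicit lower bound $\min\{\alpha,\tfrac{1-\alpha}{2}\}$ for general $\alpha$), and both proofs absorb the off-$S$ support and near-realizability issues into the same small-mass limit.
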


In \nameref{sec:oodvc} we present a generalized version of the above No Free Lunch Theorem for OOD detection.

\section{Results of Table 1}\label{ass_res}

We now discuss the learnability results from Table 1 in more detail.

\subsection{The Disjoint Supports Assumption is not enough for OOD Detection}
\label{sec:dsano}
 
For a fixed $n \in \nats$ we consider the domain space $\mathcal{D}_S$ of all domains $D$ over $\reals^n$ satisfying DSA, also referred to as the separate domain space. \cite{fang2022out} showed that OOD detection $\mathcal{D}_S$ is not uniformly learnable under DSA in hypothesis spaces of finite VC-dimension; we strengthen their result to show that the impossibility result applies to non-uniform learnability and in the absence of any restrictions on the hypothesis space. 

\begin{restatable}{theorem}{dsano}
\label{dsa:no}
The Disjoint Support Assumption does \textbf{not} always imply the non-uniform learnability of OOD detection. In particular, non-uniform OOD detection is impossible for $\mathcal{D}_{S}$ in $\mathcal{P}(\reals^n)$.
\end{restatable}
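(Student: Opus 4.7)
The plan is to prove Theorem~\ref{dsa:no} by exhibiting a subfamily $\mathcal{D}' \subseteq \mathcal{D}_S$ of VC dimension $1$ in the sense of Definition~\ref{def:vc} that is itself not non-uniformly learnable. Because $\mathcal{D}' \subseteq \mathcal{D}_S$, non-learnability of $\mathcal{D}'$ at once yields the theorem; the VC-$1$ strengthening is also what disproves the conjecture of \cite{fang2022out} that a finite VC dimension of the domain space implies non-uniform learnability of OOD detection.

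I would construct $\mathcal{D}'$ as a family of disjoint-support domains $\{D^{(\theta)}\}_{\theta \in \Theta}$ indexed by a rich set $\Theta$ (for instance $\Theta = \{0,1\}^{\nats}$). The construction is engineered so that (i) the in-distributions $D^{(\theta)}_{in}$ look statistically similar from finite samples on a positive-measure collection of $\theta$'s, and (ii) the zero-risk hypotheses $h^{*}_{\theta}$ disagree across $\theta$ in a way that no single hypothesis can be $\epsilon$-good against all of them. One natural realization plants a common ``bulk'' component in every $D^{(\theta)}_{in}$ together with a small $\theta$-dependent perturbation, and takes each $D^{(\theta)}_{out}$ to be a singleton whose location depends on $\theta$ and lies in $\reals^{n} \setminus \supp(D^{(\theta)}_{in})$. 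Forcing the OOD support to be a single point rules out the ``both in OOD'' pattern on any two-point set, which is precisely the obstruction that caps $\vc(\mathcal{D}')$ at $1$.

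The impossibility argument then proceeds by contradiction. Assuming a learner $\mathcal{A}$ achieves non-uniform learnability on $\mathcal{D}'$ for some fixed $\epsilon, \delta \in (0, 1/2)$, I would place a product prior on $\Theta$ and apply a Bayesian averaging argument. The analytic core is an indistinguishability lemma: conditioned on the event that the $\theta$-dependent perturbation is not revealed by $\mathcal{A}$'s samples, the output of $\mathcal{A}$ has the same distribution for all consistent $\theta$'s, yet the correct $h^{*}_{\theta}$ varies across these $\theta$'s, so any fixed hypothesis must incur $R^{\alpha}_{D^{(\theta)}} > \epsilon$ for at least a positive-density collection of them. A max-versus-expectation step then produces a specific $\theta^{*}$ at which $\mathcal{A}$ fails with probability larger than $\delta$, giving the domain $D^{(\theta^{*})} \in \mathcal{D}'$ that witnesses the non-learnability.

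The main obstacle is the adaptivity of the non-uniform learner, whose sample complexity may depend on the (unknown) domain; in particular, one cannot a priori fix a uniform sample budget $N$. Overcoming this requires that the construction leave, at every possible sample size, a positive-measure family of $\theta$'s whose perturbations remain statistically unresolved. This is arranged by taking the perturbation probabilities across $\Theta$ to decay suitably, so that even as the learner keeps sampling, a constant fraction of $\theta$'s stay in the ``not-yet-resolved'' regime and the averaging argument remains effective. A secondary technical point is verifying that each constructed $D^{(\theta)}$ sits in $\mathcal{D}_S$, i.e., that $\supp(D^{(\theta)}_{in})$ and $\supp(D^{(\theta)}_{out})$ are disjoint closed subsets of $\reals^{n}$, which is handled by placing the perturbation points and the OOD singletons at well-separated isolated locations.
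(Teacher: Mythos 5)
Your proposal follows essentially the same route as the paper's proof: the paper's Proposition~\ref{lrule} builds exactly such a family --- a common bulk $D_{in}=U([0,1])$, perturbed in-distributions $D^{x}_{in}$ that delete a tiny interval, singleton OOD supports placed at hidden points of the deleted region (which is also what caps the VC dimension at $1$ and kills the conjecture), a coupling/averaging argument showing the learner cannot distinguish the perturbed domains from the base on its samples, and a truncation of the adaptive stopping time via monotone convergence to defeat non-uniformity, exactly as you flag in your ``main obstacle'' paragraph. The paper additionally extracts the quantitative lower bound $\expect[R_D(h)]\geq\alpha(1-\alpha)$ via a Fubini computation on the volume of the returned hypothesis, but the underlying indistinguishability-plus-averaging mechanism is the one you describe.
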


The underlying reason behind this impossibility result is that the set of ID distributions of the domains in $\mathcal{D}_S$ is dense. Therefore, for any finite set of samples $S$ there are infinitely many domains in $\mathcal{D}_S$ consistent with $S$. As the algorithm cannot distinguish between them it has to be conservative with respect to all of them as otherwise the OOD risk may become very high. But this forces a high ID risk. This differs conceptually from the argument in Theorem 5 from \cite{fang2022out}, which relies on the insufficient expressivity of the hypothesis class assumed to be of finite VC dimension.  

A conjecture of \cite{fang2022out} is that realizable OOD detection is uniformly learnable if $\mathcal{H}$ is agnostically learnable for supervised learning, that if $\mathcal{H}$ is of finite VC dimension. Proving Theorem \ref{dsa:no} we use a domain space, realizable in the hypothesis class $\mathcal{H} = \{[0,1] \setminus x \mid x \in [0,1]\}$ whose VC dimension is $1$. Thus, we disprove the above conjecture. A detailed discussion can be found in \nameref{sec:proofs}, where we also show that \textbf{no} learning rule achieves expected risk less than $\alpha(1 - \alpha)$ for all domains in $\mathcal{D}_{S}$.

\subsection{Far-OOD Detection}
\label{sec:farood}

OOD detection under the $\tau$-FSA is known in the literature as $\tau$-Far OOD detection. In practice, many of the successful methods rely on some kind of geometric separation between $\supp(D_{in})$ and $\supp(D_{out})$  \cite{lee2018simple, podolskiy2021revisiting, sun2022out}. While $\tau$-Far OOD detection is known to be uniformly learnable provided that the instance space $\mathcal{X}$ compact, below we take a closer look at its learnability.

As shown in Table 1, $\tau$-FSA is enough to guarantee non-uniform learnability -- even without the compactness assumption of~\cite{fang2022out}. Indeed, under $\tau$-FSA the closed ball $B(x, \tau)$ has no OOD risk for any sample $x \in \supp(D_{in})$. This allows for a learner constructed via a relaxation of the Maximal Zero OOD Risk Procedure by taking the union of the balls $B(x, \tau)$ for all training samples drawn.

\begin{restatable}[Far-OOD Detection]{theorem}{farood}
\label{farood}
OOD detection is non-uniformly learnable for $\mathcal{D}$ in $\mathcal{P}(\reals^{n})$ for a domain space $\mathcal{D}$ which satisfies $\tau$-FSA for a priori given $\tau > 0$,
\end{restatable}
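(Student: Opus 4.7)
The plan is to define an explicit learner and verify the hypotheses of Proposition~\ref{prtol}. Given a sample $S = \{x_1, \ldots, x_n\}$ drawn from $D_{in}$, the learner outputs
\[
h_S \;:=\; \bigcup_{i=1}^n B(x_i, \tau),
\]
interpreted as the indicator of its ID region. It then suffices to show, for every $D \in \mathcal{D}$, that (i) $R^{out}_D(h_S) = 0$ almost surely, and (ii) $R^{in}_D(h_S) \to 0$ in probability as $n \to \infty$. Claim (i) is immediate from $\tau$-FSA: for every $z \in \supp(D_{out})$ and every $x \in \supp(D_{in})$ we have $\|z - x\| > \tau$, so $z \notin B(x_i, \tau)$ for each sample $x_i$ and hence $z \notin h_S$; this verifies the OOD hypothesis of Proposition~\ref{prtol} with $N = 0$ for every $\epsilon > 0$.

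Claim (ii) is the main technical step, and is where I expect the principal obstacle: because $\supp(D_{in})$ is not a priori bounded, convergence has to be obtained by a distribution-dependent covering argument, which is exactly why the conclusion is non-uniform rather than uniform learnability. Given $\epsilon > 0$, I would use tightness of the Borel probability measure $D_{in}$ on $\reals^n$ to pick a compact set $K$ with $D_{in}(K) > 1 - \epsilon/2$, and then cover $K$ by finitely many open balls $U_1, \ldots, U_N$ of radius $\tau/2$ centered at points $z_1, \ldots, z_N$. The key geometric fact is the triangle inequality: whenever a sample $x_i$ lies in some $U_j$, any $y \in U_j$ satisfies $\|y - x_i\| < \tau$, so $U_j \subseteq B(x_i, \tau) \subseteq h_S$.

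Let $J := \{j : p_j > 0\}$ with $p_j := D_{in}(U_j)$. Since balls outside $J$ carry no $D_{in}$-mass, $D_{in}\bigl(\bigcup_{j \in J} U_j\bigr) \geq D_{in}(K) > 1 - \epsilon/2$. A union bound gives that the probability that some $j \in J$ receives no sample is at most $\sum_{j \in J} (1 - p_j)^n$, a finite sum tending to $0$ as $n \to \infty$. On the complementary event $\bigcup_{j \in J} U_j \subseteq h_S$, so $R^{in}_D(h_S) \leq 1 - D_{in}\bigl(\bigcup_{j \in J} U_j\bigr) < \epsilon/2$. Taking $n$ large enough makes the failure probability smaller than any desired $\delta$, though the required threshold depends on $\min_{j \in J} p_j$ and hence on $D$ --- precisely the distribution-dependence that yields a non-uniform learner. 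Invoking Proposition~\ref{prtol} then completes the proof.
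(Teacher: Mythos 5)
Your proposal is correct and follows essentially the same route as the paper: the learner is the union of $\tau$-balls around the samples, the OOD risk vanishes immediately from $\tau$-FSA, the ID risk is handled by tightness of $D_{in}$ plus a finite covering argument, and Proposition~\ref{prtol} converts this into a non-uniform learner. The only (immaterial) difference is in the covering: you cover a compact high-mass set by $\tau/2$-balls and union-bound over the finitely many positive-mass balls, whereas the paper partitions a bounding ball into hypercubes of side $\tau/\sqrt{n}$ and reuses the ``significant cube'' analysis from its uniform-learnability proof.
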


A further question is whether $\tau$-Far OOD detection suffices to guarantee the uniform learnability of OOD detection. Unfortunately, the \nameref{freelunch} implies the contrary: the uniform learnability result in ~\cite{fang2022out} does not hold without the boundedness assumption. More precisely, if $\mathcal{D}^{\tau}$ is the domain space of all domains $D$ over $\reals^n$ satisfying $\tau$-FSA, we have:

\begin{restatable}{theorem}{nofsa}
\label{nofsa}
 The $\tau$-Far Support Assumption does \textbf{not} always imply the uniform learnability of OOD detection. In particular, uniform learnability of OOD detection is impossible for $\mathcal{D}^{\tau}$ in $\mathcal{P}(\reals^n)$ for any real constant $\tau >0$.
\end{restatable}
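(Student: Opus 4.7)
The plan is to derive this as a direct application of the No Free Lunch Theorem for OOD detection (Theorem \ref{freelunch}). It suffices to verify that $\mathcal{D}^{\tau}$, the domain space of all domains over $\reals^n$ satisfying $\tau$-FSA, is closed under mass shifting and has infinite VC dimension in the sense of Definition \ref{def:vc}.

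Closure under mass shifting is immediate, since the $\tau$-FSA condition $\dist(\supp(D_{in}), \supp(D_{out})) > \tau$ depends only on the supports and not on the particular distributions placed on them. Thus whenever $D \in \mathcal{D}^{\tau}$, every domain $D'$ with $\supp(D'_{in}) = \supp(D_{in})$ and $\supp(D'_{out}) = \supp(D_{out})$ also lies in $\mathcal{D}^{\tau}$.

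For infinite VC dimension, I would, for each $k \in \nats$, exhibit a set $S_k = \{x_1, \dots, x_k\} \subseteq \reals^n$ shattered by $\mathcal{D}^{\tau}$. The natural choice is to spread the $x_i$ so that any two are at distance strictly greater than, say, $3\tau$ (e.g. place them along a line with spacing $3\tau + 1$). Then for any subset $A \subseteq S_k$, define
\begin{equation*}
    U_A = \bigcup_{x_i \in A} \overline{B(x_i, \tau/2)}, \qquad V_A = \bigcup_{x_i \in S_k \setminus A} \overline{B(x_i, \tau/2)},
\end{equation*}
and let $D^A$ be any domain with $\supp(D^A_{in}) = U_A$ and $\supp(D^A_{out}) = V_A$ (for instance, uniform distributions on these closed balls). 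By construction, $\dist(U_A, V_A) \geq 3\tau - \tau > \tau$, so $D^A \in \mathcal{D}^{\tau}$, and clearly $\supp(D^A_{in}) \cap S_k = A$ and $\supp(D^A_{out}) \cap S_k = S_k \setminus A$. Hence $S_k$ is shattered, so $\vc(\mathcal{D}^{\tau}) = \infty$.

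Applying Theorem \ref{freelunch} with $\mathcal{X} = \reals^n$ and $\mathcal{D} = \mathcal{D}^{\tau}$ then yields the impossibility of uniform learnability of OOD detection for $\mathcal{D}^{\tau}$ in $\mathcal{P}(\reals^n)$. I do not expect any step of this argument to be an obstacle: the only subtlety is choosing the point spacing in $S_k$ large enough so that the slight fattening of each point into a ball of radius $\tau/2$ preserves the strict $\tau$-separation between the two supports, which the factor-of-3 buffer handles comfortably.
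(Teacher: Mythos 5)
Your proposal is correct and takes essentially the same approach as the paper: both deduce the result from the No Free Lunch Theorem (Theorem~\ref{freelunch}) by exhibiting, for every $m$, an $m$-element set of widely spaced points in $\reals^n$ that is shattered by $\mathcal{D}^{\tau}$, giving $\vc(\mathcal{D}^\tau) = \infty$. The only cosmetic difference is that you fatten each point into a ball of radius $\tau/2$ and place uniform distributions on these balls, while the paper simply takes the points $\{2n\tau : n \in \{1,\dots,m\}\}$ with point-mass distributions; you also spell out the closed-under-mass-shifting hypothesis, which the paper's one-line proof leaves implicit.
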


On the other hand, if a domain class $\mathcal{D}$ additionally satisfies BoundedID, the relaxation of the Maximal Zero OOD Risk Procedure discussed above does yield a uniform learner. This slightly relaxes the conditions in~\cite{fang2022out}, which require a compact instance space $\mathcal{X}$ for uniform learnability:

\begin{restatable}{theorem}{unfar}
\label{unfar}
Let $\mathcal{D}$ be a domain space over $\reals^n$ which satisfies the $\tau$-Far Support Assumption and the Bounded ID  Support Assumption for $R$ for a priori known positive real numbers $\tau$ and $R$. Then, OOD detection is uniformly learnable for $\mathcal{D}$ in $\mathcal{P}(\reals^n)$. 
\end{restatable}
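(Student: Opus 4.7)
The plan is to analyze the natural relaxation of the Maximal Zero OOD Risk Procedure already used in the proof of Theorem \ref{farood}: given $m$ i.i.d.\ samples $x_1,\dots,x_m$ from $D_{in}$, the learner $\mathcal{A}$ outputs the hypothesis $h = \bigcup_{i=1}^m B(x_i,\tau)$ (identified with its indicator). Since $\tau$ and $R$ are known a priori, $\mathcal{A}$ can be defined without knowledge of the specific domain, and the entire argument reduces to a uniform bound on the sample size $m$ as a function only of $\epsilon,\delta$.

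First I would dispatch the OOD side. By $\tau$-FSA, each sample $x_i \in \supp(D_{in})$ satisfies $\dist(x_i,\supp(D_{out})) > \tau$, so $B(x_i,\tau)\cap\supp(D_{out}) = \emptyset$; taking the union gives $R^{out}_D(h)=0$ deterministically. For the ID side I would invoke a covering argument that makes essential use of BoundedID. Since $\supp(D_{in})$ has diameter $<R$, a standard volumetric bound yields a finite constant $K = K(R,\tau,n)$ (for instance, $K \le \lceil 4R/\tau + 1\rceil^{n}$) such that any subset of $\reals^n$ of diameter $<R$ can be covered by at most $K$ closed balls $C_1,\dots,C_K$ of radius $\tau/2$. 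The critical observation is that $K$ depends only on the a priori constants $R,\tau,n$ and not on $D$, which is exactly what we need for uniformity.

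Next I would use the triangle inequality: if any sample $x_i$ lies in a cover ball $C_j$ (whose center is at distance at most $\tau/2$ from $x_i$), then $C_j \subseteq B(x_i,\tau) \subseteq h$. Hence the ID risk $R^{in}_D(h) = \prob_{x\sim D_{in}}[x \notin h]$ is bounded by the total $D_{in}$-mass of those cover balls containing no sample. Writing $p_j = D_{in}(C_j)$, a standard Chernoff-style bound gives $\prob[C_j \text{ has no sample}] \le (1-p_j)^m \le e^{-m p_j}$. Taking a union bound over the at most $K$ cover balls with $p_j \ge \epsilon/K$, the probability that some such ball is missed is at most $K e^{-m\epsilon/K}$; setting this $\le \delta$ yields the explicit uniform sample size
\begin{equation*}
N(\epsilon,\delta) \;=\; \left\lceil \frac{K}{\epsilon}\log\frac{K}{\delta}\right\rceil.
\end{equation*}
On the complementary event (of probability $\ge 1-\delta$), every missed ball has mass $<\epsilon/K$, so the ID risk is $<K\cdot\epsilon/K = \epsilon$, and therefore $R^\alpha_D(h) = (1-\alpha)R^{in}_D(h)+\alpha R^{out}_D(h) < \epsilon$. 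Since $\tau$-FSA implies DSA, the infimum $\inf_{h'\in\mathcal{P}(\reals^n)}R^\alpha_D(h')=0$ is attained (by $h'=\supp(D_{in})$), so the stated uniform learnability guarantee follows.

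There is no genuine obstacle here, but there is one point that must be handled carefully: the covering number $K$ must be controlled uniformly over $\mathcal{D}$, which is exactly where BoundedID (together with the known constant $R$) enters and where the proof of Theorem \ref{nofsa} fails. Everything else is routine concentration, and the argument also gracefully degrades to the non-uniform statement of Theorem \ref{farood} when $R$ is unknown, since after the first sample $x_1$ one has $\supp(D_{in})\subseteq B(x_1,R)$ for the (unknown) diameter $R$ of that particular domain, giving the same covering structure but without a uniform sample bound.
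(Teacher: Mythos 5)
Your proposal is correct and follows essentially the same route as the paper's proof: the paper also outputs $\bigcup_{s\in S}\cl(B(s,\tau))$, gets zero OOD risk from $\tau$-FSA, and bounds the ID risk by partitioning a ball of radius $R$ around a sample into $M<(2\sqrt{n}R/\tau)^n$ hypercubes of diameter $\tau$ and union-bounding over the ``significant'' cells, exactly mirroring your cover by radius-$\tau/2$ balls. The only cosmetic difference is disjoint hypercubes versus an overlapping ball cover, and your explicit $N(\epsilon,\delta)=\lceil \tfrac{K}{\epsilon}\log\tfrac{K}{\delta}\rceil$ is if anything slightly more careful about the $\delta$-dependence than the paper's stated $\tfrac{M}{\epsilon}\log(\tfrac{M}{\epsilon})$.
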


\subsection{OOD Detection of Hölder Continuous Distributions}
\label{sec:holder}

Another area of great practical interest is OOD detection without the $\tau$-Far Supports Assumption, also known as Near-OOD detection \cite{fort2021exploring, ren2021simple, li2023rethinking}. That being said, there is a general lack of a theoretical understanding of how and under what conditions Near-OOD detection can be possible \cite{fang2022out}. Below, we aim to provide insight into this by considering two different scenarios which allow for Near-OOD detection.

In particular, we first look at Hölder continuity; the connection between distance-based classification and Holder continuity has been well established \cite{von2004distance}. Holder continuity is, however, a novel assumption in the context of OOD detection.  Perhaps somewhat surprisingly, we find that the learnability of OOD detection under the Hölder continuous ID/OOD Distribution Assumption behaves similarly to the learnability of Far-OOD detection. 

An intuitive explanation is that the combination of DSA and ID  Hölder or OOD Hölder forces the supports of $D_{out}$ and $D_{in}$ to be "far" from each other. Indeed, under OOD Hölder and DSA the amount of OOD mass close to a point in the support of $D_{in}$ is restricted.  Therefore, a low OOD risk hypothesis can be constructed by taking the union of balls with a sufficiently small radius centered at the training samples drawn, similarly to the learner of Far-OOD detection. 

Conversely, ID  Hölder and DSA force the mass of $D_{in}$ to be "far" from $\supp(D_{out})$. In particular, a learning algorithm can approximate the density of $D_{in}$ and return the regions in which the density of $D_{in}$ is sufficiently high. Thus, we present a distance-based algorithm in the OOD Hölder case and a density-based one in the ID  Hölder case:

\begin{restatable}[Non - uniform OOD Detection under Hölder]{theorem}{nonholder}
\label{nonholder}
    Let $\gamma > 0$ and $C \geq 0$ be two a priori given constants such that the domain space $\mathcal{D}$ satisfies DSA and the $(\gamma, C)$-Hölder Continuous ID  Distribution Assumption (or DSA and the $(\gamma, C)$-Hölder Continuous OOD Distribution Assumption). Then OOD detection is non-uniformly learnable for $\mathcal{D}$ in $\mathcal{P}(X)$.
\end{restatable}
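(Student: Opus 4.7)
The plan is to invoke Proposition \ref{prtol} in each of the two sub-cases with a procedure tailored to whichever density is Hölder. The key payoff of combining DSA with Hölder continuity is a quantitative separation: under OOD Hölder, $f_{out}$ vanishes on $\supp(D_{in})$ and satisfies $f_{out}(y)\le C\,\dist(y,\supp(D_{in}))^{\gamma}$; dually, under ID Hölder, $f_{in}$ vanishes on $\supp(D_{out})$ and satisfies $f_{in}(y)\le C\,\dist(y,\supp(D_{out}))^{\gamma}$. These two facts motivate the ``distance-based'' and ``density-based'' procedures respectively, both realised as maps $\mathcal{B}:\mathcal{X}^{*}\to\mathcal{P}(\mathcal{X})$ whose ID and OOD risks will satisfy the two hypotheses of Proposition \ref{prtol}.

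For the OOD Hölder case I would take $\mathcal{B}(S)=\bigcup_{x\in S} B(x,r_{|S|})$ with a deterministic schedule $r_m=m^{-\beta}$ for any exponent $\beta\in\bigl(\tfrac{1}{n+\gamma},\,\tfrac{1}{n}\bigr)$. Since every $x_i\in\supp(D_{in})$ has $f_{out}(x_i)=0$, Hölder continuity gives $f_{out}(y)\le C r_m^{\gamma}$ on $B(x_i,r_m)$, so integrating and summing over the $m$ balls yields the \emph{deterministic} bound $R^{out}_D(\mathcal{B}(S))=O(m\,r_m^{n+\gamma})\to 0$, uniformly in $\mathcal{D}$, which supplies the second hypothesis of Proposition \ref{prtol}. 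For the ID-risk hypothesis I would invoke inner regularity to pick, for fixed $\epsilon>0$, a compact $K\subseteq\supp(D_{in})$ of $D_{in}$-mass at least $1-\epsilon/2$, cover $K$ by $N=O(r_m^{-n})$ balls of radius $r_m/2$, and apply a union bound: the probability of missing any cover ball with $D_{in}$-mass at least $\epsilon/(2N)$ is at most $N\exp(-m\epsilon/(2N))$, which tends to $0$ precisely because $\beta<1/n$. Each cover ball that is hit by some sample $x_i$ is contained in $B(x_i,r_m)\subseteq\mathcal{B}(S)$, so $\mathcal{B}(S)$ captures all but $\epsilon$ of the $D_{in}$-mass with probability tending to $1$, giving ID risk convergence in probability.

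For the ID Hölder case I would instead form a Parzen-window or histogram estimator $\hat f_m$ of $f_{in}$ with bandwidth $r_m$ and output $\mathcal{B}(S)=\{x:\hat f_m(x)\ge t_m\}$, with $t_m\to 0$ slower than the $L^{\infty}$ error of $\hat f_m$. The inequality $f_{in}(y)\le C\,\dist(y,\supp(D_{out}))^{\gamma}$ forces $\{f_{in}\ge t\}$ to lie at distance at least $(t/C)^{1/\gamma}$ from $\supp(D_{out})$; once $\|\hat f_m-f_{in}\|_\infty\le t_m/2$, the inclusion $\{\hat f_m\ge t_m\}\subseteq\{f_{in}\ge t_m/2\}$ places $\mathcal{B}(S)$ entirely away from $\supp(D_{out})$ and hence $R^{out}_D(\mathcal{B}(S))=0$; as $t_m\to 0$ the same superlevel set sweeps out $\{f_{in}>0\}$ and captures arbitrarily large $D_{in}$-mass, driving $R^{in}_D(\mathcal{B}(S))$ to $0$ in probability.

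The main obstacle is that the second hypothesis of Proposition \ref{prtol} demands $\prob[R^{out}_D>\epsilon]=0$ for large $m$, i.e.\ a \emph{deterministic} eventual OOD-risk bound, whereas KDE-type uniform convergence is intrinsically probabilistic. I expect to resolve this in the ID Hölder case by using a compactly supported kernel and intersecting the output with $\bigcup_i B(x_i,r_m)$, which reduces the OOD-risk analysis to the deterministic calculation from the OOD Hölder case, or by inflating $t_m$ to an explicit worst-case concentration radius for $\hat f_m$ so that the failure event collapses to a null set. A secondary technical nuisance is the non-compactness of $\reals^n$, which enters both the covering step and the uniform convergence of $\hat f_m$; I would absorb it by truncating the analysis to a large compact set carrying all but $\epsilon$ of the mass of both $D_{in}$ and $D_{out}$ before applying either argument.
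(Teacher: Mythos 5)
Your OOD-H\"older half is sound and is essentially the paper's argument: center balls at the samples (the paper uses hypercubes inside an estimated bounded region, but the idea is the same) with radius small enough that H\"older continuity of $f_{out}$ --- which vanishes on $\supp(D_{in})$ by DSA --- forces each ball's OOD mass below $C c_n r_m^{n+\gamma}$, giving a \emph{deterministic} OOD-risk bound uniform over $\mathcal{D}$, while a covering/union-bound argument gives ID-risk convergence in probability; Proposition \ref{prtol} then applies. Your exponent window $\beta\in(\tfrac{1}{n+\gamma},\tfrac{1}{n})$ is exactly what makes both halves work.

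The ID-H\"older half has a genuine gap: you have correctly located the obstacle but neither of your proposed repairs closes it. Fix (a) --- intersecting the superlevel set with $\bigcup_i B(x_i,r_m)$ --- does not reduce the OOD-risk analysis to the deterministic calculation from the other case, because that calculation used H\"older continuity of $f_{out}$, which is unavailable here. Under ID H\"older alone $D_{out}$ need not even have a density: for every $m$ and every radius $r_m$ there is a domain in $\mathcal{D}$ whose OOD distribution is an atom within distance $r_m$ of a point of $\supp(D_{in})$, so $B(x_i,r_m)$ can carry OOD mass close to $1$ and the intersection buys nothing deterministic. Fix (b) is also unavailable: for a histogram or compactly supported KDE the event that a low-ID-mass cell receives enough samples to clear any threshold $t_m$ has strictly positive probability at every finite $m$, so $\prob[R^{out}_{D}(\mathcal{B}(S))>\epsilon]$ cannot be made exactly $0$ and the second hypothesis of Proposition \ref{prtol} genuinely fails for \emph{any} density-estimation procedure. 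The paper's resolution (its Theorem \ref{iofsa}) is to abandon Proposition \ref{prtol} in this case and build the non-uniform learner directly: given $(\epsilon,\delta)$, first draw $O(\epsilon^{-1}\log\delta^{-1})$ samples to find a data-dependent (hence non-uniformly sized) ball containing $1-\epsilon/2$ of the ID mass with probability $1-\delta/2$, then run the thresholded-histogram \emph{uniform} learner for bounded supports on that ball with parameters $(\epsilon/2,\delta/2)$; a union bound gives risk at most $\epsilon$ with probability $1-\delta$, which is all that Definition \ref{def:nonunif} requires --- almost-sure control of the OOD risk is not needed once you stop routing through Proposition \ref{prtol}.
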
 

Denote with $\mathcal{D}_{H^{ID}_{\gamma, C}}$ the domain space of all domains over $\reals^n$ which satisfy DSA and the $(\gamma, C)$-Hölder Continuous ID  Distribution Assumption and with $\mathcal{D}_{H^{OOD}_{\gamma, C}}$ the domain space of all domains which satisfy DSA and the $(\gamma, C)$-Hölder Continuous OOD Distribution Assumption. Then:

\begin{restatable}[Uniform OOD Detection under Hölder]{theorem}{unholder}
\label{unholder}
    Let $\gamma > 0$, $C \geq 0$, and $R > 0$ be a priori given constants and let $\mathcal{D}$ be a domain space satisfying the $(\gamma, C)$-Hölder Continuous ID  Distribution Assumption (or the $(\gamma, C)$-Hölder Continuous OOD Distribution Assumption), DSA, and BoundedID for $R$. Then OOD detection is uniformly learnable for $\mathcal{D}$.

    The $(\gamma, C)$-Hölder Continuous ID  Distribution Assumption (or the $(\gamma, C)$-Hölder Continuous OOD Distribution Assumption), however, does \textbf{not} always imply the uniform learnability of OOD detection. In particular, uniform learnability of OOD detection is impossible for $\mathcal{D}_{H^{ID}_{\gamma, C}}$ and $\mathcal{D}_{H^{OOD}_{\gamma, C}}$ in $\mathcal{P}(\reals^n)$.
\end{restatable}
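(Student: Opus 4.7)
The proof has two parts: a positive uniform learnability result under BoundedID, and a matching impossibility result showing that BoundedID cannot be dropped. For the positive part the plan is to present two structurally analogous algorithms, one per Hölder assumption, with explicit sample complexities in $\epsilon, \delta, R, \gamma, C$. For the negative part the plan is to construct an adversarial family of domains parametrized by an integer $k$, so that the effective ``number of modes'' of the ID distribution can be made arbitrarily large in the absence of BoundedID.

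For the positive part under OOD Hölder I would reuse essentially the distance-based learner behind Theorem \ref{unfar}, with a Hölder-tuned radius: given $N$ samples $x_1,\dots,x_N$, return $\hat h = \bigcup_i B(x_i,\eta)$ for a suitable $\eta > 0$. Since DSA forces $f_{out}(x)=0$ for every $x\in\supp(D_{in})$, Hölder continuity yields $f_{out}(y)\le C\eta^\gamma$ whenever $y$ lies within $\eta$ of $\supp(D_{in})$, so the OOD mass of $\hat h$ is at most $N c_n C \eta^{n+\gamma}$ with $c_n=\vol(B(0,1))$. Under BoundedID, $\supp(D_{in})$ is contained in a ball of diameter $R$, so it is covered by $M = O((R/\eta)^n)$ balls of radius $\eta/2$; a coupon-collector / union-bound analysis then shows that with $N = O((M/\epsilon)\log(M/\delta))$ samples, the union of $\eta$-balls around the samples captures all but $\epsilon/2$ of the ID mass with probability $1-\delta$. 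Balancing via $\eta^\gamma \sim \epsilon^2/(CR^n)$ makes both risks at most $\epsilon/2$. For the ID Hölder case, my plan is to use a kernel density estimator $\hat f$ with bandwidth tuned to $\gamma$, return $\hat h=\{x:\hat f(x)>\tau\}$ for $\tau$ slightly above $C\eta^\gamma$, and exploit that DSA forces $f_{in}\equiv 0$ on $\supp(D_{out})$, so $\{f_{in}>\tau\}$ lies at distance $>\eta$ from $\supp(D_{out})$ and captures no OOD mass; BoundedID restricts the KDE analysis to a bounded region where sup-norm convergence holds, and a level-set bound $D_{in}(\{f_{in}\le\tau'\})\le\tau'\cdot\vol(\supp(D_{in}))$ controls the ID risk.

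For the negative part I would give a direct adversarial construction, since $\mathcal{D}_{H^{ID}_{\gamma,C}}$ is not closed under mass-shifting and so Theorem \ref{freelunch} does not apply. Fix any candidate sample complexity $N$, choose $k$ large, and pick $2k$ centers $p_1,\dots,p_{2k}\in\reals^n$ with pairwise distances much larger than $1$. For each $I\subseteq\{1,\dots,2k\}$ with $|I|=k$, construct $D_I\in\mathcal{D}_{H^{ID}_{\gamma,C}}$ by placing at each $p_i$ with $i\in I$ a bump of mass $1/k$ and width $w \ge (1/(Ck))^{1/(1+\gamma)}$ chosen so that every individual bump, and hence (by wide separation of centers) the whole sum, is globally $(\gamma,C)$-Hölder; the OOD distribution places matching (unconstrained) mass at the remaining $k$ centers. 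Every $D_I$ satisfies DSA and ID Hölder. Given $N$ samples from $D_I^{in}$, the learner identifies at most $N$ of the $k$ active ID centers, so at least $2k-N$ centers are unseen. Under a uniform prior on $I$, a Bayesian calculation shows that for any unseen $p_j$ the conditional probability $\Pr[j\in I]$ equals $(k-|I_{\text{seen}}|)/(2k-|I_{\text{seen}}|)$, which is close to $1/2$ when $N\ll k$, so whether the learner includes or excludes $p_j$, its contribution to the expected risk is at least $\min(\alpha,1-\alpha)/(2k)$. Summing over the $\ge 2k-N$ unseen centers gives an expected risk of at least $\min(\alpha,1-\alpha)/2$, so by Yao's principle some fixed $I$ forces failure with constant probability, contradicting uniform learnability. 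The case of $\mathcal{D}_{H^{OOD}_{\gamma,C}}$ is symmetric: swap the roles so that the OOD bumps carry the Hölder constraint.

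The main technical obstacle is the bump construction in the negative part: one must pick a width that scales with $k$ so that each individual bump is $(\gamma,C)$-Hölder with mass exactly $1/k$, verify that the sum of $k$ such bumps remains globally $(\gamma,C)$-Hölder by exploiting the wide separation of the centers, and check that the total density integrates to $1$. A secondary challenge is reducing the averaging argument from arbitrary hypotheses to the discrete ``include-bump vs.\ exclude-bump'' baseline used in the analysis: since the bumps are well-separated and most of a bump's mass lies in a slightly enlarged neighborhood of its center, one must verify that a hypothesis that only partially covers a bump can improve its risk on that bump by at most $o(1/k)$, so that the $\Omega(1)$ lower bound survives the discretization.
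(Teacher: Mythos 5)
Your proposal is correct and follows essentially the same route as the paper: the positive direction uses the paper's distance-based covering learner for the OOD Hölder case and a thresholded density estimate for the ID Hölder case (the paper uses a histogram count per covering cube rather than a KDE, which is more elementary but equivalent in spirit), and your negative direction is a direct instantiation of the averaging argument that the paper packages as its Generalized No Free Lunch Theorem, applied to a family of widely separated Hölder bumps exactly as the paper applies it to the intervals $[10i,10i+3]$ with ramp densities. Two small points: in $\reals^n$ a $(\gamma,C)$-Hölder bump of radius $w$ carries mass $O(Cw^{n+\gamma})$, so the width needed for mass $1/k$ scales as $(1/(Ck))^{1/(n+\gamma)}$ rather than $(1/(Ck))^{1/(1+\gamma)}$ (harmless, since without BoundedID the centers can be pushed arbitrarily far apart); and bounding the OOD mass of $\hat h$ by the number of samples $N$ rather than by the number $M$ of covering cells costs a logarithmic factor in the balancing of $\eta$, which the paper avoids by summing over cells instead of samples.
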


\subsection{OOD Detection under the ConvexID and DSA}
\label{sec:convno}
Another assumption that appears natural particularly in light of the Maximal Zero OOD Risk Procedure is the Convex ID Support Assumption. Crucially, ConvexID implies that for any domain $D$ and any set of training samples $S \in \supp(D_{in})^*$ the convex hull of $S$ lies entirely in $\supp(D_{in})$. Indeed, as $S$ is contained in the convex set $\supp(D_{in})$, the smallest convex subset of $\reals^n$ containing $S$, that is the the convex hull of $S$, will be a subset of $\supp(D_{in})$.
Thus, under ConvexID and DSA the Maximal Zero OOD Risk Procedure can be approximated by computing the convex hull of the drawn ID  samples. We call this the \textbf{Convex Hull Procedure}

We first remark that under DSA and ConvexID the Convex Hull Procedure trivially uniformly learns OOD detection over $\reals$ \cite{shalev2014understanding}. Unfortunately, for domains over higher dimensional instance spaces the Convex Hull Procedure does not always result in good approximations of the support of the ID  distribution. Indeed, consider an ID distribution $D_{in}$ which has $1 - \epsilon$ of its mass uniformly distributed on the boundary of a $2$-dimensional unit circle and the rest $\epsilon$ of its mass uniformly distributed in the inside of that circle. Then, the convex hull of any finite set of samples from $D_{in}$ will have cover at most $\epsilon$ of the mass of $D_{in}$. 

Even more so, an argument in the spirit of the proof of Theorem \ref{dsa:no} shows the following, where $\mathcal{D}_c$ is the domain space of all domains over $\reals^n$ which satisfy both DSA and ConvexID:

\begin{restatable}{theorem}{nocon}
\label{nocon}
     The combination of ConvexID and DSA does \textbf{not} always imply the non-uniform learnability of OOD detection. In particular, non-uniform learnability of OOD detection is \textbf{impossible} for the domain space $\mathcal{D}_c$ over $\reals^n$ in $\mathcal{P}(\reals ^n)$ for a positive integer constant $n \geq 2$.
\end{restatable}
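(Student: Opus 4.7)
The plan is to adapt the strategy of Theorem \ref{dsa:no} to produce a family of domains in $\mathcal{D}_c$ that defeats any candidate non-uniform learner. Since both ConvexID and DSA are preserved under the canonical embedding $\reals^2 \hookrightarrow \reals^n$ (crossing a convex set with a fixed point stays convex, and disjointness of supports is preserved), it suffices to establish the impossibility for $n=2$; the restriction to $n \geq 2$ is essential because in $\reals^1$ convex sets are just intervals, and $\mathcal{D}_c$ is easily learnable via the convex hull of observed samples.

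The core construction should produce an uncountable family $\{D^t\}_{t \in T} \subseteq \mathcal{D}_c$ whose ID distributions induce essentially indistinguishable finite-sample behavior (so that the algorithm cannot identify $t$), yet whose OOD supports jointly span enough of the ambient space that no single hypothesis can handle every member. A natural template is to fix a "visible" convex piece $B \subset \reals^2$ (for instance, a line segment), to let each $K_t = \supp(D^t_{in})$ be the convex hull of $B$ together with a hidden extension depending on $t$, and to make $D^t_{in}$ place nearly all of its mass on $B$ (while still having full support equal to $K_t$, as required by ConvexID). The OOD distribution $D^t_{out}$ is then placed at a point $q_t \notin K_t$, with the collection $\{q_t\}_{t \in T}$ arranged so that for any hypothesis $h$ that extends materially beyond $B$ there is some $t \in T$ with $q_t \in h$, while for any hypothesis $h \subseteq B$ there is some $t$ whose hidden extension $K_t \setminus B$ carries non-negligible mass missed by $h$.

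The adversarial step then mirrors the spirit of Theorem \ref{dsa:no}: because the observable sample $S$ drawn from $D^t_{in}$ lies in $B$ with high probability and is therefore almost independent of $t$, the hypothesis $h = \mathcal{A}(S)$ produced by any algorithm $\mathcal{A}$ has an output distribution essentially the same across many $t$, and one can diagonalize to pick $t \in T$ for which $h$ yields risk bounded below by a universal constant, either through ID-side loss on $K_t \setminus B$ or through an OOD hit at $q_t$. The main obstacle — and the technical heart of the proof — is quantitatively calibrating this dichotomy so that it persists in the non-uniform setting, where $\mathcal{A}$ is permitted to choose an adaptive stopping rule and hence an arbitrarily large sample size per domain. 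Making the construction robust to all such rules requires exploiting the infinite VC dimension of convex subsets of $\reals^n$ for $n \geq 2$, which supplies enough geometric freedom in the choice of hidden extensions $K_t \setminus B$ and OOD points $q_t$ to diagonalize against any given algorithm while keeping each member of the family inside $\mathcal{D}_c$.
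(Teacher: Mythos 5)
Your high-level strategy---reduce to $n=2$, build an indistinguishable-from-samples family of convex-ID domains, and diagonalize against any adaptive stopping rule in the spirit of Theorem~\ref{dsa:no}---is the right one, and your observation that the restriction to $n \geq 2$ is essential (since in $\reals^1$ the convex hull of samples trivially learns an interval) matches the paper. However, your concrete construction has a structural defect that makes it fail against the Convex Hull Procedure itself, and it is missing the one geometric idea that the paper's argument pivots on.

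The problem is the ``visible convex core $B$ with nearly all the mass'' template. If $D^t_{in}$ concentrates nearly all of its mass on a fixed convex piece $B$, then the trivial learner that outputs (a small convex neighborhood of) $B$ already achieves arbitrarily small ID risk, and since $q_t \notin K_t \supseteq B$ for every $t$, it simultaneously has zero OOD risk. You are thus caught in a bind: either the hidden extension $K_t \setminus B$ carries negligible ID mass, in which case missing it costs nothing; or it carries non-negligible mass, in which case a non-uniform learner can keep sampling until it sees points from $K_t \setminus B$ and reconstructs the extension, after which it can output a hypothesis inside $K_t$ and again incur no OOD risk. Neither horn of the dilemma yields a constant lower bound. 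The ``infinite VC dimension of convex sets'' does not rescue this: that freedom only controls where the hidden extensions and OOD points can live, not the fundamental trade-off above.

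What the paper does instead is choose an ID distribution whose mass is concentrated on the \emph{boundary} of its convex support: it takes the closed unit disk $B^2$ and places $1-\lambda$ of the mass uniformly on $\partial B^2$ (and the small remainder uniformly in the interior, so that $\supp(D_{in}) = \cl(B^2)$ remains convex). This single choice is the crux. Because the boundary circle is not itself convex, the convex hull of \emph{any} finite sample drawn from this distribution is an inscribed polygon that misses essentially the entire boundary mass, so the conservative Convex Hull Procedure has ID risk close to $1-\lambda$ no matter how many samples it draws. A low-risk hypothesis is therefore \emph{forced} to cover nearly the whole disk. The perturbations $D^{\theta}_{in}$ then carve out a chord-bounded sliver of $D_{in}$-mass $\kappa_{\epsilon}$ at an adversarially chosen angle $\theta$ and relocate that tiny mass elsewhere; the OOD distribution is a point mass placed in the removed sliver. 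Since $\kappa_{\epsilon}$ can be made arbitrarily small for any fixed sample budget $M$ (obtained via monotone convergence exactly as in Proposition~\ref{lrule}), the learner cannot detect $\theta$, yet any hypothesis aggressive enough to achieve low ID risk on $D_{in}$ covers the sliver and hence eats the OOD point. This is the ``no matter what you do, you lose a constant'' dichotomy you were aiming for; the heavy-boundary distribution is exactly what keeps the conservative branch of the dichotomy from collapsing, and it is the ingredient absent from your construction.
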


\subsection{OOD Detection under DSA, ContID, and ConvexID}
\label{sec:convcont}

The impossibility result from the previous subsection relied on ID distributions whose boundaries contain most of their mass, Indeed, capturing parts of the boundary of $\supp(D_{in})$ in a hypothesis can be very challenging as it requires exactly learning the support of $D_{in}$ at least locally.

To alleviate this issue, we further require that the domain space $\mathcal{D}$ satisfies the Absolutely Continuous ID  Distribution Assumption (ContID). Indeed, as the boundary of a convex set
is always of Lebesque measure is $0$ \cite{lang1986}, ContID implies that for all $D \in \mathcal{D}$ the boundary of $\supp(D_{in})$ is of ID mass $0$.

\begin{restatable}{theorem}{conood}
\label{conood}
    OOD detection is non-uniformly learnable (via the Convex Hull Procedure) in $\mathcal{P}(\reals^n)$ for a domain space $\mathcal{D}$ over $\reals^n$ satisfying DSA, ConvexID, and ContID.
\end{restatable}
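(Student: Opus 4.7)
The plan is to apply Proposition \ref{prtol} to the Convex Hull Procedure $\mathcal{B}(S) = \mathrm{conv}(S)$, so it suffices to verify that (i) $R^{out}_D(\mathcal{B}(S)) = 0$ with probability $1$ for every $D \in \mathcal{D}$ and every sample size, and (ii) $R^{in}_D(\mathcal{B}(S))$ converges to $0$ in probability as $|S| \to \infty$. Condition (i) is immediate from the other two assumptions: every sample lies in $\supp(D_{in})$, which is convex by ConvexID, so $\mathrm{conv}(S) \subseteq \supp(D_{in})$, and this is disjoint from $\supp(D_{out})$ by DSA, giving zero $D_{out}$-mass inside $\mathcal{B}(S)$.

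To verify (ii), I would first use ContID to observe that $D_{in}$ has a Lebesgue density, so $\supp(D_{in})$ has positive Lebesgue measure; being convex, it then has nonempty interior, and since the boundary of a convex set has Lebesgue measure zero, $U := \inter(\supp(D_{in}))$ satisfies $D_{in}(U) = 1$. Fix $\epsilon > 0$. Because $U$ is open and convex, the distance-to-complement $d(\cdot, U^c)$ is concave on $U$, so the sets $K_k := \{x \in U : d(x, U^c) \geq 1/k,\ \|x\| \leq k\}$ form an increasing sequence of compact convex subsets with $\bigcup_k K_k = U$. By continuity of measure I may pick a compact convex $K \subseteq U$ with $D_{in}(K) \geq 1 - \epsilon$ and $\delta > 0$ so that $L := K + \overline{B(0, \delta/2)} \subseteq \supp(D_{in})$. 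I would then cover the compact set $\partial L$ by finitely many balls $B(p_1, \delta/4), \dots, B(p_N, \delta/4)$ with $p_i \in \partial L$; each ball is contained in $\supp(D_{in})$ and has positive $D_{in}$-measure $\beta_i > 0$ because $p_i \in \supp(D_{in})$. A union-bound argument (the probability that $B(p_i, \delta/4)$ receives no sample in $n$ draws is at most $(1-\beta_i)^n$) then shows that with probability tending to $1$ every $B(p_i, \delta/4)$ contains a sample $y_i \in S$.

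The main technical obstacle is to show that on this high-probability event $K \subseteq \mathrm{conv}(\{y_i\})$, since that yields $R^{in}_D(\mathcal{B}(S)) \leq 1 - D_{in}(K) \leq \epsilon$ and completes the proof of (ii). I would establish the containment by a support-function computation: for any unit vector $v$, $\sup_{z \in L}\langle z, v\rangle = \sup_{z \in K}\langle z, v\rangle + \delta/2$ and is attained at some $p \in \partial L$ lying within $\delta/4$ of some $p_i$, so $\langle y_i, v\rangle > \sup_{z \in K}\langle z, v\rangle \geq \langle x, v\rangle$ for every $x \in K$. Hence no hyperplane separates any $x \in K$ from $\{y_i\}$, and the separating hyperplane theorem gives $x \in \mathrm{conv}(\{y_i\}) \subseteq \mathrm{conv}(S)$. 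Proposition \ref{prtol} then delivers non-uniform learnability of OOD detection in $\mathcal{P}(\reals^n)$.
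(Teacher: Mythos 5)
Your proposal is correct, and it verifies exactly the two hypotheses of Proposition~\ref{prtol} that the paper verifies — the zero-OOD-risk condition by ConvexID + DSA, and convergence in probability of the ID risk. But the way you establish the second condition is genuinely different. The paper works with Tukey depth: it invokes the fact that half-spaces in $\reals^n$ have VC dimension $n+1$, applies a uniform-convergence bound (Lemma~7 of Chaudhuri--Dasgupta) to show that with high probability every half-space of $D_{in}$-mass exceeding a threshold contains a sample, concludes that the depth-$\lambda$ region $L_\lambda$ is contained in $\mathrm{conv}(S)$, and then lets $\lambda \to 0$ using absolute continuity (interior points have positive depth, the boundary is Lebesgue-null). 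You instead exhaust $U = \inter(\supp(D_{in}))$ by compact convex sets, Minkowski-thicken a chosen $K$ to $L$, cover $\partial L$ by finitely many $\delta/4$-balls of positive $D_{in}$-mass, and show via a support-function comparison that once each ball is hit by a sample, $K \subseteq \mathrm{conv}(S)$. Your argument is more elementary and self-contained — it avoids VC theory entirely — while the paper's Tukey-depth route comes with a quantitative payoff (the remark following the proof explains how the rate at which $\prob_{D_{in}}(L_\lambda) \to 1$ controls the sample complexity, which your covering argument does not expose as cleanly, since the number and mass of the covering balls depend on the distribution in an opaque way).

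One small imprecision worth fixing: you assert that each $B(p_i, \delta/4) \subseteq \supp(D_{in})$, but the requirement you imposed on $\delta$ (namely $L = K + \overline{B(0,\delta/2)} \subseteq \supp(D_{in})$) does not leave room for the extra $\delta/4$ radius around a boundary point of $L$. This claim is, however, never used: the positive-mass property $\beta_i > 0$ follows solely from $p_i \in \supp(D_{in})$ and the definition of support, so the proof goes through once you drop the parenthetical containment assertion (or strengthen the choice of $\delta$ to give a further $\delta/4$ of slack inside $U$).
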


On the other hand, the Convex Hull Procedure does not uniformly learn OOD detection even under DSA, ConvexID, and ContID as for any fixed $N \in \nats$ we can construct a sufficiently precise approximation of the ID distribution supported on the unit circle with a heavy boundary from the previous section. Furthermore, similarly to how the infinite classical VC dimension of the convex sets leads to the hardness of supervised learning for convex concepts \cite{goyal2009learning, khot2008hardness, klivans2008learning}, here the following holds for the domain space $\mathcal{D}'_c$ of all domains over $\reals^n$ satisfying DSA, ContID, and ConvexID:

\begin{restatable}{theorem}{conun}
    \label{conun}
    The combination of DSA, ContID, and ConvexID does \textbf{not} always imply the uniform learnability of OOD detection. In particular, OOD detection is \textbf{not} uniformly learnable for $\mathcal{D}'_c$ over $\reals^n$ in $\mathcal{P}(\reals^n)$ for $n \geq 2$.
\end{restatable}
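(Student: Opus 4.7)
The plan is to adapt the standard No Free Lunch lower bound used to show hardness of PAC-learning convex concepts in $\reals^n$ for $n\ge 2$ (whose classical VC dimension is infinite) to the OOD detection setting. Concretely, given any candidate algorithm $\mathcal{A}$ with a purported uniform sample complexity $N := N(\epsilon_*,\delta_*)$ for small fixed $\epsilon_*,\delta_*$, I will exhibit a finite family of domains $\{D_T\}\subset \mathcal{D}'_c$ on which the risk of $\mathcal{A}$ averaged over a uniform random choice of $T$ stays above an absolute constant, independent of $N$, and then extract a single $D_{T^*}$ on which $\mathcal{A}$ violates the uniform guarantee via a Markov-type averaging.

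Fix $n\ge 2$, pick $M\gg N$, and place $M$ points $p_1,\dots,p_M$ on the unit sphere $S^{n-1}\subset\reals^n$ as a near-uniform deterministic grid. Choose $r>0$ much smaller than the minimum pairwise separation of the $p_i$ and than the minimum distance from any $p_j$ to the convex hull of any balanced subset not containing it. For each $T\subset\{1,\dots,M\}$ with $|T|=M/2$ set $C_T:=\mathrm{conv}(\{p_i:i\in T\})$ and $V_i^T:=C_T\cap B(p_i,r)$, and define the ID density
\begin{equation*}
f_T(x) := (1-\epsilon_0)\sum_{i\in T}\frac{\mathds{1}_{V_i^T}(x)}{|T|\,\vol(V_i^T)} + \epsilon_0\,\frac{\mathds{1}_{C_T}(x)}{\vol(C_T)},
\end{equation*}
for a small fixed $\epsilon_0>0$, and let $D_{out}^T$ be the uniform distribution on $\bigcup_{j\notin T}B(p_j,r/2)$. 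By construction $D_{in}^T$ is absolutely continuous with convex, full-dimensional support $C_T$ (ContID and ConvexID), and $\supp(D_{out}^T)$ is disjoint from $C_T$ (DSA), so each $D_T=(D_{in}^T,D_{out}^T)\in\mathcal{D}'_c$.

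The central step is a swap/coupling argument in the spirit of the classical NFL for convex concepts. With high probability over $S\sim(D_{in}^T)^N$ all samples land in vertex regions, so the set of visited indices $J_{\mathrm{seen}}(S)\subset T$ has $|J_{\mathrm{seen}}|\le N$, leaving $\Omega(M)$ unseen indices in both $T$ and $T^c$. For a pair $(T,T')$ differing by a swap of an unseen $i\in T$ with an unseen $j\in T^c$, the near-uniform grid arrangement forces the vertex regions $V_k^T$ and $V_k^{T'}$ at every common vertex $k$ to coincide up to a $(1\pm o(1))$ volume factor (the supporting cone of $C_T$ at $p_k$ is essentially the tangent half-space of the sphere, independent of the particular $T$). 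Hence the likelihoods $(D_{in}^T)^N(S)$ and $(D_{in}^{T'})^N(S)$ agree on samples supported away from $B(p_i,r)\cup B(p_j,r)$, so $\mathcal{A}(S)$ outputs the same hypothesis $h$ under both. But $B(p_i,r/2)$ carries $\Omega(1/M)$ ID-mass under $D_T$ and $\Omega(1/M)$ OOD-mass under $D_{T'}$, so on at least one of the two, $h$ incurs $\Omega(1/M)$ risk on this region. Summing over the $\Omega(M)$ independent swap pairs and averaging over uniformly random $T$ yields $\expect_{T,S,\mathcal{A}}[R^\alpha_{D_T}(\mathcal{A}(S))]\ge c^*$ for an absolute constant $c^*>0$, and a Markov step then gives a deterministic $T^*$ with $\prob_{S,\mathcal{A}}[R^\alpha_{D_{T^*}}(\mathcal{A}(S))\ge c^*/2]\ge c^*/2$, contradicting uniform learnability for $\epsilon_*,\delta_*<c^*/2$.

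The hard part is the geometric bookkeeping behind the coupling step: choosing $p_1,\dots,p_M$, $r$, and $\epsilon_0$ so that, simultaneously over all balanced $T$ and all single swaps, the vertex-region volumes are uniform enough in $T$ for the likelihood coupling to go through, while $C_T$ stays full-dimensional (needed for absolute continuity of $D_{in}^T$ on $\reals^n$), the ``background'' $\epsilon_0$-term does not swamp the vertex-concentrated mass (so the $J_{\mathrm{seen}}(S)\subseteq T$ event drives the likelihood), and every OOD ball remains at positive distance from every $C_T$ (so DSA holds). A sufficiently dense deterministic grid on $S^{n-1}$ with $r$ much smaller than the mesh and $\epsilon_0$ taken small delivers all three via standard volume and angle estimates.
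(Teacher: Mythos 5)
Your overall strategy -- a No Free Lunch lower bound driven by the fact that points in convex position on a sphere are shattered by convex hulls of their subsets, with ID mass concentrated near the chosen extreme points and OOD mass on small balls around the unchosen ones -- is exactly the idea behind the paper's proof. The paper packages the averaging argument once and for all as a ``Generalized No Free Lunch Theorem'' (shattered families of disjoint non-null sets on which arbitrary $(1-\epsilon)$-mass distributions are realizable) and then only has to exhibit the family: balls $B(A_i,\epsilon_n)$ around the vertices of a regular $n$-gon, with the leftover $\epsilon$ of ID mass used to fill in a convex support. You re-derive the averaging from scratch via a swap/coupling argument, which is legitimate but forces you to carry the geometric bookkeeping by hand.

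That bookkeeping is where there is a genuine gap. Your coupling claim -- that $(D^T_{in})^N(S)$ and $(D^{T'}_{in})^N(S)$ agree whenever $S$ avoids $B(p_i,r)\cup B(p_j,r)$, hence $\mathcal{A}(S)$ outputs the \emph{same} hypothesis under both -- is false as stated, for two reasons. First, the vertex regions $V_k^T=C_T\cap B(p_k,r)$ are $T$-dependent: for vertices $k$ whose incident facets involve $p_i$ or $p_j$, the set $V_k^T$ genuinely changes under the swap, so the within-region uniform densities do not match. Second, the background term $\epsilon_0\,\mathds{1}_{C_T}/\vol(C_T)$ differs from $\epsilon_0\,\mathds{1}_{C_{T'}}/\vol(C_{T'})$ both in its normalizing constant and on the two slivers $C_T\setminus C_{T'}$ and $C_{T'}\setminus C_T$, and those slivers have diameter on the order of the grid mesh, not of $r$, so they are not contained in $B(p_i,r)\cup B(p_j,r)$. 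The clean fix is the one the paper implicitly uses: make the shattered family consist of \emph{full} balls $B(p_i,r)$ that are entirely contained in the ID support by taking $\supp(D^T_{in})=\mathrm{conv}\bigl(\bigcup_{i\in T}B(p_i,r)\bigr)$, put $(1-\epsilon_0)/|T|$ of the ID mass uniformly on each chosen ball, and dump the remaining $\epsilon_0$ on the rest of the hull. Then the conditional distribution of a sample given the ball it lands in is $T$-independent, the ``generate samples first, then draw the consistent domain'' argument goes through exactly (no approximate coupling needed), and DSA still holds because each unchosen $p_j$ is an extreme point at positive distance from the hull of the remaining balls, so $r$ can be chosen below that distance. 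Alternatively you can keep your construction and replace ``same hypothesis'' by a total-variation bound between the two sample distributions, but you must then quantify the TV contribution of the changed vertex regions and of the slivers, which your write-up does not do.
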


\section{Related Work}
\label{relwork}

\paragraph{OOD Detection Theory.} \cite{fang2022out} investigated the uniform learnability of OOD detection albeit in a slightly different setting. \cite{zhang2021understanding} discussed the impossibility of OOD detection under overlapping domains. Beyond this, \cite{morteza2022provable} introduced a class of provable learners of OOD detection for Gaussian mixtures and \cite{yang2021generalized} discussed the connection of OOD detection with problems such as Anomaly Detection, Novelty Detection and others.

\paragraph{OOD Detection in Practice.} In practical work, there are four broad classes of methods. Early \textbf{classification methods} used the maximum softmax probability as a proxy measure for ID  ness \cite{hendrycks2016baseline}. More recent classification methods have achieved improved performance via input perturbations and temperature-based scores \cite{liang2017enhancing}, energy-based scores \cite{liu2020energy, lin2021mood, NEURIPS2021_f3b7e5d3}, and by considering activation spaces \cite{sun2021react, dong2022neural}. \textbf{Density methods} model the density of the ID distribution and classify as OOD data from low density regions \cite{zong2018deep, morningstar2021density, choi2018waic, xiao2020likelihood, yang2023full}. However, training generative models can be computationally more expensive and performance is often worse than for classification methods. \textbf{Distance methods} are based on the assumption that OOD samples are geometrically separated from ID samples. Such methods utilize metrics such as the Mahalanobis distance \cite{lee2018simple, ren2021simple}, deep Nearest Neighbours \cite{sun2022out}, cosine similarity \cite{techapanurak2020hyperparameter, chen2020boundary, zaeemzadeh2021out}, kernel similarity \cite{van2020uncertainty}, distance metrics between points and centroids of classes \cite{huang2020feature, gomes2022igeood} and more. Lastly, \textbf{reconstruction-based methods} are based on the assumption that an the encoder-decoder framework would often yield different results for ID and OOD data. \cite{zhou2022rethinking, li2023rethinking, yang2022out}.

Additionally, a further class of algorithms relies on outlier exposure \cite{hendrycks2018deep,li2020background, yang2021semantically} which often allows for better performance. However, in this work we focus on the scenario in which no information about the OOD is provided during training.

\section{Discussion and Conclusions}

In conclusion, in line with supervised learning theory, our paper introduces the notion of uniform and non-uniform learnability of OOD detection, and shows that OOD detection is frequently non-uniformly learnable even though uniform learnability may not be possible. This may be a factor that explains the practical success of OOD detection even though it is impossible in theory. 

We also provide a principled algorithm for OOD detection with a provable guarantee of zero OOD risk, and show what it looks like under different assumptions. While some of the practical OOD detection algorithms look somewhat different from our algorithm, an avenue of future research is understanding how they relate under various assumptions on the ID data. 

In summary, our results have thus cleaned up and formalized the modeling assumptions surrounding the theory of OOD detection, and brought the theory a little closer to practice. How to bring it even closer under different definitions and modeling assumptions is another potential avenue for future work. 

\section{Discussion and Conclusions}

In conclusion, in line with supervised learning theory, our paper introduces the notion of uniform and non-uniform learnability of OOD detection, and shows that OOD detection is frequently non-uniformly learnable even though uniform learnability may not be possible. This may be a factor that explains the practical success of OOD detection even though it is impossible in theory. 

We also provide a principled algorithm for OOD detection with a provable guarantee of zero OOD risk, and show what it looks like under different assumptions. While most of the practical OOD detection algorithms look somewhat different from our algorithm, an avenue of future research is understanding how they relate under various assumptions on the ID data. 

In summary, our results have thus cleaned up and formalized the modeling assumptions surrounding the theory of OOD detection, and brought the theory a little closer to practice. How to bring it even closer under different definitions and modeling assumptions is another potential avenue for future work. 

\newpage

\bibliographystyle{unsrt}
\bibliography{paper.bib}

\newpage

\section*{Appendix A}

\subsection*{A.1 Detailed Related Work}

\paragraph{Near OOD Detection} As discussed in this work, OOD detection becomes more difficult in the absence of a separation parameter $\tau$ or a proxy for it, such as Hölder continuity. Indeed, Near-OOD detection comes with the additional challenges of similar and possibly even overlapping domains. In practice, \cite{ren2021simple} proposes relative Mahalanobis distance to measure the uncertainty of a classifier at a data point. Their method achieves good performance on a number benchmarks including: CIFAR-100 vs CIFAR-10 \cite{krizhevsky2009cifar}, CLINC OOD intent detection \cite{larson2019evaluation}, Genomics OOD \cite{ren2019likelihood}). \cite{fort2021exploring} utilizes large-scale pre-trained transformers and few-shot outlier exposure to further improve the AUROC on  CIFAR-100 vs CIFAR-10. Finally, \cite{li2023rethinking} finds that reconstruction-based methods such as Mmsked image modeling \cite{li2020background} could  significantly improve the performance of Near-OOD detection learners.

\paragraph{Anomaly Detection}

The goal of anomaly detection (AD) is to detect any samples which deviate from a defined during testing notion of normality \cite{chandola2009anomaly}. AD can be broadly classified into two classes: (i) sensory AD: AD under covariate datashift and (ii) semantic AD: AD under semantic datashift \cite{ruff2021unifying, ahmed2016survey}. The applications of sensory AD include: forgery recognition \cite{patel2016secure, nixon2008spoof, polatkan2009detection}, defense from adversarial attacks \cite{akhtar2018threat}, image forensics \cite{jiang2021deeperforensics, yang2020survey, dolhansky2019deepfake}, and others. On the other hand, the applications of semantic AD include crime surveillance \cite{idrees2018enhancing, diehl2002real}, image crawlers \cite{li2010optimol}, and others.

\paragraph{Novelty Detection}
The goal of novelty detection (ND) is the detection of test samples which do not fall into any training category \cite{pimentel2014review}. In general, there exist two types of Novelty detection based on the number of classification classes: (i) one-class novelty detection: when only one class label occurs in the training set (ii) and multi-class novelty detection: : when more than one class labels occurs in the training set. The scenario considered in this work is closely related to one-class novelty detection.  Applications of ND include: incremental learning \cite{al2015incremental, pathak2017curiosity}, video surveillance \cite{idrees2018enhancing, diehl2002real}, planetary exploration \cite{kerner2019novelty} and more.

\paragraph{Open Set Recognition} Open set recognition aims to detect test samples that do not fall into the categories observed at training time and accurately classify the test samples that fall into the observed categories \cite{liu2018open, dhamija2018reducing, scheirer2012toward}. Additionally, \cite{fang2021learning} introduces the problem of PAC-learnability of OOD detection.

\subsection*{A.2 Future Directions}

In this section, we outline several potential avenues of future work of interest. As already demonstrated in this and previous works, establishing a set-up for OOD detection which allows for good learnability results under mild and realistic conditions can be very challenging, despite the practical advances in OOD detection achieved in recent years. While, in this work we expand on previous theoretical work and provide positive results under some specific geometrical restrictions we still generally fall short of explaining this practical success for a diverse set of applications and benchmarks. Thus, we believe that further relaxations of the concept of learnability of OOD detection are needed. Here we propose the following potential directions:

As of yet, the theoretical performance of the learning is analyzed in terms of its worst-case risk. That is, for any learning algorithm we assume that the environment can adversarially pick the exact domain in the domain space to which the algorithm will be applied. On the other hand, when the performance of the various OOD detectors is evaluated in practice there is no such adversarial agent. We hypothesize that further positive results may be obtained if the learnability of OOD detection is considered in terms of average-case risks.

Another possible relaxation, is to consider a scenario in which at test time a whole batch of unlabeled data points are provided instead of just a single point. Alternatively, we can considered a scenario in which the detector is being updated online during testing utilized the test data. Note that both of these methods are used in practice to achieve better performance. In theory, it is easy to see how this could improve performance. For example, if the OOD probability is not a priori given we can use the batch to reliably estimate it. We believe that a more in-depth analysis of a similar set-up could yield more positive results for certain classes of domain spaces. 

A modification of the definition of risk is also possible. Firstly, we remark that it makes little sense to try and predict the pseudolabel $y$ of a data point $x$ which lies the overlap of $\supp(D_{in})$ and $\supp(D_{out})$, as $y$ is essentially a Bernoulli distributed random variable even is the domain $D$ is exactly known to us. Additionally, a (deterministic) classifier deployed subsequently will generally classify two identical points in the same way regardless of the distribution they were generated from. Thus, a more realistic approach might be to accept (i.e. predict that a point was generated from the ID distribution) with probability equal to the OOD probability $\alpha$ and that conditioned on a data point $x$ being accepted, the distribution of $x$ is close to $D_{in}$.

Finally, assuming that our ultimate goal is classification, the uncertainty in classifying an OOD data point might differ greatly based on its exact location in $\supp(D_{out})$. For example, while it might be hard to classify an OOD point very close to a class boundary or very far from the mass of $D_{in}$, it might also be relatively easy to generalize for an OOD point which lies in high density overlap $D_{in}$ and $D_{out}$ far from class boundaries. This suggests, that different more precise loss functions might  be of interest in practice and in theory, alleviating some of the difficulties with Near-OOD detection. This approach is closely related to classification methods for OOD detection.

\section*{Appendix B}
\label{sec:discdef}
\subsection*{B.1 Some Necessary Standard Definitions}

In this subsection, we introduce classical several notions we use throughout this work.

Let $(\mathcal{X}, T)$ be a topological space and let $\mathcal{B}(T)$ denote the Borel $\sigma$-algebra of $T$ - that is, the smallest $\sigma$-algebra of $T$ containing all open sets of $T$. Then the \textbf{support}
$\supp(\mu)$ of a measure $\mu$ on $(\mathcal{X}, \mathcal{B}(T))$ is the set of all elements $x \in \mathcal{X}$ for which every neighbourhood of $x$ has a positive $\mu$-mass. That is:

\begin{equation*}
    \supp(\mu) = \{x \in \mathcal{X} \mid \forall N_x \in T: x \in N_x \Rightarrow \mu(N_x) > 0\}
\end{equation*}

Following standard set theory notation we denote with $\mathcal{P}(\mathcal{X})$ the \textbf{power set} of $\mathcal{X}$ or the set of all subsets of $\mathcal{X}$.

\begin{definition}[Hölder Continuity]
    A real-valued function $f: \reals^n \rightarrow \reals$ is \textbf{$(\alpha, C)$-Hölder continuous} for constants $C \geq 0$ and $\alpha > 0$ if:
    \begin{equation*}
        |f(x)-f(y)|\leq C |x-y|^{\alpha}
    \end{equation*}
    holds for all $x$ and $y$ in $\reals^n$. Hölder continuity with $\alpha = 1 $ is usually referred to as \textbf{Lipschitz contunuity}.
\end{definition}

\subsection*{B.2 Some Initial Observations}
\label{sec:initobs}

We extend the discussion from the end Section \ref{sec:prelims} by formalizing it and presenting proofs for the claims made there.

\paragraph{Does a priori knowledge of the OOD-probability help?}

Generally, there are three modes of OOD-detection learnability based on the a priori given information about OOD-probability $\alpha$:

\begin{enumerate}[(i)]
    \item $\alpha \in (0,1)$ is fixed and a priori known
    \item $\alpha \in (0,1)$ is fixed but not a priori known
    \item $\alpha$ is not a priori fixed and can take any value in $[0,1]$
\end{enumerate}

While OOD-detection might appear to be easiest in Mode (i) and hardest in (iii), it turns out that in the realizable case, all three modes are equivalent. Indeed, we have the following straight-forward result:

\begin{restatable}[Equivalence of Learnability in Different Modes]{theorem}{thmodes}
\label{thmodes}
    Let OOD detection be realizable for the domain space $\mathcal{D}$ in the hypothesis space $\mathcal{H}$. Then OOD-detection is uniformly/non-uniformly learnable for $\mathcal{D}$ in $\mathcal{H}$ in Mode(i) if and only if  OOD-detection is uniformly/non-uniformly learnable for $\mathcal{D}$ in $\mathcal{H}$ in Mode(iii). 
\end{restatable}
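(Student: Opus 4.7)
The statement to prove breaks naturally into two implications. The direction Mode(iii)$\Rightarrow$Mode(i) is immediate: a learner that outputs a hypothesis satisfying the risk bound for \emph{every} $\alpha\in[0,1]$ obviously satisfies it for one particular a priori fixed $\alpha$. The content of the theorem lies in the other direction, Mode(i)$\Rightarrow$Mode(iii), and my plan is to show that in the realizable setting, a learner tuned to a single $\alpha_0$ (say $\alpha_0=1/2$) already produces a hypothesis whose ID and OOD risks are \emph{both} small, and therefore whose $R^{\alpha}_D$ is controlled uniformly in $\alpha$.

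Concretely, let $\mathcal{A}_0$ be a Mode(i) learner for the fixed OOD probability $\alpha_0=1/2$, and fix target parameters $\epsilon,\delta\in(0,1/2)$. I run $\mathcal{A}_0$ with accuracy parameter $\epsilon/2$ and confidence $\delta$. Realizability of the domain space gives $\inf_{h'\in\mathcal{H}}R^{\alpha_0}_D(h')=0$ for every $D\in\mathcal{D}$, so with probability at least $1-\delta$ the returned hypothesis $h$ satisfies
\begin{equation*}
    \tfrac{1}{2}R^{in}_D(h)+\tfrac{1}{2}R^{out}_D(h)=R^{\alpha_0}_D(h)\leq \tfrac{\epsilon}{2}.
\end{equation*}
Because both summands on the left are non-negative, this simultaneously forces $R^{in}_D(h)\leq \epsilon$ and $R^{out}_D(h)\leq \epsilon$. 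For an arbitrary $\alpha\in[0,1]$ we then get
\begin{equation*}
    R^{\alpha}_D(h)=(1-\alpha)R^{in}_D(h)+\alpha R^{out}_D(h)\leq (1-\alpha)\epsilon+\alpha\epsilon=\epsilon,
\end{equation*}
and since realizability with respect to $\alpha_0$ implies realizability with respect to every $\alpha$ (the same witness $h^*$ has zero ID and OOD risk), the $\inf$ term on the right-hand side of the learnability definition is $0$. Thus $h$ witnesses Mode(iii) learnability at parameters $(\epsilon,\delta)$.

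For the sample complexity bookkeeping: in the uniform case, the number of samples used is $N_{\mathrm{Mode(i)}}(\epsilon/2,\delta)$, which remains a function of $(\epsilon,\delta)$ alone and hence gives a uniform Mode(iii) bound; in the non-uniform case the reduction is even simpler, as we only need $\mathcal{A}_0$ to terminate with probability one on inputs $(\epsilon/2,\delta)$. The main subtlety — and the only place where the proof could fail — is the use of realizability in passing from a bound on a convex combination to a bound on each component: in the agnostic setting, $\mathcal{A}_0$ only guarantees $R^{\alpha_0}_D(h)\leq \inf_{h'}R^{\alpha_0}_D(h')+\epsilon/2$, and without the inf being zero one cannot conclude that either $R^{in}_D(h)$ or $R^{out}_D(h)$ is small, let alone both. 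This is precisely why the theorem is stated under realizability, and the preceding discussion in the paper already flags that the equivalence breaks down agnostically.
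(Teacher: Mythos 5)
Your proof is correct and follows essentially the same route as the paper: the Mode(iii)$\Rightarrow$Mode(i) direction is immediate, and for the converse you invoke the Mode(i) learner at a shrunken accuracy parameter so that realizability lets you bound $R^{in}_D$ and $R^{out}_D$ simultaneously, hence $R^{\alpha}_D$ for every $\alpha$. The one small difference is that you fix $\alpha_0=\tfrac12$ (giving the clean factor $\epsilon/2$), whereas the paper works with a general a priori $\alpha\in(0,1)$ and scales the accuracy by $\min\{\alpha,1-\alpha\}$ (the paper's displayed bound $\epsilon/\min\{\alpha,1-\alpha\}$ is evidently a typo for $\epsilon\cdot\min\{\alpha,1-\alpha\}$, which your argument in effect corrects); if the Mode(i) hypothesis hands you a learner for some other fixed $\alpha$, you should use that scaling rather than assuming you may pick $\alpha_0=\tfrac12$.
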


\begin{proof}
As already metioned, learnability of OOD detection for a domain space $\mathcal{D}$ in a hypothesis space $\mathcal{H}$ in Mode (iii) trivially implies the learnability of OOD detection for $\mathcal{D}$ in $\mathcal{H}$ in Mode(i) as an algorithm can simply disregard its knowledge of the OOD probability.

Conversely, assume that the algorithm $\mathcal{A}$ learns OOD detection for a domains space $\mathcal{D}$ and a a priori given OOD probability $\alpha \in (0,1)$ in a hypothesis space $\mathcal{H}$. We need to construct an algorithm $\mathcal{A}'$ which for any $0 < \epsilon < \frac{1}{2}$ and $0 < \delta < \frac{1}{2}$ and any domain $D \in \mathcal{D}$ outputs a hypothesis $h'$ which with probability of at least $1 - \delta$ has risk $R^{\alpha'}_D(h') \leq \epsilon$ for all $\alpha' \in [0,1]$.

But as $R^{\alpha'}_D(h') = (1 - \alpha') R^{in}_D(h') + \alpha' R^{out}_D(h')$ it suffices to find a hypothesis $h'$ which with probability of at least $\delta$ satisfies:

\begin{equation*}
    R^{in}_D(h') \leq \epsilon \text{ and } R^{out}_D(h') \leq \epsilon
\end{equation*}

But this can be achieved by using $\mathcal{A}$ to find a hypothesis $h'$ which with probability of at least $1 - \delta$ satisfies:

\begin{equation*}
    R^{\alpha}_D(h') \leq \frac{\epsilon}{ \min\{\alpha, 1 - \alpha\}}
\end{equation*}

But this implies that $R^{in}_D(h') \leq \epsilon$ and $R^{out}_D(h') \leq \epsilon$ and, therefore, we have constructed a learner of OOD detection for $\mathcal{D}$ in $\mathcal{H}$ in Mode(iii). Clearly, if the sample complexity of $\mathcal{A}$ for any values $\epsilon$ and $\delta$ is uniformly bounded for all domains in $\mathcal{D}$ then the sample complexity of $\mathcal{A'}$ will also be uniformly bounded for $\mathcal{D}$. Thus, uniform learnability of OOD detection in Mode (i) also implies uniform learnability of OOD detection in Mode (iii).

\end{proof}

A corollary of the above and Theorem 1 in \cite{fang2022out} is that under a realizibility assumption the learnability of OOD detection (Mode (i)) does not depend on the exact value of the OOD probability $\alpha$. Hence, in most of our results we do not discuss the exact value of $\alpha$.

We remark that Theorem \ref{thmodes} requires the realizability assumption. Indeed, without it for different values of the OOD probability $\alpha$ risk close to the infimum might be achieved by disjoint sets of hypothesis. As the algorithm receives no information about the value of $\alpha$ during training, it is forced to essentially guess $\alpha$. Thus,  as discussed in \cite{fang2022out}, learning OOD detection in Mode (iii) requires that for every domain $D$ in a domain space $\mathcal{D}$ there is a subspace of the hypothesis space $\mathcal{H}$ which simultaneously optimizes the ID and the OOD risk. While this holds if $\mathcal{D}$ is realizable in $H$, it is a very strong assumption in an agnostic case practically making agnostic OOD detection in Mode (iii) impossible except under some unrealistic additional assumptions.

\paragraph{Overlapping Domains} An example of the phenomenon described above is the challanges in OOD detection for domains with an overlap between their ID and OOD distributions \cite{zhang2021understanding}. In particular, \cite{fang2022out} provided a definition for domains with overlapping distribution and showed the impossibility of OOD detection in Mode (iii) for domain spaces containing such domains in a sufficiently large hypothesis classes:

\begin{definition}[Overlap Between ID and OOD distributions \cite{fang2022out}] 
\label{def:overlap}
We say a domain $D$ has overlap between ID and OOD distributions, if there is a $\sigma$-finite measure $\mu$ such that $D_{in}$ and $D_{out}$ are absolutely continuous with respect to $\mu$, and
$ \mu(A_{\rm overlap})>0,~\textit{ where}$

$A_{\rm overlap}= \{\mathbf{x} \in \mathcal{X}:f_{in}(\mathbf{x})>0~ \textit{and}~ f_{out}(\mathbf{x})>0\}$. Here $f_{in}$ and $f_{out}$ are the representers of $D_{in}$ and $D_{out}$ in Radon–Nikodym Theorem \cite{cohn2013measure}, 
\begin{equation*}
  D_{X_{in}} = \int f_{in} {\rm d}\mu,~~~ D_{out} = \int f_{out} {\rm d}\mu.
\end{equation*} 
    
\end{definition}

As discussed above, the core of the argument is that due to the overlap the ID and OOD risks will be minimized for disjoint set of hypothesis and hence the output of a successful learner has to depend to the OOD probability about which the learner has no information during training time when in Mode (iii). In contrast, this argument does not imply for learnability in Mode (i) and OOD detection is indeed possible for some domain spaces containing domain with overlapping distributions.

Nonetheless, in this work we focus on realizable OOD detection for which we still have the following:

\begin{restatable}{proposition}{notreal}
\label{notreal}
 If a domain space $\mathcal{D}$ contains a domain with overlapping supports, OOD-detection for $\mathcal{D}$ is not realizable in any hypothesis space. 
\end{restatable}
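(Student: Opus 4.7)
The plan is to show that any domain $D \in \mathcal{D}$ with overlapping supports (in the sense of Definition \ref{def:overlap}) forces every hypothesis $h \subseteq \mathcal{X}$ to incur a strictly positive risk, with a lower bound that does not depend on $h$. Since $\inf_{h \in \mathcal{H}} R^{\alpha}_D(h)$ is bounded below by the infimum taken over $\mathcal{P}(\mathcal{X})$, a uniform positive lower bound over $\mathcal{P}(\mathcal{X})$ immediately rules out realizability for every choice of nonempty hypothesis space $\mathcal{H}$.

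To carry this out, I would invoke the Radon–Nikodym representation from Definition \ref{def:overlap} to write $D_{in} = \int f_{in}\, d\mu$ and $D_{out} = \int f_{out}\, d\mu$, and set $A = A_{\mathrm{overlap}} = \{x : f_{in}(x) > 0 \text{ and } f_{out}(x) > 0\}$, which has $\mu(A) > 0$ by hypothesis. For any $h \subseteq \mathcal{X}$ the decomposition $R^{\alpha}_D(h) = (1-\alpha)\int_{h^c} f_{in}\, d\mu + \alpha \int_{h} f_{out}\, d\mu$ holds, and restricting both integrals to $A$ together with $A = (A\cap h) \sqcup (A \cap h^c)$ gives
\begin{equation*}
R^{\alpha}_D(h) \;\geq\; \min(\alpha, 1-\alpha)\Bigl( \int_{A\cap h^c} f_{in}\, d\mu + \int_{A\cap h} f_{out}\, d\mu \Bigr) \;\geq\; \min(\alpha, 1-\alpha)\int_A \min(f_{in}, f_{out})\, d\mu,
\end{equation*}
where the second inequality uses $f_{in} \geq \min(f_{in}, f_{out})$ on $A \cap h^c$ and $f_{out} \geq \min(f_{in}, f_{out})$ on $A \cap h$ before recombining into a single integral over $A$. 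This bound is independent of $h$.

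The remaining step is to show that $\int_A \min(f_{in}, f_{out})\, d\mu > 0$. This is the only place where one must be slightly careful: although $\min(f_{in}, f_{out}) > 0$ pointwise on $A$ and $\mu(A) > 0$, this needs a short measure-theoretic argument since $\mu$ is only $\sigma$-finite. I would handle it by decomposing $A$ into the level sets $A_n = \{x \in A : \min(f_{in}(x), f_{out}(x)) \geq 1/n\}$; since $A = \bigcup_{n\geq 1} A_n$ has positive $\mu$-measure, some $A_n$ must as well, yielding $\int_A \min(f_{in}, f_{out})\, d\mu \geq \mu(A_n)/n > 0$. Combining the two displays produces a hypothesis-independent positive lower bound on $R^{\alpha}_D(h)$, so $\inf_{h \in \mathcal{H}} R^{\alpha}_D(h) > 0$ for every $\mathcal{H}$, and realizability fails. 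I expect the main (minor) obstacle to be pinning down this positivity-of-integral step cleanly; the rest is bookkeeping with the risk decomposition already established in Section \ref{sec:prelims}.
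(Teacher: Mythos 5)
Your proposal is correct and follows essentially the same route as the paper: bound $R^{\alpha}_D(h)$ from below, uniformly over $h \in \mathcal{P}(\mathcal{X})$, by $\int_{A_{\mathrm{overlap}}} \min\{(1-\alpha)f_{in}, \alpha f_{out}\}\, d\mu$ (your $\min(\alpha,1-\alpha)\int_A \min(f_{in},f_{out})\,d\mu$ is an equivalent, slightly looser form) and then argue this integral is positive. The only difference is that you spell out the positivity step via the level sets $A_n$, whereas the paper defers it to ``standard measure theory arguments'' and a citation to Fang et al.; your level-set argument is the standard one and is correct.
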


\begin{proof}
Ihe proposition is equivalent to showing that OOD detection for $\mathcal{D}$ is not realizable in $\mathcal{P}(\mathcal{X})$ where $\mathcal{X}$ is the instance space. Therefore, it suffices to show that:

$$\inf_{h \in \mathcal{P}(\mathcal{X})} R^{\alpha}_D(h) > 0$$

But using the notation from Definition \ref{def:overlap}, we have that:

$$\inf_{h \in \mathcal{P}(\mathcal{X})} \geq \int_{A_{{\rm overlap}}} \min \{ (1 - \alpha)f_{in}, \alpha f_{out}\} {\rm d}\mu$$

But the inequality $\int_{A_{{\rm overlap}}} \min \{ (1 - \alpha)f_{in}, \alpha f_{out}\} {\rm d}\mu > 0$ can be shown using standard measure theory arguments. For more details see the proof of Theorem 3 from \cite{fang2022out}.
\end{proof}

\paragraph{Does a smaller hypothesis space help?} Finally, we remark that, similarly to supervised learning, under a realizibility assumption the learnability of OOD-detection depends only on the domain space and not on the hypothesis space. In particular, the following straightforward proposition holds:

\begin{proposition}
    \label{irrhyp}
    Let $\mathcal{D}$ be a domain space over $\mathcal{X}$ realizable in a hypothesis space $\mathcal{H}$. Then, if OOD detection is uniformly/non-uniformly learnable for $\mathcal{D}$ in $\mathcal{H}$, OOD detection is uniformly/non-uniformly learnable for $\mathcal{D}$ in $\mathcal{P}(\mathcal{X})$ 
\end{proposition}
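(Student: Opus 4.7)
The plan is to observe that under realizability the approximation target $\inf_{h' \in \mathcal{H}} R^{\alpha}_D(h')$ collapses to $0$, which makes the bound absolute rather than relative, and an absolute risk bound is preserved when we enlarge the hypothesis space. So I essentially just need to re-use the learner provided for $\mathcal{H}$ and verify that its output still certifies the definition with $\mathcal{H}$ replaced by $\mathcal{P}(\mathcal{X})$.

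In more detail, first I would note the trivial inclusion $\mathcal{H} \subseteq \mathcal{P}(\mathcal{X})$, which immediately yields $\inf_{h' \in \mathcal{P}(\mathcal{X})} R^{\alpha}_D(h') \leq \inf_{h' \in \mathcal{H}} R^{\alpha}_D(h')$ for every domain $D \in \mathcal{D}$. Realizability of $\mathcal{D}$ in $\mathcal{H}$ forces the right-hand side to be $0$ for every $D \in \mathcal{D}$, and since risks are non-negative, the left-hand side is also $0$. Hence $\mathcal{D}$ is realizable in $\mathcal{P}(\mathcal{X})$ as well, and in particular both infima coincide.

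Next, given a (uniform or non-uniform) learner $\mathcal{A}$ for $\mathcal{D}$ in $\mathcal{H}$, consider the algorithm $\mathcal{A}'$ that simply runs $\mathcal{A}$ and outputs whatever hypothesis $h$ it produces, now regarded as an element of $\mathcal{P}(\mathcal{X})$. By assumption, for every $\epsilon, \delta \in (0, 1/2)$ and every $D \in \mathcal{D}$, with probability at least $1 - \delta$ the output satisfies
\begin{equation*}
    R^{\alpha}_D(h) \leq \inf_{h' \in \mathcal{H}} R^{\alpha}_D(h') + \epsilon = \epsilon = \inf_{h' \in \mathcal{P}(\mathcal{X})} R^{\alpha}_D(h') + \epsilon.
\end{equation*}
Thus $\mathcal{A}'$ meets the corresponding definition (uniform \ref{def:unif} or non-uniform \ref{def:nonunif}) with $\mathcal{H}$ replaced by $\mathcal{P}(\mathcal{X})$. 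In the uniform case the sample complexity $N(\epsilon, \delta)$ carries over verbatim, and in the non-uniform case the termination-with-probability-one property of $\mathcal{A}$ is inherited.

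There is essentially no obstacle here: the entire content is that realizability converts the agnostic-style ``$\inf + \epsilon$'' bound into an absolute ``$\epsilon$'' bound, and an absolute risk bound on $h$ is a property of $h$ itself rather than of the hypothesis class it is read out of. The only subtlety worth spelling out is the one step showing $\inf_{h' \in \mathcal{P}(\mathcal{X})} R^{\alpha}_D(h') = 0$, which is the reason we can invoke $\mathcal{H} \subseteq \mathcal{P}(\mathcal{X})$ without changing the target; everything else is an immediate unpacking of Definitions \ref{def:unif} and \ref{def:nonunif}.
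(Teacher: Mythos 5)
Your proposal is correct and matches the paper's argument, which is likewise just to reuse the $\mathcal{H}$-learner and output its hypothesis as an element of $\mathcal{P}(\mathcal{X})$; you spell out the one point the paper leaves implicit, namely that realizability makes both infima equal to $0$ so the ``$\inf+\epsilon$'' guarantee transfers unchanged.
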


\begin{proof}[Sketch of proof]. Given a learner $\mathcal{A}$ for OOD detection for $\mathcal{D}$ in $\mathcal{H}$ a new learner $\mathcal{A}'$ for OOD detection for $\mathcal{D}$ in $\mathcal{P}(\mathcal{X})$ can be constructed by first guessing the hypothesis space $\mathcal{H}$ and then applying $\mathcal{A}$.
    
\end{proof}

\subsection*{B.3 OOD-detection with non-trivial label spaces}
\label{sec:nontriviallabels}

In this subsection, we discuss the connection between OOD-detection with a non-trivial label space and a combination of OOD-detection with a trivial label space (one-class novelty detection) and standard supervised learnability. Indeed, a natural hypothesis is that detecting OOD for data with a non-trivial label space can be decomposed to first detecting ID-ness and then applying a standard supervised classifier. It turns out that only of this hypothesized equivalence holds and that is only under the Disjoint Supports Assumption.

Nevertheless, we need to modify our definition of OOD detection to allow for arbitrary label spaces. Denote with $\mathcal{Y}$ the set of labels observed during training and with $\mathcal{Y}'$ the set $\mathcal{Y} \cup \{ \perp \}$ where $\perp \notin \mathcal{Y}$ is a pseudolabel denoting that a data point is generated by $EX( D_{out})$. Note that in this case the two components $D_{in}$ and $D_{out}$ of a domain $D$ will be joint distributions over $\mathcal{X} \times \mathcal{Y}'$.

Additionally, for a loss function $l: (\mathcal{Y'}, \mathcal{Y'}) \rightarrow \reals_{\geq 0}$, the definition of the risk of a hypothesis $h$ becomes: 

\begin{equation*}
   R^{\alpha}_D(h) = \expect_{(x, y) \sim (1-\alpha)D_{in} + \alpha D_{out}}[l(h(x), y)]
\end{equation*}

For the sake of simplicity, we will again consider risk taken with respect to the 0-1 loss. However, it is not hard to adapt the result below for any loss function.

During training the samples are generated according by $EX(D_{in})$ and the algorithm observes their labels. In contrast to that, during testing the labels are not observed. Excluding these clarifications, the definitions of uniform and non-uniform learnability remain the same.  

Let $\mathcal{D}$ be a domain space over $\mathcal{X}$ with labels in $\mathcal{Y}'$. Denote with $\mathcal{D}^\mathcal{X} = (D_{in} \mid \mathcal{X} , D_{out} \mid \mathcal{X})$ the domain space formed by the marginal distributions on $\mathcal{X}$ of the domains in $\mathcal{D}$ and with $\mathcal{D}_{in}$ the set of all labelled ID-distributions. 

We say that a multi-label domain $D$ satisfies the Disjoint Supports Assumption if its marginal $D^\mathcal{X}$ satisfies it. That is if:

\begin{equation*}
    \supp(D_{in} \mid \mathcal{X}) \cap \supp(D_{out} \mid \mathcal{X}) = \emptyset
\end{equation*}

Finally, we can now state the result. Note that for the sake of brevity we only focus on the uniform learnability, however the result readily extends to non-uniform learnability.

\begin{theorem}
    Let $\mathcal{D}$ be a domain space over an instance space $\mathcal{X}$ with labels in $\mathcal{Y}'$ satisfying DSA. Then OOD detection is uniformly learnable for $\mathcal{D}$ in $\mathcal{Y}'^{\mathcal{X}}$ if OOD detection is uniformly learnable for $\mathcal{D}_x$ is $\mathcal{P}(\mathcal{X})$ and the domain space $\mathcal{D}_{in}$ is uniformly learnable in $\mathcal{Y}^{\mathcal{X}}$
\end{theorem}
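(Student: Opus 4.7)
The plan is to prove the theorem by running the two assumed learners independently on the same training sample and gluing their outputs together, relying on DSA to guarantee that the two sub-tasks decouple cleanly. Given $\epsilon, \delta > 0$ and a sample $S = \{(x_i, y_i)\}_{i=1}^{N}$ drawn from $EX(D_{in})$, the algorithm first discards the labels and feeds $\{x_i\}_{i=1}^N$ to the unlabelled OOD detection learner $\mathcal{A}_1$ for $\mathcal{D}^{\mathcal{X}}$, obtaining a binary classifier $h_1 : \mathcal{X} \to \{0, 1\}$; simultaneously it feeds the full labelled sample $S$ to the supervised learner $\mathcal{A}_2$ for $\mathcal{D}_{in}$, obtaining $h_2 : \mathcal{X} \to \mathcal{Y}$. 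The final hypothesis is defined piecewise by $h(x) = h_2(x)$ when $h_1(x) = 1$ and $h(x) = \perp$ when $h_1(x) = 0$.

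The next step is a risk decomposition. A point $(x, y) \sim D_{in}$ (so $y \in \mathcal{Y}$) is misclassified by $h$ iff either $h_1(x) = 0$, in which case $h(x) = \perp \ne y$, or $h_1(x) = 1$ and $h_2(x) \ne y$; a point $(x, \perp) \sim D_{out}$ is misclassified iff $h_1(x) = 1$. Combining these with $R_D^{\alpha}(h) = (1-\alpha) R_D^{in}(h) + \alpha R_D^{out}(h)$ and bounding the joint event by its marginal yields
\begin{equation*}
R_D^{\alpha}(h) \;\le\; \underbrace{(1-\alpha)\,\prob_{x \sim D_{in}|_{\mathcal{X}}}[h_1(x) = 0] + \alpha\, \prob_{x \sim D_{out}|_{\mathcal{X}}}[h_1(x) = 1]}_{=\, R_{D^{\mathcal{X}}}^{\alpha}(h_1)} \;+\; (1-\alpha)\, R^{\mathrm{sup}}_{D_{in}}(h_2).
\end{equation*}
Running $\mathcal{A}_1$ with tolerances $(\epsilon/2, \delta/2)$ and $\mathcal{A}_2$ with tolerances $(\epsilon/2, \delta/2)$, and taking the sample budget to be $N = \max\{N_1(\epsilon/2, \delta/2),\, N_2(\epsilon/2, \delta/2)\}$, a union bound over the two failure events gives $R_D^{\alpha}(h) \le \epsilon$ with probability at least $1 - \delta$. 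Since each $N_i$ is uniformly bounded over $\mathcal{D}$, so is $N$, which establishes uniform learnability.

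The one place where DSA does real work is in certifying that the marginal OOD detection problem is still realizable in $\mathcal{P}(\mathcal{X})$: under DSA, the indicator of $\supp(D_{in}|_{\mathcal{X}})$ achieves zero risk, so Theorem \ref{thmodes} applies and we can invoke $\mathcal{A}_1$ without knowing $\alpha$. More conceptually, DSA is exactly what ensures that no single $x$ can simultaneously carry ID and OOD mass, so the two error sources -- ``$\mathcal{A}_1$ mislabels $x$'' and ``$\mathcal{A}_2$ mislabels an ID point $x$'' -- live on disjoint portions of $\mathcal{X}$ and contribute additively to the overall risk. I expect this cleaving of the labelled problem into two decoupled sub-tasks to be the main conceptual content of the proof, with the sample-complexity accounting and union bound being routine once the decomposition is in hand.
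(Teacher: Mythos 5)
Your algorithm and risk decomposition are exactly the paper's: run the unlabelled detector $\mathcal{A}_1$ and the supervised learner $\mathcal{A}_2$ on the same budget, output $h_2(x)$ where $h_1$ says ID and $\perp$ otherwise, and bound the error of $h$ by the error of $h_1$ plus the ID-weighted error of $h_2$. The gap is in the final accounting. Uniform learnability of $\mathcal{D}_{in}$ in $\mathcal{Y}^{\mathcal{X}}$ is an agnostic guarantee: $\mathcal{A}_2$ only promises $R_{D_{in}}(h_2) \leq \inf_{h_2'} R_{D_{in}}(h_2') + \epsilon/2$, and the Bayes risk $\inf_{h_2'} R_{D_{in}}(h_2')$ can be strictly positive when the conditional label distribution given $x$ is noisy. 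Your union bound therefore yields $R^{\alpha}_D(h) \leq \epsilon + (1-\alpha)\inf_{h_2'} R_{D_{in}}(h_2')$, not $R^{\alpha}_D(h) \leq \epsilon$; the latter claim is false in general. Since the theorem's target is the excess-risk bound $R^{\alpha}_D(h) \leq \inf_{h'} R^{\alpha}_D(h') + \epsilon$, what is missing is the comparison $\inf_{h' \in \mathcal{Y}'^{\mathcal{X}}} R^{\alpha}_D(h') \geq (1-\alpha)\inf_{h_2' \in \mathcal{Y}^{\mathcal{X}}} R_{D_{in}}(h_2')$.

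The paper supplies exactly this step: under DSA one may assume without loss of generality that any competitor $h'$ outputs $\perp$ precisely on $\supp(D_{out})$ (this never increases its risk because the two supports are disjoint), after which $R^{\alpha}_D(h') = (1-\alpha)R_{D_{in}}(h_2')$ for the induced $\mathcal{Y}$-valued classifier $h_2'$ obtained by replacing $\perp$ with an arbitrary label. This is a second, and arguably the more substantive, place where DSA does real work: without it the optimal joint hypothesis need not decompose into an optimal detector composed with an optimal classifier, as the paper's subsequent counterexample with $D_1 = D_2 = D_{out} = U(0,1)$ illustrates. With that comparison lemma added, your argument closes and coincides with the paper's proof.
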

   
\begin{proof}
    Assume that the algorithm $\mathcal{A}_1$ uniformly learns OOD detection for $\mathcal{D}_x$ in $\mathcal{P}(X)$ . Denote with $N_1: (0, 1/2)^2 \rightarrow \nats$ bounds the sample complexity of $A_1$. That is, for fixed $\epsilon$ and $\delta$ in $(0, 1/2)$ and domain $D^{\mathcal{X}} \in \mathcal{D}^{\mathcal{X}}$, $\mathcal{A}_1$ samples $N_1(\epsilon, \delta)$ and returns a hypothesis $h \in \mathcal{P}(\mathcal{X})$ which with probability of at least $\delta$ satisfies:

    \begin{equation*}
        R^{\alpha}_{D^{\mathcal{X}}}(h) \leq \epsilon
    \end{equation*}

    Note that as $\mathcal{D}$ satisfies DSA, we have $\inf_{h' \in \mathcal{P}(X)} R^{\alpha}_{D^{\mathcal{X}}}(h') = 0$ for all domains $D^{\mathcal{X}} \in \mathcal{D}^{\mathcal{X}}$
    
    Similarly, we assume that $\mathcal{A}_2$ is a uniform (supervised) learner for $\mathcal{D}_{in}$ in $\mathcal{Y}^{\mathcal{X}}$ with sample complexity bounded by $N_2 : (0, 1/2)^2 \rightarrow \nats$. We will now show how to use $\mathcal{A}_1$ and $\mathcal{A}_2$ to obtain a uniform learner for $\mathcal{D}$

    Let $\epsilon$ and $\delta$ be two constants in $(0,1/2)$ and let $D$ be any domain in $\mathcal{D}$. We run $\mathcal{A}_1$ with $N_1(\epsilon/2, \delta/2)$ samples and $\mathcal{A}_2$ with $N_2(\epsilon/2, \delta/2)$ samples to obtain two hypothesis $h_1: \mathcal{X} \rightarrow \{0,1\}$ and $h_2: \mathcal{X} \rightarrow \mathcal{Y}$, respectively.

    We now define a hypothesis $h: \mathcal{X} \in \mathcal\{Y\}'$ by setting:

    \begin{equation*}
        h(x) = 
        \begin{cases}
            h_2(x) \text{ if }   h_1(x) = 1 \\
            \perp \text{ if }  h_1(x) = 0
        \end{cases}
    \end{equation*}

    As the number of samples used to generate $h$ is bounded by $N_1(\epsilon/2, \delta/2)$ + $N_2(\epsilon/2, \delta/2)$ , to complete the proof it will suffice to show that for all $D \in \mathcal{D}$ with probability of at least $1 - \delta$ the following holds for all $h': \mathcal{X} \in \mathcal\{Y\}'$:

    \begin{equation*}
        R^{\alpha}_D(h) \leq R^{\alpha}_D(h') + \epsilon  
    \end{equation*}

    We can without loss of generality assume that $h'^{-1}(\perp) = \supp(D_{out})$. Indeed, we can redefine $h'(x)$ to be equal to $\perp$ on $\supp(D_{out})$ and to any arbitrary element $a$ of $\mathcal{Y}$ on $\supp(D_{out})^c$ can only decrease the risk of a hypothesis. As $D$ satisfies DSA this can only decrease the risk of $h$. 
    
    In addition, we define a new hypothesis $h'_2: \mathcal{X} \in \mathcal{Y}$ by setting:

    \begin{equation*}
        h'_2(x) = 
        \begin{cases}
            a \text{ if }  h'(x) = \perp \\
            h'(x) \text{ otherwise }
        \end{cases}
    \end{equation*}

    where again $a$ is an arbitrary fixed element of $\mathcal{Y}$. A direct union bound now implies that for all $h'$ with probability of at least $1 - \delta$ we simultaneously have that :

    \begin{equation*}
        R^{\alpha}_{D^\mathcal{X}}(h_1) \leq \epsilon \text{ and } R_{D_{in}}(h_2) \leq R^{\alpha}_{D_x}(h'_2) + \epsilon
    \end{equation*}

    But now it is easy to see that:

    \begin{align*}
        R^{\alpha}_{D}(h') &= (1 - \alpha) R_{D_{in}}(h'_2) \\
        &\geq (1 - \alpha) (R_{D_{in}}(h_2) - \epsilon) + \alpha (R^{\alpha}_{D^\mathcal{X}}(h_1) - \epsilon) \\
        &\geq (1 - \alpha) R_{D_{in}}(h) + \alpha R_{D_{out}}(h) + \alpha R^{\alpha}_{D^{\mathcal{X}}_{in}}(h_1) - \epsilon \\
        &\geq R^{\alpha}_{D}(h) - \epsilon 
    \end{align*}

    This concludes the proof.

\end{proof}

We end this section with a discussion about the limitations of the above result. First, note that the DSA assumption was crucial here. Indeed, without it the operation we used to obtain $h$ from $h_1$ and $h_2$ will not necessary produce a hypothesis which risk is close to $\inf_{h'} R^{\alpha}_{D}(h')$ even for $h_1$ and $h_2$ with small risk. Below we denote this operation with $\phi$. That is for hypotheses $h_1: \mathcal{X} \rightarrow \{0,1\}$ and $h_2: \mathcal{X} \rightarrow \mathcal{Y}$  we set $\phi(h_1, h_2)$ to be the result of the operation from the above proof.

For any $a \in \mathcal{Y}$ and any domain $D \in \mathcal{X}$ denote with $D_a$ the distribution on $\mathcal{X}$ of a the training samples labelled with $a$, that is: $D_a = (D_{in} \mid y = a)$

Assuming that $\mathcal{X} = \reals^n$ and that $D_{out}$ and all $D_a$ are absolutely continuous with respect to the Lebesque measure on $\reals^n$ with density functions $\mu_{out}$ and $\mu_{a}$, respectively, the hypothesis $h^*_2$ defined by:

\begin{equation*}
    h^*_2(x) = \argmax_{a \in \mathcal{Y}} \mu_a(x) \prob_{(x, y) \sim D_{in}} [y = a]
\end{equation*}

achieves the smallest possible risk for $D_{in}$ of any hypothesis in $\mathcal{Y}^{\reals^n}$. Similarly, the hypothesis $h^*_1$ defined by:

\begin{equation*}
        h^*_1(x) = \mathbbm{1}\{(1 - \alpha) \mu_{in}(x) \geq \alpha \mu_{out}(x)\}
\end{equation*}

achieves the smallest possible risk for the domain space $D^\mathcal{X}$ - the unlabelled variation of $\mathcal{D}$. Combining $h^*_1$ and $h^*_2$ we obtain a hypothesis $h = \phi(h^*_1, h^*_2)$ which at point $x$ first compares $\mu_{in}(x)$ and $\mu_{out}(x)$ and returns $\perp$ only if $(1 - \alpha) \mu_{in}(x) \geq \alpha \mu_{out}(x)$, however the hypothesis that achieves the optimal risk is:

\begin{equation*}
    h^*(x) = \argmax_{a \in \mathcal{Y}'} \mu_a(x) \prob_{(x, y) \sim D^{\alpha}} [y = a]
\end{equation*}

That is we return $\perp$ in all cases when $\alpha \mu_{out}(x)$ is higher than $\mu_a(x) \prob_{(x, y) \sim D_{in}} [y = a]$  for all $a \in \mathcal{Y}$. 

We can now easily construct and examples in which $h^*$ and $\phi(h^*_1, h^*_2)$ differ non-trivially. For example we can take $\mathcal{Y} = \{1 ,2\}$, the two labels being equally likely,  $D_{1} = D_{2} = D_{out} = U(0, 1)$, and $\alpha = 0.4$. Then $h^*$ achieves risk $0.6$ by  classifying with $\perp$ everything however $\phi(h^*_1, h^*_2)$ will classify all points with a label in $\{1,2\}$ and will have risk $0.7$.

Another natural question is whether the theorem holds in the opposite direction. In particular: \textit{Does the learnability of OOD detection for $\mathcal{D}$ imply the learnability of OOD detection for $\mathcal{D}^{\mathcal{X}}$ and the supervised learnability for $\mathcal{D}_{in}$ under some conditions?}. 

First note that, under DSA supervised learnability for $\mathcal{D}_{in}$ follow trivially from the learnability of OOD detection for $\mathcal{D}$. However, without DSA the implication does not hold as the hypotheses minimizing the risk for a domain $D$ contain no information about $D_{in}$ on $\supp{D_{out}}$.

On the other hand, we believe that under any reasonable assumptions OOD detection for $\mathcal{D}^{\mathcal{X}}$ does not follow from the learnability of OOD detection for $\mathcal{D}$. Indeed, $\mathcal{D}$ might be structured in such a way that the labels of the training samples reveal information about $\supp(D_{out})$ which would be inaccessible if we are working in $\mathcal{D}^{\mathcal{X}}$. Construction examples of this phenomenon under any of the assumptions introduced in this work is not hard.

\section*{Appendix C}
\label{sec:oodvc}

In this section, we will take a closer look into the connection between the learnability of OOD-detection and the VC dimension of the domains space. 

\subsection*{C.1 The No Free Lunch Theorem for OOD Detection}

To start with, we present a proof of the No Free Lunch Theorem stated in Section \ref{sec:nfl}.

\freelunch*

\begin{proof}

We will show that no uniform learner achieves expected risk smaller than $ \min\{\alpha, \frac{1}{2} (1 - \alpha)\}$ on all domains in $\mathcal{D}$. To this end, we assume the contrary. In particular, assume that the learning rule $\mathcal{A}$ achieves expected risk $r$ smaller than $\min\{\alpha, \frac{1 - \alpha}{2}\}$ for all domains in $\mathcal{D}$ with sample complexity of at most $N$ samples. 

Let $T$ be a subset of $\mathcal{X}$ of cardinality $3N$ shattered by $\mathcal{D}$. Thus, for every $A \subset T$ with $2N$ elements there exists a domain $D \in \mathcal{D}$  such that $\supp(D_{in}) \cap T = A$ and $\supp(D_{out}) \cap T = T/A$. As $\mathcal{D}$ is closed-under-mass-shifting the domains $D^A = (D^A_{in}, D^A_{out})$ with the same ID and OOD supports as $D$ and such that for all $s_1 \in A$ and $s_2 \in T \setminus A$ we have: 

$$\prob_{D^{A}_{in}}[s_1] = (1 - \epsilon)/2N \text{ and } \prob_{D^{A}_{out}}[s_2] = (1 - \epsilon)/N$$

where $\epsilon \leq 0$ is a small amount of "left-over" probability mass which we use to make the supports of $D^A_{in}$ and $D^A_{out}$ equal to  $\supp(D_{in})$ and $\supp(D_{out})$ respectively, if necessary.

In this way we can construct a subspace $D'$ of the a domain space $\mathcal{D}$ given by: 

$$\mathcal{D}' = \{D^A \mid A \subset T \text{, } |A| = 2N\}$$

We then pick a domain $D'$ uniformly at random from $\mathcal{D}'$ and consider the expected risk $L$ of the hypothesis returned by $\mathcal{A}$ when run on $D'$: 

$$L = \expect_{D' \sim \mathcal{D}'}[ \expect_{S \sim D'^{N}_{in}} [R_{D'}(\mathcal{A}(S))]]$$

As the learning rule $\mathcal{A}$ achieves expected risk less than $r$ for all $D \in \mathcal{D}$ we have that $L \leq r$. Additionally, denote with $B$ be the event that the drawn samples $S$ are all elements of $T$. For every $\delta > 0$ we can fix the value of  $\epsilon$ so that $\prob_{S \sim D'^{N}_{in}}[B] \geq 1 - \delta $ for all $D' \in \mathcal{D'}$. This means that the expected risk $L_B$ of the hypothesis $\mathcal{A}(S)$ conditioned on the event $B$ is at least $(1 - \delta) L$:

$$L_B = \expect_{D' \sim \mathcal{D}'}[ \expect_{S \sim D'^{N}_{in}} [R_{D'}(\mathcal{A}(S))\mid B] \geq (1 - \delta) L_B \geq (1 - \delta) r$$

Another way of generating the domain $D'$ and the set of training samples $S$ according to the same joint distribution is to first generate $S$ by uniformly sampling $N$ points from $T$ and then pick $D'$ uniformly at random from all the domains in $\mathcal{D}'$ consistent with $S$. These are all the domains $D \in \mathcal{D}'$ for which $S \in \supp({D_{in}})^N$. We denote the set of all domains consistent with $S$ with $D_S = \{D \in \mathcal{D} \mid S \in \supp({D_{in}})^N\}$. Thus, we obtain:

$$ L_B = \expect_{S \sim T^N}[ \expect_{D' \sim \mathcal{D}'_S} [R_{D'}(\mathcal{A}(S))] ]$$

We now bound $\expect_{D' \sim \mathcal{D}'_S} [R_{D'}(\mathcal{A}(S)]$ for a fixed $S \in T^N$. We have that:

\begin{align*}
\expect_{D' \sim \mathcal{D}'_S} [R_{D'}(\mathcal{A}(S))] &\geq \inf_{h \in \mathcal{P}(\mathcal{X})} \expect_{D' \sim \mathcal{D}'_S} [R_{D'}(h)] \\
&= \inf_{h \in \mathcal{P}(\mathcal{X})} \expect_{D' \sim \mathcal{D}'_S} [\sum_{s \in T/S} \mathbbm{1}_{\{h(s) \text{ is wrong}\}} \prob_{(1-\alpha)D'_{in} + \alpha D'_{out}}[s]] \\
&= \inf_{h \in \mathcal{P}(\mathcal{X})} \sum_{s \in T/S}  \mathbbm{1}_{\{h(s) = 0\}} \expect_{D' \sim \mathcal{D}'_S}[(1 - \alpha) \prob_{D'_{in}}[s]] + \mathbbm{1}_{\{h(s) = 1\}} \expect_{D' \sim \mathcal{D}'_S}[\alpha \prob_{D'_{out}}[s]]\\
&= \sum_{s \in T/S} \min\{ \frac{(1 - \alpha)(1 - \epsilon)}{2N} \prob_{D' \sim \mathcal{D}'_S}[ s \in \supp(D'_{in})] , \frac{\alpha(1 - \epsilon)}{N}] \prob_{D' \sim \mathcal{D}'_S}[ s \in \supp(D_{out}) \}\\
&= (3N - |S|) \min\{\frac{(1 - \alpha)(1 - \epsilon)}{2N} \frac{2N - |S|}{3N - |S|}, \frac{\alpha(1 - \epsilon)}{N} \frac{N}{3N - |S|}\}\\
&\geq \min \{ \frac{(1 - \alpha)(1 - \epsilon)}{2}, \alpha(1 - \epsilon)\}
\end{align*}

As the above holds for all $S$, we have that $L_B \geq \min \{ \frac{(1 - \alpha)(1 - \epsilon)}{2}, \alpha(1 - \epsilon)\}$, which implies: 

$$L \geq (1 - \delta)\min \{ \frac{(1 - \alpha)(1 - \epsilon)}{2}, \alpha(1 - \epsilon)\}$$

Thus, $r \geq \max_{D \in \mathcal{D}'} R_D(\mathcal{A})\geq L \geq (1 - \delta)\min \{ \frac{(1 - \alpha)(1 - \epsilon)}{2}, \alpha(1 - \epsilon)\}$. Letting both $\epsilon$ and $\delta$ tend to $0$ we obtain that $r \geq \min\{\alpha, \frac{1 - \alpha}{2}\}$ which is a contradiction with our assumptions

Therefore, the minimal expected risk a learning rule$\mathcal{A}$ can information-theoretically achieve for all domains in $\mathcal{D}$ is $\min \{\frac{1 - \alpha}{2}, \alpha \}$. In particular, OOD-detection is not uniformly learnable for $\mathcal{D}$ in $\mathcal{P}(\reals^n)$. Lastly, we remark that the argument presented above applies for both deterministic and randomized learning rules $\mathcal{A}$.
\end{proof}

As briefly mentioned, the closed-under-mass-shifting property on domain spaces required by the No Free Lunch Theorem for OOD Detection turns out to be too restrictive for all our purposes. In particular, we would like to be apply the No Free Lunch Theorem for domain spaces satisfying ContID (in Section \ref{sec:convcont}) and even ID-Hölder or OOD-Hölder (in Section \ref{sec:holder}). Clearly, such domains spaces are not closed-under-mass-shifting and the argument above is not possible as it requires the construction of domains whose mass in concentrated on a finite set of points contradicting the assumed absolute continuity of all domains. 

That being said, there is a natural adaptation of the No Free Lunch Theorem for such domain spaces. Indeed, instead of shattered set of points we can consider shattered families of disjoint non-null sets. Additionally, we note the proof presented above does not require the full freedom to move the probability mass of the ID and OOD distribution within their respective supports as required by the closed-under-mass-shifting property of the domain space but rather the ability the distribute a proportion of $1 - \epsilon$ of the ID and OOD probability mass on the elements (now non-null sets) of $T$. Summarizing, the analysis above we obtain the following generalized version of the No Free Lunch Theorem for OOD-Detection:

\begin{restatable}[Generalized No Free Lunch Theorem For OOD Detection]{theorem}{genfreelunch}
    \label{genfreelunch}
    Let the domain space $\mathcal{D}$ over instance space $\mathcal{X}$ be such that for all $N \in \nats$ and $\epsilon \in \reals_{>0}$ there exists a family $T$ of disjoint subsets of $\mathcal{X}$ such that:
    
    \begin{itemize}
        \item Every pair of a distribution of $(1 - \epsilon)$ of the ID probability mass on a subfamily $A \subseteq T$ and a distribution of $(1 - \epsilon)$ of the OOD probability mass on a subfamily of $T$ disjoint from $A$ is achieved by a domain in $\mathcal{D}$.
    \end{itemize}
    
    Then uniform OOD Detection for $\mathcal{D}$ in $\mathcal{P}(\mathcal{X})$ is impossible.
\end{restatable}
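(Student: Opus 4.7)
The plan is to lift the proof of Theorem \ref{freelunch} from a shattered finite set of points to a shattered family $T$ of pairwise disjoint non-null sets, with the extra parameter $\epsilon$ absorbing the small amount of probability mass that may escape $T$. I would assume for contradiction that some learner $\mathcal{A}$ with sample complexity at most $N$ achieves expected risk strictly smaller than $\min\{\alpha, (1-\alpha)/2\}$ on every domain in $\mathcal{D}$, and then construct a finite subspace $\mathcal{D}' \subseteq \mathcal{D}$ on which Bayesian averaging forces $\mathcal{A}$'s expected risk above this threshold.

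Fixing $\delta > 0$, I would invoke the hypothesis with parameter $N$ and $\epsilon > 0$ small enough that $(1-\epsilon)^N \geq 1 - \delta$, obtaining a family $T = \{t_1, \dots, t_{3N}\}$ of $3N$ pairwise disjoint subsets. For each subfamily $A \subset T$ with $|A| = 2N$, the hypothesis supplies a domain $D^A \in \mathcal{D}$ placing mass $(1-\epsilon)/(2N)$ of $D_{in}^A$ on each $t \in A$ and mass $(1-\epsilon)/N$ of $D_{out}^A$ on each $t \in T \setminus A$. Crucially, the hypothesis gives me freedom to choose how mass is distributed within each set; I would fix the within-set conditional distribution to a single canonical choice (e.g., uniform on $t$ with respect to some reference measure) so that it does not depend on $A$. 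Set $\mathcal{D}' = \{D^A : |A| = 2N\}$.

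Then, drawing $D'$ uniformly from $\mathcal{D}'$, I would lower-bound $L = \expect_{D'}\expect_S[R^{\alpha}_{D'}(\mathcal{A}(S))]$, which by assumption is at most $\min\{\alpha, (1-\alpha)/2\}$. Conditioning on the event $B$ that every sample lies in $\bigcup T$ (which has probability $\geq 1 - \delta$), I would swap the order of sampling: first draw the identity in $T$ of the set containing each sample (marginally uniform by the canonical choice), then draw $D'$ uniformly from the domains $D^A$ consistent with the set of hit indices. For each $t \in T$ not hit by any sample, $D'$ places $t$ in $A$ with posterior probability $(2N - |S|)/(3N - |S|)$ and in $T \setminus A$ otherwise, so any hypothesis incurs expected loss on $t$ of at least the minimum of the two options. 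Summing over the $3N - |S|$ unseen sets and using $|S| \leq N$ reproduces the inequality $L_B \geq \min\{(1-\alpha)(1-\epsilon)/2, \alpha(1-\epsilon)\}$. Letting $\epsilon, \delta \to 0$ contradicts the original assumption.

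The main obstacle I anticipate is ensuring that an individual training sample leaks no more information about $D'$ than the identity of the set $t \in T$ containing it; otherwise, the posterior over $\mathcal{D}'_S$ could be non-uniform and the lower bound could degrade. The canonical-within-set construction above addresses this by making the conditional law of a sample given its containing set independent of $A$, which reduces the remaining analysis to the point-mass setting of Theorem \ref{freelunch} almost verbatim.
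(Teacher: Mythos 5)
Your proposal is correct and follows exactly the route the paper intends: the paper's own proof is literally the one-line remark that the argument is ``analogous to the proof of Theorem~\ref{freelunch},'' replacing shattered points by the shattered family $T$ of disjoint sets and using the $(1-\epsilon)$ mass allowance in place of closed-under-mass-shifting. Your additional care in fixing a canonical within-set conditional distribution, so that a sample reveals only the identity of the set containing it and the posterior over consistent domains remains uniform, is precisely the detail needed to make the analogy rigorous.
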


\begin{proof} 
    Analogous to the proof of Theorem \ref{freelunch}.
\end{proof}

\subsection*{C.2 Small VC Dimension Does Not Imply Learnability}

As we already discussed in the main body of the paper, the other direction of the Fundamental Theorem of Statistical Learning does not hold in the context of learnability of OOD Detection. Indeed, the domain subspace we used in the proof of Theorem \ref{dsa:no} to show the impossibility of non-uniform detection under DSA has VC dimension (as per Definition \ref{def:vc}) of $1$.

The complexity of the set of supports of the ID distributions, however, is not in any way reflected in the VC dimensionality of this domain space as its VC dimension is equal to 1 only because of all OOD distribution have point supports. Thus, a further question arises, namely: \textit{Does the finite or small VC dimensionalities of the ID supports and of the OOD supports imply the learnability of OOD Detection?}. We now discuss that even such more restrictive requirements do not imply learnability of OOD Detection.

First, observe that even for the same domain subspace used in the proof of Theorem \ref{dsa:no} the VC dimension of set of ID supports $S_{in} = \{\supp(D_{in}) \mid (D_{in}, D_{out}) \in \mathcal{D}\}$ is equal to $2$ while the VC dimension of the set of all OOD supports $S_{in} = \{\supp(D_{out}) \mid (D_{in}, D_{out}) \in \mathcal{D}\}$ is $1$, as mentioned above. Furthermore, the uniform learnability of OOD-detection is still not guaranteed even under the assumption that $\vc(S_{in}) = \vc(S_{out}) = 1$. We formalize this in the following proposition

\begin{proposition}
\label{novc1}
The uniform learnability of OOD Detection can be impossible even for domain spaces $\mathcal{D}$ which satisfy DSA and for which $\vc(S_{in}) = \vc(S_{out}) = 1$, that is the set of ID-supports $S_{in}$ and the set of OOD-supports $S_{out}$ are both of VC dimension $1$. 
\end{proposition}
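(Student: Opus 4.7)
The plan is to exhibit an explicit domain space $\mathcal{D}$ over $\mathcal{X} = \nats$ (with the discrete topology) that satisfies DSA and $\vc(S_{in}) = \vc(S_{out}) = 1$, and then to rule out uniform learnability by a No Free Lunch-style averaging argument tailored to the fact that $\mathcal{D}$ has only low VC dimension in each coordinate but still admits a rich family of mutually indistinguishable sub-domains.

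Explicitly, I would take
\begin{equation*}
\mathcal{D} = \{(D_{in}, \delta_m) : m \in \nats,\ D_{in} \text{ is a probability measure on } \nats \text{ with } \supp(D_{in}) = \nats \setminus \{m\}\}.
\end{equation*}
Then $S_{in}$ is the class of co-singletons and $S_{out}$ is the class of singletons in $\nats$. DSA is immediate. Each class shatters a one-point set but not any two-point set: a co-singleton cannot cut a pair down to $\emptyset$, and a singleton cannot cut it to the full pair. Hence $\vc(S_{in}) = \vc(S_{out}) = 1$.

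The main work is the lower bound. I would fix an arbitrary purported uniform learner $\mathcal{A}$ using at most $n$ samples and, for each $N \gg n$, build the sub-family $\{D^{N, m}\}_{m=1}^{N} \subseteq \mathcal{D}$ where $D_{in}^{N, m}$ puts mass $(1-2^{-N})/(N-1)$ uniformly on $\{1, \ldots, N\} \setminus \{m\}$ and a vanishing geometric tail $2^{-N-k}$ on each $N+k$ for $k \geq 1$ in order to ensure $\supp(D_{in}^{N, m}) = \nats \setminus \{m\}$ exactly. Drawing $m \sim U(\{1, \ldots, N\})$ and $s = (s_1, \ldots, s_n) \sim (D_{in}^{N, m})^n$, the symmetry of the construction (with the tail being negligible) ensures that conditional on samples $s$ lying in $\{1, \ldots, N\}$, the posterior of $m$ is uniform on $U_s := \{1, \ldots, N\} \setminus \{s_i\}$. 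Writing $\kappa := |\mathcal{A}(s) \cap U_s|/|U_s|$, a direct calculation gives
\begin{equation*}
\expect_m[R^{\alpha}_{D^{N, m}}(\mathcal{A}(s)) \mid s] = (1-\alpha)(1-\kappa) + \alpha\kappa + O(n/N),
\end{equation*}
a linear function of $\kappa \in [0, 1]$ whose infimum is $\min(\alpha, 1-\alpha) > 0$, attained at the endpoints. Averaging over $s$ and invoking a pigeonhole over $m$ then yields some $m^\ast$ with $\expect_s[R^{\alpha}_{D^{N, m^\ast}}(\mathcal{A}(s))] \geq \min(\alpha, 1-\alpha) - O(n/N)$. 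Since $\min(\alpha, 1-\alpha) > 0$ and $N$ is free to grow, this lower bound is independent of $n$, and a standard expectation-to-probability conversion (applying the identity $\expect[X] \leq \epsilon + \prob[X > \epsilon]$ for $X \in [0, 1]$) rules out uniform learnability (and, in fact, even non-uniform learnability).

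The main obstacle will be bookkeeping three small error terms simultaneously: (i) the vanishing tail of $D_{in}^{N, m}$ on $\{N+1, N+2, \ldots\}$, which perturbs both the sampling law and the ID risk; (ii) the slight deviation of the joint law of $(m, s)$ from pure symmetry whenever sample collisions occur; and (iii) the small discrepancy between $\prob[m \in \mathcal{A}(s) \mid s]$ and the normalized count $\kappa$. Each is $O(n/N)$, but collapsing the algorithm's worst-case behaviour into a clean one-dimensional optimization in $\kappa$ requires exploiting the permutation symmetry of the $D^{N, m}$ family uniformly across the algorithm's (possibly randomized) strategy.
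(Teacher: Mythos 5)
Your construction and argument are essentially the same as the paper's proof: a family of domains with co-singleton ID supports $\nats \setminus \{m\}$ and singleton OOD supports $\{m\}$, a sub-family of ``nearly uniform on $\{1,\dots,N\} \setminus \{m\}$ plus a geometric tail'' distributions, and a No-Free-Lunch--style averaging over $m$ to extract a dimension-free lower bound on the expected risk of any fixed-sample-complexity learner. The one substantive flaw is your closing parenthetical that the argument ``in fact'' rules out non-uniform learnability as well: the averaging argument only pins down the behaviour of an algorithm whose sample budget $n$ is fixed \emph{before} you choose $N \gg n$, and a non-uniform learner, whose number of draws may grow with the observed data, escapes it. Indeed, for a fixed domain $(D_{in}, \delta_m)$ one can keep sampling, output the union of the observed points as the hypothesis (which has zero OOD risk by construction), and stop once a missing-mass estimate is small; this non-uniformly learns the class, so the stronger parenthetical claim is actually false, and the proposition as stated (uniform impossibility only) is the correct one.
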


\begin{proof}
We provide a counter example that is a domains space $\mathcal{D}$ which satisfies DSA and with $\vc(S_{in}) = \vc(S_{out}) = 1$, but for which the uniform learnability of OOD Detection is provably impossible.

For positive integers $m \leq n$, we consider a domain $D^{n,m} = (D^{n,m}_{in}, \{m\})$ with an ID distribution $D^{n,m}_{in}$ given by the probability mass functions $g_{n,m}: \nats \rightarrow [0,1]$ such that:

\begin{equation}
    g_{n,m}(x) = 
    \begin{cases}
        \frac{1}{n} $ if $  x \in \{1, \ldots ,n\} \setminus \{m\} \\
        \frac{1}{n} 2^{x - n} $ if $  x>n\\
        0 $ otherwise$
    \end{cases}
\end{equation}

We will consider the domain space $\mathcal{D}$ equal to $\{D^{n,m} \mid n, m \in \nats \text{ and } m \leq n\}$. The construction of $D^{n,m}$ above directly implies that DSA holds for all domains in $\mathcal{D}$. On the other hand, as all OOD distributions are point mass distributions and all ID distributions are supported on $\nats$ bar a single point $m$ we have that $\vc(S_{in}) = \vc(S_{out}) = 1$. Thus, it now remains to be shown that OOD-detection is not uniformly learnable for $\mathcal{D}$.

We assume that the learning rule $\mathcal{A}$ which with sample complexity of $N - 1$ for some positive integer $N$ achieves an expected risk at most $r$ for all domains in $\mathcal{D}$. Similarly to the argument used in the proof of the \nameref{freelunch}, we will first generate the $N$ samples and then randomly choose an exact distribution consistent with them.

In particular, we consider the expected risk $L$ of $\mathcal{A}$ when the domain is chosen uniformly at random from the domain subspace $\mathcal{D}^{2N} = \{ D_{2N, m} \mid m \in \{1, \ldots, 2N\}\}$. We then have:

$$L = \expect_{D \sim \mathcal{D}^{2N}}[ \expect_{S \sim D^N}[ [R_D(\mathcal{A}(S)] ] \leq r$$

On the other hand, another way of generating the domain $D$ and the set $S$ of $N - 1$ training samples according to the same joint distribution as above is to start by generating $N - 1$ samples according to a distribution $D'$ with a probability mass function $g'$ given by:

\begin{equation}
    g'(x) = 
    \begin{cases}
        \frac{2N-1}{(2N)^2}  \text{ if }  x \in \{1, \ldots ,2N\} \\
        \frac{1}{2N} 2^{x - 2N} \text{ if }  x>2N\\
        0 $ otherwise$
    \end{cases}
\end{equation}

and then choosing the exact distribution uniformly at random from the set $\mathcal{D}_{S}^{2N} = \{D \in \mathcal{D}^{2N} \mid S \in \supp(D_{in})^N\}$ of all domains in $\mathcal{D}^{2N}$ consistent with the already generated sample set $S$.

Therefore, similarly to the proof of the No Free Lunch Theorem we have:

\begin{align*}
    \expect_{D \sim \mathcal{D}_{S}^{2N}} [R_D(\mathcal{A}(S)] ] &\geq \inf_{h \in \mathcal{P}(\nats)} \expect_{D \sim \mathcal{D}_{S}^{2N}}[R_D(h)]\\
    &= \inf_{h \in \mathcal{P}(\nats)} \expect_{D \sim \mathcal{D}_{S}^{2N}}[ \sum_{s \in \{1, \ldots, 2N\} \setminus S} \mathbbm{1}_{\{h(s) \text{ is wrong}\}} \prob_{(1-\alpha)D_{in} + \alpha D_{out}}[s]] \\
    &= \inf_{h \in \mathcal{P}(\mathcal{X})} \sum_{s \in \{1, \ldots, 2N\} \setminus S}  \mathbbm{1}_{\{h(s) = 0\}} \expect_{D \sim \mathcal{D}_{S}^{2N}}[(1 - \alpha) \prob_{D_{in}}[s]] + \mathbbm{1}_{\{h(s) = 1\}} \expect_{D \sim \mathcal{D}_{S}^{2N}}[\alpha \prob_{D_{out}}[s]]\\
    &= \sum_{s \in \{1, \ldots, 2N\} \setminus S} \min\{ \frac{1 - \alpha}{2N} \prob_{D \sim \mathcal{D}_{S}^{2N}}[ s \in \supp(D'_{in})] , \alpha \prob_{D \sim \mathcal{D}_{S}^{2N}}[ s \in \supp(D_{out}) \} \\
    &= (2N - |S|) \min\{ \frac{1 - \alpha}{2N} \frac{2N - |S| - 1}{2N - |S|}, \alpha \frac{1}{2N - |S|}\} \\
    &\geq  \min\{ \frac{1 - \alpha}{2}, \alpha \}
\end{align*}

As this holds for all sets of samples $S$ of cardinality of at most $S$ we obtain that:

$$r \geq \max_{D \in \mathcal{D}'} R_D(\mathcal{A})\geq L \geq \min\{ \frac{1 - \alpha}{2}, \alpha \}$$

Thus we obtain that the minimal risk a learning rule $\mathcal{A}$ can information-theoretically achieve using $N-1$ samples for all domain spaces in $\mathcal{D}$ is $\min\{ \frac{1 - \alpha}{2}, \alpha \}$. Therefore, OOD Detection is not uniformly learnable for $\mathcal{D}$. This completes the proof.

\end{proof}

\subsection*{C.3 VC Dimension 1 Can Imply Learnability of OOD Detection}

We now discuss how the a VC dimension equal to $1$ of the set $S_{in}$ of ID supports in a domain space $\mathcal{D}$ can imply the uniform learnability of OOD Detection for $\mathcal{D}$ under some additional assumptions.

We call a map $TP: (\mathcal{X}, \{0,1\})^2 \rightarrow \{0,1\}$ a \textbf{two-point oracle} for a domain space $\mathcal{D}$ if $TP((x, i), (y, j)) = 1$ iff there exists a domain $D \in \mathcal{D}$ such that $\mathbbm{1}_{\{x \in supp(D_{in})\}} = i$ and $\mathbbm{1}_{\{y \in supp(D_{out})\}}= j$. 

Additionally, we will say that a domain space $\mathcal{D}$ has a \textbf{zero OOD-risk ID distribution} if there exists a domain $D  \in \mathcal{D}$ such that for any $D' = (D'_{in}, D'_{out}) \in \mathcal{D}$ we have that $P_{D'_{out}}(\supp(D_{in})) = 0$. 

\begin{theorem} \label{th:vc1}
Let $\mathcal{D}$ be a domain space satisfying DSA and for which the set of all ID-supports $S_{in}$ is of VC dimension 1. Additionally, assume that $\mathcal{D}$ has a zero OOD risk ID distribution and a two-point oracle $TP$ for $\mathcal{D}$ is given. Then OOD detection is uniformly learnable for $\mathcal{D}$ in $\mathcal{P}(\mathcal{X})$.  
\end{theorem}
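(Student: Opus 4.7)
The plan is to construct an explicit uniform learner $\mathcal{A}$ built around the zero OOD-risk support $U := \supp(D^*_{in})$ and the two-point oracle $TP$. Given $n$ iid samples $S = \{x_1, \ldots, x_n\}$ from $D_{in}$, the learner outputs
\[
\mathcal{A}(S) \;:=\; U \;\cup\; \bigcup_{i=1}^{n} A(x_i), \qquad A(y) := \{x \in \mathcal{X} : TP((y,1),(x,1)) = 0\}.
\]
The OOD risk vanishes by construction: for any $x \in \supp(D_{out}) \setminus U$, the true domain $D$ certifies $TP((x_i, 1), (x, 1)) = 1$ for every sample (since almost surely $x_i \in \supp(D_{in})$), so $x \notin A(x_i)$ for every $i$ and thus $x \notin \mathcal{A}(S)$. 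Combined with $P_{D_{out}}(U) = 0$, this yields $R^{out}_D(\mathcal{A}(S)) = 0$ almost surely.

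The substantive step is bounding the ID risk. Define $B(x) := \{y : TP((y,1),(x,1)) = 1\}$; then $x \in \supp(D_{in}) \setminus U$ is misclassified exactly when $S \subseteq B(x)$, so
\[
\expect_S[R^{in}_D(\mathcal{A}(S))] \;=\; \int_{\supp(D_{in}) \setminus U} P_{D_{in}}(B(x))^n \, dP_{D_{in}}(x).
\]
I would leverage $\vc(S_{in}) = 1$ together with the existence of $U$ to replace $B(x)$ by a quantile set. Write $y \leq_I x$ for the relation ``every ID support containing $x$ also contains $y$''. For any $x, y \in \supp(D_{in}) \setminus U$, $\supp(D_{in})$ realizes the pattern $\{x, y\}$ on the pair and $U$ realizes $\emptyset$, so $\vc(S_{in}) = 1$ forces one of $\{x\}, \{y\}$ to be missing, i.e., $y \leq_I x$ or $x \leq_I y$; hence $\leq_I$ is a \emph{total} pre-order on $\supp(D_{in}) \setminus U$. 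If in addition $y \in B(x)$, the witnessing domain $D'$ (with $y \in \supp(D'_{in})$ and, by DSA, $x \notin \supp(D'_{in})$) realizes the pattern $\{y\}$, so the missing pattern must be $\{x\}$ and $y \leq_I x$; DSA further rules out $y \sim_I x$, giving $y <_I x$ strictly. Therefore $B(x) \cap (\supp(D_{in}) \setminus U) \subseteq \{y : y <_I x\}$.

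Setting $q := P_{D_{in}}(U)$ and $\tilde F(x) := P_{D_{in}}(\{y \in \supp(D_{in}) \setminus U : y <_I x\})$, the above gives $P_{D_{in}}(B(x)) \leq q + \tilde F(x)$ for $x \in \supp(D_{in}) \setminus U$. Ordering the $\leq_I$ equivalence classes inside $\supp(D_{in}) \setminus U$ with cumulative masses $0 = P_0 \leq P_1 \leq \cdots \leq 1 - q$, a Riemann-sum comparison yields
\[
\expect_S[R^{in}_D(\mathcal{A}(S))] \;\leq\; \sum_j p_j (q + P_{j-1})^n \;\leq\; \int_q^1 s^n \, ds \;\leq\; \frac{1}{n+1},
\]
with the non-atomic case handled analogously via Lebesgue--Stieltjes integration. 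This bound is uniform in $D \in \mathcal{D}$, and Markov's inequality then gives sample complexity $N(\epsilon, \delta) = O(1/(\epsilon \delta))$ for $R^{\alpha}_D(\mathcal{A}(S)) \leq \epsilon$ with probability at least $1 - \delta$, establishing uniform learnability.

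The main obstacle I foresee is the combinatorial upgrade from $\vc(S_{in}) = 1$ to a \emph{total} pre-order on $\supp(D_{in}) \setminus U$: the zero OOD-risk ID distribution is indispensable because only $U$ supplies the empty-pattern $\emptyset$ on an arbitrary pair $\{x, y\}$, and without it $\vc(S_{in}) = 1$ alone would force only a partial pre-order with potentially incomparable elements, degrading the quantile bound. Indeed, Proposition~\ref{novc1} confirms that weakening the hypothesis collapses uniform learnability, so all three ingredients---DSA, $\vc(S_{in}) = 1$, and the zero OOD-risk $U$---must be used simultaneously in the pattern analysis on pairs.
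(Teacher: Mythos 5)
Your proof is correct, and it rests on the same structural insight as the paper's --- that the zero-OOD-risk support $U$ together with $\vc(S_{in})=1$ linearly orders the only region where learning is actually needed --- but it executes that insight quite differently. The paper imports the ordering wholesale from the tree-ordering theorem for VC-dimension-$1$ classes (Theorem \ref{partord}), runs an explicit threshold learner (return $I_s \oplus \supp(D^0_{in})$ where $s$ is the maximal sample in the symmetric difference $\supp(D_{in}) \oplus U$), and bounds the ID risk with the standard quantile tail argument, yielding sample complexity $\frac{1}{\epsilon}\ln\frac{1}{\delta}$. You instead derive the total pre-order on $\supp(D_{in})\setminus U$ from scratch by a pattern count on pairs ($U$ supplies the empty pattern, the true support supplies the full pattern, and DSA both orients and strictifies the comparison whenever $TP$ returns $1$), you run a closure-style learner $U \cup \bigcup_i A(x_i)$ in which the order never appears algorithmically but only in the analysis, and you bound the \emph{expected} ID risk by $1/(n+1)$ before invoking Markov. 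Your route is self-contained (no appeal to the tree-ordering theorem) and makes transparent why all three hypotheses must interact on a single pair of points, consistent with Proposition \ref{novc1}; the price is the $O(1/(\epsilon\delta))$ sample complexity from Markov, versus the paper's logarithmic dependence on $1/\delta$, which you could recover by replacing Markov with a direct tail bound on the mass of the top $\epsilon$-quantile of your order. Both proofs gloss over the same technicalities (measurability of $B(x)$, of the lower sets, and of the output hypothesis, and the assumption $P_{D_{in}}(\supp(D_{in}))=1$), so these do not count against you.
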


In order to prove the above theorem we introduce the following terms:

\begin{definition}
    We say that a partial order $\preceq$ over a set $\mathcal{Y}$ is a tree ordering if for all elements $x$ of $\mathcal{Y}$ the initial segment $I_x = \{y \in Y \mid y \preceq x\}$ is linearly ordered under $\preceq$.
    .
\end{definition}

We can now state the following theorem, which we will use to show Theorem 4:

\begin{theorem}[\cite{ben20152}]
\label{partord}
Let $\mathcal{F}$ be a family of subsets of $\mathcal{X}$ of VC dimension 1 and let $f$ be an element of $\mathcal{F}$. Then, the relation $\preceq^{\mathcal{F}}_{f}$ defined by:

$$\preceq^{\mathcal{F}}_{f} = \{(x, y) \mid \exists h \in \mathcal{F}: h(y) \neq f(y) \rightarrow h(x) \neq f(x)\}$$

is a tree ordering in which for every $h \in \mathcal{F}$ the set $h_f = \{ x \in \mathcal{X} \mid h(x) \neq f(x) \}$ is an initial segment of $\preceq^{\mathcal{F}}_{f}$.

\end{theorem}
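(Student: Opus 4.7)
The plan is to reformulate the relation in terms of \emph{disagreement sets} and then reduce the tree-ordering claim to a single shattering argument. For each $w \in \mathcal{X}$ let $A_w = \{h \in \mathcal{F} : h(w) \neq f(w)\}$. Reading $\preceq^{\mathcal{F}}_f$ with the universal quantifier (the existential reading is vacuous, since $h = f$ makes the implication trivially true), one sees that $x \preceq y$ is equivalent to $A_y \subseteq A_x$. Reflexivity and transitivity of $\subseteq$ then give that $\preceq^{\mathcal{F}}_f$ is a preorder, which descends to a partial order after identifying points with equal disagreement sets.

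Next I would verify the easy direction, namely that every $h_f$ is an initial segment. If $a \in h_f$, then $h \in A_a$; and if $b \preceq a$, then $A_a \subseteq A_b$, so $h \in A_b$ and hence $b \in h_f$. Thus $h_f$ is downward-closed, which is exactly what it means to be an initial segment of the partial order. This step uses only the definition and no VC-dimensional input.

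The main step is showing that each initial segment $I_x = \{y : A_x \subseteq A_y\}$ is linearly ordered, which is where $\vc(\mathcal{F}) = 1$ enters. I would argue by contradiction: assume $y, z \in I_x$ are incomparable, so there exist $h_1 \in A_y \setminus A_z$ and $h_2 \in A_z \setminus A_y$. On the pair $\{y, z\}$, the hypothesis $f$ realizes the labeling that agrees with $f$ at both points; $h_1$ realizes the labeling disagreeing at $y$ but agreeing at $z$; and $h_2$ realizes the complementary labeling. Since $y, z \in I_x$ yields $A_x \subseteq A_y \cap A_z$, any $h \in A_x$ realizes the fourth labeling, disagreeing at both $y$ and $z$. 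These four labelings exhaust all dichotomies of $\{y,z\}$, so $\mathcal{F}$ shatters $\{y, z\}$, contradicting $\vc(\mathcal{F}) = 1$.

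The subtle obstacle is the edge case $A_x = \emptyset$, in which the fourth witness fails to exist and $I_x$ degenerates to all of $\mathcal{X}$. This is handled by restricting attention to the points actually discriminated by $\mathcal{F}$ (equivalently, by quotienting $\mathcal{X}$ by the relation $x \sim x'$ iff $A_x = A_{x'}$), which is the standard convention when invoking the Ben-David et al.\ tree representation for VC-1 classes; alternatively, one can attach such a point as a universal minimum or root without affecting linearity of initial segments elsewhere. Once this reduction is made, the three steps above combine to give the full claim.
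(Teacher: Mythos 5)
The paper itself gives no proof of this statement---it is imported wholesale from the cited Ben-David note---so there is nothing internal to compare against; what you have written is essentially the standard argument, and it is correct in its main lines. Your reformulation via the disagreement sets $A_w=\{h\in\mathcal F: h(w)\neq f(w)\}$ is the right move (and you are right that the quantifier must be read as universal, since $h=f$ makes the existential version vacuous), the downward-closedness of $h_f$ is immediate as you say, and the shattering argument for linearity of $I_x$ is exactly the content of the VC-dimension hypothesis: the fourth dichotomy of $\{y,z\}$ is witnessed by any $h\in A_x\subseteq A_y\cap A_z$. Equivalently, what you have proved is that $\{A_w\}_{w\in\mathcal X}$ is a laminar family, from which the tree structure on the non-degenerate points follows.

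The one place where your write-up is not quite right is the repair of the degenerate case $A_x=\emptyset$. Quotienting $\mathcal X$ by $x\sim x'$ iff $A_x=A_{x'}$ does \emph{not} fix it: all such points collapse to a single class, but that class still sits at the top of the order and its initial segment is still the entire (quotient) space, which need not be a chain (e.g.\ $\mathcal F=\{\emptyset,\{a\},\{b\}\}$ over $\mathcal X=\{a,b,c\}$ with $f=\emptyset$: the point $c$ has $A_c=\emptyset$ and $I_c\supseteq\{a,b\}$ with $A_a,A_b$ incomparable). The quotient is only needed to turn the preorder into a genuine partial order when distinct points have equal nonempty disagreement sets. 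To handle $A_x=\emptyset$ you must either delete such points (they are exactly the points on which $\mathcal F$ is constant, and they play no role in the application in Theorem~\ref{th:vc1}, where the maximal sample $s$ is taken among points where $\supp(D_{in})$ and $\supp(D^0_{in})$ differ) or, as in your alternative, redefine the order to place them at the bottom rather than the top; your ``universal minimum'' fix is a modification of the stated relation, not a consequence of it, and is worth flagging as such. With that caveat made precise, the proof is complete.
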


We are now ready to show Theorem 5.

\begin{proof}[Proof of Theorem \ref{th:vc1}]

Let the ID distribution $D^0_{in}$ of $D^0$ be the zero OOD risk ID distribution of $\mathcal{D}$ and let $\leq$ be the tree ordering on $\mathcal{X}$ defined by $\supp(D^0_{in})$ and $S_{in}$ as in Theorem \ref{partord}. Note that we can use the two-point oracle to compare two elements of $\mathcal{X}$ in $\leq$.

Let $\epsilon$ and $\delta$ be real numbers in $(0, 1/2)$ and let $D = (D_{in}, D_{out})$ be any domain in $\mathcal{D}$. Denote with $I$ the initial segment of $\leq$ those $x \in \mathcal{X}$ for which $X \in \supp(D_{in}) \oplus \supp(D^0_{in})$. Denote with $I_x \subseteq \mathcal{X}$ the set of all elements of $\mathcal{X}$ smaller or equal to $x$.

The algorithm is as follows: It first generates a set $S$ of $N(\epsilon, \delta)$ sample points from $D_{in}$. Let $s$ be the maximal sample point in $S$ in which $\supp(D_{in})$ and $\supp(D^{0}_{in})$. Note that as $S \cup I$ is a finite subset of a linearly ordered set it has a maximal element. The algorithm then returns the hypothesis $h = I_s \oplus \supp(D^{0}_{in})$. Note that using the two-point oracle $TP$ we can evaluate if a point $x \in h$ by computing whether $x \leq s$.

Outside of the initial segment $I$ the supports of $D_{in}$ and $D^0_{in}$ coincide, therefore on $\mathcal{X} \ I $ the hypothesis $h$ will only classify points in $\supp(D_{in})$ as in-distribution. Thus, as $\mathcal{D}$ satisfies DSA $h$ will have no risk outside on $I$. Similarly, $h$ coincides with $\supp(D_{in})$ on $I_s$ and, therefore, all its prediction will also be correct. To summarize, we the region $I \setminus I_s$ contains all points misclassified by $h$

In particular, it follows that the OOD risk $R^{out}_D(h)$ of $h$ equals $\prob_{D_{out}}[I /setminus I_s]$ and its ID risk $R^{in}_D(h)$ equals $\prob_{D_{in}}[I \setminus I_s]$.

However, as the domain space $\mathcal{D}$ satisfies DSA and  $D^0_{in}$ is a zero OOD-risk ID distribution for it we have that:

$$R^{out}_D(h) = \prob_{D_{out}}[I \setminus I_s] \leq \prob_{D_{out}}[I] \leq \prob_{D_{out}}[\supp(D^{0}_{in})] = 0$$

Therefore, we only need to consider the ID risk that is the quantity $\prob_{D_{in}}[I \setminus I_s]$.

But as in the proof of Theorem \ref{iofsa}, it is straightforward to verify that $ N(\epsilon, \delta) = \frac{1}{\epsilon} \ln (\frac{1}{\delta})$  samples guarantee that with probability of at least $\delta$ the ID mass of $I \setminus I_s$ and, therefore, the risk of $h$ is at most $\epsilon$, as desired. 
\end{proof}

\section*{Appendix D}
\label{sec:proofs}

In this section we present proofs for the results stated in Section \ref{ass_res}. 

\subsection*{D.1 OOD-detection under DSA}

We start with the following proposition:

\begin{restatable}{proposition}{lruledsa}
    \label{lrule}
     Let $\mathcal{A}$ be a (non-uniform) learning rule which given a domains $D$ has access to the oracle $EX(D_{in})$ and terminates with probability $1$ returning a hypothesis $h$. Then, there for exists a domain $D \in \mathcal{D}_S$ over $\reals^n$ such that:

    $$\expect[R_D(h)] \geq \alpha(1 - \alpha)$$

    where the expectations is taken over the random samples drawn and the internal randomization of $\mathcal{A}$.
\end{restatable}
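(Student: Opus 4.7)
The plan is a Yao-style Bayesian lower bound: it suffices to construct, for any deterministic learning rule $\mathcal{A}$ that terminates almost surely, a probability distribution (prior) $\pi$ on $\mathcal{D}_S$ (allowed to depend on $\mathcal{A}$) such that
\begin{equation*}
\expect_{D \sim \pi}\expect_S[R_D(\mathcal{A}(S))] \;\geq\; \alpha(1-\alpha);
\end{equation*}
this immediately produces a single $D \in \mathcal{D}_S$ with the claimed expected risk, and any randomized $\mathcal{A}$ reduces to the deterministic case by averaging over its internal coins.

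I would embed $\nats$ as an isolated subset of $\reals^n$, and for each even $M$ and each bipartition $(A,B)$ of $\{1,\dots,M\}$ with $|A|=|B|=M/2$ let $D^{M,A,B}$ be the domain with $D_{in}$ uniform on $A$ and $D_{out}$ uniform on $B$. Since $A$ and $B$ are finite disjoint discrete subsets of $\reals^n$, their closures are disjoint and $D^{M,A,B} \in \mathcal{D}_S$. The prior $\pi$ draws a scale $M$ from an auxiliary distribution $\rho$ and then draws the bipartition $(A,B)$ uniformly among the balanced bipartitions of $\{1,\dots,M\}$. Passing to the Bayes posterior risk gives $\expect_\pi \expect_S[R_D(\mathcal{A}(S))] \geq \expect_S[\inf_h \expect_{D\mid S}[R_D(h)]]$, and a direct point-by-point posterior computation—mirroring the calculation from the proof of the \nameref{freelunch} in Appendix C.1—shows that, conditional on $M$ and on the set $S^\star$ of \emph{distinct} observed samples,
\begin{equation*}
\inf_h \expect_{D\mid S,M}[R_D(h)] \;=\; \min\!\Bigl\{(1-\alpha)\bigl(1-\tfrac{2|S^\star|}{M}\bigr),\ \alpha\Bigr\}.
\end{equation*}
Whenever $|S^\star|/M \leq (1-\alpha)/2$, both terms in this minimum dominate $\alpha(1-\alpha)$, because $(1-\alpha)(1-2|S^\star|/M) \geq \alpha(1-\alpha)$ iff $|S^\star|/M \leq (1-\alpha)/2$, while $\alpha \geq \alpha(1-\alpha)$ holds unconditionally.

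It then remains to choose $\rho$ so that $\prob_{\pi,S}[\,|S^\star|\leq M(1-\alpha)/2\,]$ is arbitrarily close to $1$. Almost-sure termination of $\mathcal{A}$ under each $D^{M,A,B}$ guarantees that the sample count, and hence $|S^\star|$, is a.s.\ finite, so for each $\varepsilon>0$ and each $M$ there exists a finite threshold $K_{M,\varepsilon}$ with $\prob_{\pi_M,S}[\,|S^\star|\leq K_{M,\varepsilon}\,]\geq 1-\varepsilon$. Supporting $\rho$ on a subsequence of scales $M$ growing fast enough relative to $K_{M,\varepsilon}$ makes the event $\{|S^\star|\leq M(1-\alpha)/2\}$ hold with probability at least $1-\varepsilon$; combined with the Bayes-risk bound and letting $\varepsilon \to 0$ gives the desired inequality.

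The principal technical obstacle is in this last step: an adversarial $\mathcal{A}$ could adapt its sample count to the magnitudes of the samples it observes (which under $\pi_M$ are of order $M$), potentially forcing $K_{M,\varepsilon}/M$ to be bounded away from $0$ for every $M$. Ruling this out relies on the fact that $\mathcal{A}$ is fixed before $\rho$ is chosen, so a diagonal argument over pairs $(M,\varepsilon)$ allows $\rho$ to "outrun" $\mathcal{A}$ across scales. An alternative, technically cleaner route replaces the finite bipartition by a random infinite bipartition of $\nats$ under a summable reference weight, in which case the tail posterior automatically remains balanced for any a.s.-finite sample count, bypassing the scaling issue at the cost of a slightly more involved posterior-marginal computation.
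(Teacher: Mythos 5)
Your route is genuinely different from the paper's, and you have correctly identified where it breaks, but neither of your two proposed fixes closes the gap. The core problem is that a \emph{non-uniform} learner has no sample-complexity bound, so it can adapt its number of draws to the magnitudes it observes and exhaust the ID support at \emph{every} scale $M$. Concretely, let $\mathcal{A}$ keep sampling until the draw count exceeds $T^3$, where $T$ is the largest value observed so far, and then return the set of distinct values seen. Under each $D^{M,A,B}$ this terminates almost surely after $\Theta(M^3)$ draws, which by a coupon-collector bound comfortably covers all of $A$; so with probability $1-O(1/M)$ the output equals $A$ and has risk exactly $0$. Hence $K_{M,\varepsilon}$ in your notation scales like $M/2$, the event $\{|S^\star| \leq M(1-\alpha)/2\}$ fails for all large $M$, and no distribution $\rho$ over scales rescues the bound: the expected Bayes risk of this $\mathcal{A}$ against your prior is $o(1)$ for every choice of $\rho$. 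The diagonal argument over pairs $(M,\varepsilon)$ does not outrun this $\mathcal{A}$. This is exactly where uniform and non-uniform learnability part ways: the paper's No Free Lunch proof, which your posterior calculation mirrors, works only because it first fixes a sample-complexity bound $N$ and then chooses a shattered set of size $3N$, making exhaustion impossible. (Compare Proposition~\ref{novc1} in Appendix C.2, which is essentially your construction but deliberately restricted to uniform learnability.)

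Your alternative --- a random infinite bipartition of $\nats$ under a summable weight $p$ --- does block exhaustion, but the ``slightly more involved posterior-marginal computation'' hides real difficulties the sketch does not resolve. If $D_{in}$ is $p$ normalized over $A$, the likelihood of $A$ given $N$ samples is proportional to $\bigl(\sum_{m\in A}p(m)\bigr)^{-N}$, so the posterior concentrates on the smallest consistent $A$ rather than staying balanced; and even in favorable regimes the constant produced looks like a fraction of $\min\{\alpha,1-\alpha\}$, which is not obviously $\alpha(1-\alpha)$. The paper's own proof dodges exhaustion entirely by a coupling argument on a non-exhaustible base: it fixes $D_{in}=U([0,1])$, first bounds (w.h.p.) the number of draws $M$ of $\mathcal{A}$ \emph{on that single base distribution}, then perturbs $D_{in}$ only on a length-$\epsilon$ interval chosen after $M$ so small that the draws miss it w.h.p., coupling the perturbed run to the base run via a measure-preserving map $g_x$. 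A Fubini argument over the location of the hole then converts ``low OOD risk on every perturbed domain'' into ``low expected Lebesgue measure of $h\cap[0,1]$,'' which forces high ID risk and yields $\alpha(1-\alpha)$. That coupling --- not Bayesian averaging over a prior --- is the ingredient your proof is missing.
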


\begin{proof}

Assume the contrary. Let $\mathcal{A}$ be a determisnistic learning rule that achieves an expected risk of at most $\lambda < \alpha(1 - \alpha)$ for all $D \in \mathcal{D}_s$ .
Denote with $I$ the interval $[0,1]$ and with $I^{\epsilon}_x$ the interval $(x, x + \epsilon)$ for any $\epsilon \in (0,1)$ and any $x \in [0, 1 - \epsilon]$. 

Consider a domain $D$ with $D_{in} = U(I)$, the uniform distribution over $I$, and $D_{out} = \{2\}$. Then the Monotone Convergence Theorem implies there exists $M \in \nats$ such that $\mathcal{A}$ terminates on $D_{in}$ with probability  $> 1 - \delta$ having drawn at most $M$ samples. We will denote this event with $B$. 


Note that we can without loss of generality assume that $\mathcal{A}$ always samples at least $M$ samples. This specifies a partial function $f: I^M\rightarrow \mathcal{P}(\reals^{n})$ defined whenever $B$ holds and given by $f(S) = \mathcal{A}(S)$ where $S$ is the sample set.

We also consider distributions:

\begin{equation*}
    D^{x}_{in} = (1 - \epsilon) U(I \setminus I^\epsilon_x) + \epsilon \cdot \{3\}
\end{equation*} 

for any $x \in [0, 1 - \epsilon]$, where we fix the value of $\epsilon \in [0,1]$ later. Note that the distributions $D^{x}_{in}$ and $D_{in}$ only differ on $I^\epsilon_x$ and on the single point set $\{3\}$.

Consider a function $g_x: I \rightarrow  I \setminus I^\epsilon_x \cup \{3\}$ given by:

\begin{equation*}
    g_x(X) =
    \begin{cases}
        3  \text{ if }   X \in I_x \\
        X  \text{ otherwise}
    \end{cases}
\end{equation*}

Now it is easy to see that if $X$ is distributed according to $D_{in}$, then $g_x(X)$ will be distributed according to $D^x_{in}$. Thus, we can simulate $EX(D^{x}_{in})$ by drawing a point from $EX(D^{in})$ and then applying $g_x$.

Denote with $h$ and $h_x$ the random values predictions of $\mathcal{A}$ when run on $D_{in}$ and $D^x_{in}$, respectively. By the above, we can use $g_x(EX(D^{in}))$ instead of $EX(D^{x}_{in})$ to generate samples from $D^x_{in}$. This does not affect the distribution of $h_x$.

Let $S_x = g_x(S)$ be the first $M$ samples generated by  $g_x(EX(D^{in}))$. Let $A_x$ be the event that the sample set $S$ does not intersect $I_x$. We have that:

$$\prob[A_x] = (1 -  \epsilon)^M  \geq 1 - M \epsilon$$. 

Clearly, if $A_x$ holds then $S = S_x$. Thus, if $f$ is defined on $S$ (that is if $B$ holds) the hypotheses $h$ and $h_x$ are both equal to $f(S)$. In particular, we have $h = h_x$.

Let $y$ be a point belonging to $I_x$. We set:

\begin{equation*}
    p_{x, y} = \prob_{S \sim (D_{in})^M}[ (y \in f(S)) \cap A_x \cap B]
\end{equation*}

Consider a $D_{x, y} = (D^{x}_{in}, \{y\})$. Then whenever $(y \in f(S)) \cap A_x \cap B$ occurs $\mathcal{A}$ achieves OOD-risk $1$ on $D_{x, y}$. Thus, the risk of $\mathcal{A}$ with respect to $D_{x, y} = (D_{in}, D_{y})$ is at least $\alpha p_y$. We obtain:

\begin{equation*}
\prob_{S \sim D^M_{in}}[ (y \in f(S)) \cap A_x \cap B] = p_y \leq \frac{\lambda}{\alpha} \text{ for all } y \in I_x
\end{equation*}

We now apply union bound to obtain the following for any $y \in [0, 1]^n$:

\begin{equation*}
\prob_{S \sim D^M_{in}}[ y \in h] \leq \prob_{S \sim D^M_{in}}[ (y \in f(S)) \cap A_x \cap B] + \prob[\neg A_x] + \prob[ \neg B] \leq \frac{\lambda}{\alpha} + M \epsilon + \delta
\end{equation*}

Letting $\epsilon$ and $\delta$ converge to $0$ we obtain:

\begin{equation*}
\prob[ y \in h] \leq \frac{\lambda}{\alpha}
\end{equation*}

We now show how this leads to a contradiction. To this end, consider the expected volume of the prediction $\text{vol}(h \cap I)$. We have that:

\begin{align*}
\expect_{D^{\infty}_{in}}[\text{vol}(h \cap I)] & = \int_{\mathcal{P}(R^{n})} \int_{I} \mathbbm{1}(x \in h) dx dh \\
& = \int_{I} \int_{\mathcal{P}(R^{n})} \mathbbm{1}(x \in h) dh dx \tag{ Fubini's Theorem } \\
& \leq \int_{I} \frac{\lambda}{\alpha} dx = \frac{\lambda}{\alpha}
\end{align*}

Finally, we now bound the ID-risk of $h$:

\begin{equation*}
\lambda = \expect_{h}[R_D(h)] \geq (1 - \lambda) \expect_{h}[R_{D_{in}}(h)] \geq (1 - \alpha) (1 - \frac{\lambda}{\alpha}) 
\end{equation*}

Rearranging, we obtain $\lambda \geq \alpha (1 - \alpha)$, as desired.

\end{proof}

\begin{remark}
    To extend the above argument to non-deterministic learning rules $\mathcal{A}$ we can simply consider the predictions $f(S)$ as random variables over $\mathcal{P}(\reals^n)$ instead of just elements of it.

    Note that, the contradiction was obtained using a very small subset of $D_s$. We can also modify the construction to only include absolutely continuous domains. 
\end{remark}

A corollary of the above is Theorem \ref{dsa:no}:

\dsano*

\begin{proof}
    Let $\mathcal{A}$ be an algorithm which non-uniformly learns OOD-detection for $D_{s}$ for an OOD-probability $\alpha$. 

    Fix the values of the learning parameters by setting $\epsilon = \delta = \alpha (1 - \alpha)/3$. Then, by assumption given access to $\epsilon$, $\delta$, $\alpha$, and $EX(D_{in})$ the algorithm $\mathcal{A}$ outputs a hypothesis $h$ which with probability of at least  $1-\delta$ achieves risk of at most $\epsilon$. But then:
    
    $$\expect[R_D(h)] \leq \delta + \epsilon < \alpha (1 - \alpha)$$
    
    But this contradicts Proposition \ref{lrule}
\end{proof}

\paragraph{A False Conjecture}

In this subsection we argue that the following conjecture stated in \cite{fang2022out} is actually false even for the more relaxed notion of uniform learnability in Mode (i):

\textbf{Conjecture}: \textit{If $\mathcal{H}$ is agnostic learnable for supervised learning, then OOD detection is learnable in $\mathcal{D}$ if for any domain $D \in \mathcal{D}$ and any $\epsilon>0$, there exists a hypothesis function $h_{\epsilon}\in \mathcal{H}$ such that:}

 \begin{equation*}
 h_{\epsilon}\in \{ h' \in \mathcal{H}: R_D^{\rm out}(h') \leq \inf_{h\in \mathcal{H}} R_D^{\rm out}(h)+\epsilon\} \cap   \{ h' \in \mathcal{H}: R_D^{\rm in}(h') \leq \inf_{h\in \mathcal{H}} R_D^{\rm in}(h)+\epsilon\}.
 \end{equation*}

 First, note that a if the domain space $\mathcal{D}$ is realized in $\mathcal{H}$ then the condition of the conjecture holds as $R^{\alpha}_D(h) \leq \epsilon$ implies that $R_D^{\rm in}(h') \leq \epsilon / ( 1 - \alpha)$ and $R_D^{\rm out}(h') \leq \epsilon / (1 - \alpha)$.  Additionally, by the Fundamental Theorem of Statistical Learning $\mathcal{H}$ is agnostically learnable if it has finite VC dimension.

  The proof of Proposition \ref{lrule} implies that uniform OOD detection is impossible for the domain space $\mathcal{D}$, defined below using the notation of the proof, in $\mathcal{P}(\reals)$:
  
  \begin{equation*}
      \mathcal{D} = (U(I), \{3\}) \cup \bigcup_{0 < \epsilon < 1}\{D_{x, y} \mid x \in [0, 1 - \epsilon], y \in (x, x + \epsilon)\}
  \end{equation*} 

  Thus, Proposition \ref{irrhyp} implies that OOD detection for $\mathcal{D}$ is impossible in any hypothesis space $\mathcal{H}$ which realizes $\mathcal{D}$. But an example of such hypothesis space is $\mathcal{H} = \{[0,1] \setminus x \mid x \in [0,1]\}$. But $\mathcal{H}$ has a VC dimension of exactly $1$ and, therefore, we have disproved the conjecture.

\subsection{D.2 OOD-detection under $\tau$-FSA}
\label{sec:fsaproof}

\nofsa*

\begin{proof}
    Follows directly from the \nameref{freelunch}: the set $\{2n\tau | n \in \{ 1, \ldots, m\}\}$ is shattered by $\mathcal{D}^{\tau}$ for all $m \in \nats$. Thus, the VC-dimension of $\mathcal{D}^{\tau}$ is infinite and Theorem \ref{freelunch} is applicable.
\end{proof}

\unfar*

\begin{proof}
Fix the values of $0 < \epsilon < 1/2$ and $0 < \delta < 1/2$ be two a and let $D$ be a domain belonging $\mathcal{D}$. We will use an algorithm based on the Maximal Zero OOD Risk Procedure.

More precisely, the algorithm generates a sample set of $N = N(\epsilon, \delta)$ points from $EX(D_{in})$ and returns a hypothesis $h$ which is evaluated to $1$ at a point $x \in R^n$ if $\min_{s \in S}\dist(x, s) \leq \tau$. In other words: $h = \cup_{s \in S} \cl(B(s, \tau))$.

We will now show that we can choose the sample complexity $N$ such that the risk of $h$ is at most $\epsilon$ with probability at least $1 - \delta$. First, note that as $\mathcal{D}$ is $\tau$-far we know that $h \subseteq \supp(D_{out})^c$. Therefore, we only need to consider $R_{in}(h)$.

Denote with $x_0$ be a point in $\supp{D_{in}}$ and let $B = \cl(B(x_0, R))$. As $\diam(\supp(D_{in})) \leq R$ we know that $\supp(D_{in}) \subseteq B$. Now partition $B$ into a set $Q$ of $M < (2\sqrt{n} R  /\tau)^n$ "small" open hypercubes of side $\tau/\sqrt{n}$. In addition, we add the boundaries (sides, edges, vertices) of the hypercubes in $Q$ to some of the hypercubes so that each point in  $\cl(B(x_0, R))$ belongs to exactly one of the hypercubes in $Q$. We now proceed with a standard PAC-learnability argument.

We call a hypercube in $Q$ significant if its $D_{in}$-mass is at least $\epsilon / M$. Note that if there exists a point $s \in S$ lying in hypercube $T \in Q$, then $T \subseteq h$ as $\diam(T) = \tau$. That means that it suffices to show that with high probability we sample set $S$ intersects all significant hypercubes as then the maximal risk $h$ can have is $\epsilon$.

To this end, for a significant hypercube $T$ in $Q$ let $A_T$ be the event that the sample set $S$ and $T$ are disjoint. Then, we have that:

$$\prob_{S \sim D^{N}_{in}}[A_T] = (1 - \prob_{D_{in}}[T])^N \leq \exp( - \epsilon N /M)$$

We can apply a union bound over all at most $M$ significant squares to obtain:

$$\prob_{S \sim D^{N}_{in}}[R_D(h) > \epsilon] <  M \exp( - \epsilon N /M)$$

This means that if $N = \frac{M}{\epsilon} \log (\frac{M}{\epsilon})$ the above algorithm is, indeed, a uniform learner with polynomial time and sample complexity.

\end{proof}

\farood*

\begin{proof}

Consider the procedure $\mathcal{B}$ from the proof of Theorem \ref{unfar}. Namely, for a set $S$ of sampled points the hypothesis $\mathcal{B}(S)$ is $h = \cup_{s \in S} \cl(B(s, \tau))$. As already discussed, the OOD-risk of $h$ is $0$.

On the other hand, the Monotone Convergence Theorem implies that for any $\epsilon > 0$ there exists a hypersphere $P$ with diameter $R_{\epsilon/2}$ which contains at least $1 - \epsilon / 2$ of the mass of $D_{in}$. But by the proof of Theorem \ref{unfar} the probability that $h$ contains at least $1 - \epsilon/3$ of the mass of $D_{in} \mid P$ tends to 1 as the number of samples goes to infinity. Combining the two inequalities we obtain:

\begin{equation*}
        \prob_{S \sim D^{n}_{in}}[R_{D_{in}}(\mathcal{B}(S)) < \epsilon] \rightarrow 1 \text{ as } n \rightarrow \infty
    \end{equation*}

This means that Proposition \ref{prtol} is applicable and, thus, OOD-detection is non-uniformly learnable for $\mathcal{D}$ as desired.

\end{proof}

\begin{remark}
A more hands-on algorithm (with no hypothesis testing) is also possible. Indeed, we can first approximate $R_{epsilon/2}$  using the samples $S$ after which we can directly run the algorithm from the proof of theorem for maximal risk $\epsilon/2$. See the proof of \ref{iofsa}.
\end{remark}

\subsection*{D.3 Generalized Far-OOD detection}
\label{sec:genfsa}

We will now prove Theorems \ref{nonholder} and \ref{unholder} by introducing a generalized version of Far-OOD Detection. We consider domains in which the ratio between the $D_{out}$ mass of an $\epsilon$-ball centered at any point $x \in \supp(D_{out})$ and the volume of this ball tends to $0$ as $\epsilon$ uniformly tends to $0$. More formally, we consider domains satisfying the following assumption, which we extend to domain spaces as per usual:

\begin{assumption}
    Let $g: R_{\geq 0} \rightarrow R_{\geq 0}$ be a function such that $\lim_{x \rightarrow 0} g(x) = 0$. 
    
    We say that a domain $D = (D_{in}, D_{out})$ over $\reals^n$ satisfies the \textbf{OOD $g$-Far Supports Assumption} if for all $\epsilon \in \reals_{> 0}$ and all $x \in \supp(D_{in})$:

    \begin{equation*}
        \prob_{D_{out}}[ \cl(B(x, \epsilon))] \leq g(\epsilon) \vol(B(x, \epsilon))
    \end{equation*}

    Similarly, we say that a domain $D = (D_{in}, D_{out})$ over $\reals^n$ is \textbf{ID $g$-Far Supports Assumption} if for all $\epsilon \in \reals_{> 0}$ and all $x \in \supp(D_{out})$:

    \begin{equation*}
        \prob_{D_{in}}[ \cl(B(x, \epsilon))] \leq g(\epsilon) \vol(B(x, \epsilon))
    \end{equation*}
\end{assumption}

Essentially, for a domain $D$ the OOD $g$-Far Supports Assumption requires that the average $D_{out}$ density (assuming that a density function exists) around a $\epsilon$-neighbourhood of a point $x \in \supp(D_in)$ tends to $0$ as $\epsilon$ tends to $0$. Hence, it is easy to see that the $(\gamma, C)$-Hölder Continuous OOD-Distribution Assumption and DSA imply the OOD $g$-Far Supports Assumption. Indeed, if a domain $D$ has a $(\gamma, C)$-Hölder Continuous OOD density function  and disjoint supports then the average OOD density of $\cl(B(x, \epsilon))$ is less than $C \epsilon^\gamma$. Similarly, $(\gamma, C)$-Hölder Continuous ID-Distribution Assumption and DSA imply the ID $g$-Far Supports Assumption.

Observe that the volume of an $n$-dimensional sphere with radius $r$: $B(x, r)$ where $x \in \reals^n$ is equal to $c_n r^n$, where $c_n$ is a constant which depends on the number of dimensions.

We now prove the positive learnability results of \ref{sec:holder} by showing them for any domain spaces satisfying the ID/OOD $g$-Far Supports Assumption. We begin with:

\begin{theorem}[Uniform OOD Detection under OOD $g$-FSA and BoundedID]
\label{oodfsa}
    Let the domain space $\mathcal{D}$ over $\reals^n$ satisfy the OOD $g$-FSA and BoundedID for some a priori given $g: \reals_{\geq 0} \rightarrow \reals_{\geq 0}$ and $R$. 
    Then OOD-detection is uniformly learnable for $\mathcal{D}$ in $\mathcal{P}(\reals^n)$. \cmmnt{The sample complexity and running time are polynomial in $1/\tau$, $1/\epsilon$, $1/\delta$, and $R$.}
\end{theorem}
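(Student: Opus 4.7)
The plan is to follow the template of Theorem \ref{unfar}, but to choose the ball radius $\tau$ adaptively as a function of $\epsilon$, and to replace the naive union bound over sampled balls by a covering-style bound so that OOD $g$-FSA carries the load formerly carried by $\tau$-FSA. Write $c_n$ for the volume of the unit ball in $\reals^n$. First, using $g(r) \to 0$ as $r \to 0$, choose $\tau = \tau(\epsilon) > 0$ small enough that $(4\sqrt{n}\,R)^n c_n\, g(2\tau) \leq \epsilon/2$. This choice depends only on the a priori known quantities $n$, $R$, $\epsilon$, and $g$. Then run the algorithm from Theorem \ref{unfar} with this $\tau$: sample $N = \Theta((M/\epsilon)\log(M/\delta))$ points $S$ from $EX(D_{in})$, where $M \leq (2\sqrt{n}\,R/\tau)^n$ is the size of a partition of a fixed ball of radius $R$ containing $\supp(D_{in})$ (which exists by BoundedID) into half-open hypercubes of side $\tau/\sqrt{n}$, and output $h = \bigcup_{s \in S} \cl(B(s,\tau))$.

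For the ID risk I would reuse the coupon-collector argument of Theorem \ref{unfar} verbatim, with significance threshold $\epsilon/(2M)$. The total $D_{in}$-mass of non-significant hypercubes is then at most $\epsilon/2$, and the choice of $N$ together with a union bound ensures that with probability at least $1-\delta$ every significant hypercube contains at least one sample. Since each hypercube has diameter $\tau$, any such hypercube is entirely contained in $h$, giving $R^{in}_D(h) \leq \epsilon/2$ on this high-probability event.

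The only new ingredient is the OOD bound. Instead of union-bounding over the $N$ sampled balls, I would fix, for each hypercube $T$ meeting $\supp(D_{in})$, an anchor $y_T \in T \cap \supp(D_{in})$. Any sample $s$ lies in some hypercube $T_s$, so every point within distance $\tau$ of $s$ is within distance $2\tau$ of $y_{T_s}$, giving $h \subseteq \bigcup_T \cl(B(y_T, 2\tau))$ where the union is over at most $M$ anchors. Applying OOD $g$-FSA at each anchor $y_T$ (legal because $y_T \in \supp(D_{in})$) and a union bound over these $M$ balls yields
\begin{equation*}
R^{out}_D(h) \;\leq\; M \cdot g(2\tau)\, c_n (2\tau)^n \;=\; (4\sqrt{n}\,R)^n c_n\, g(2\tau) \;\leq\; \epsilon/2
\end{equation*}
deterministically, by the choice of $\tau$. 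Combining with the ID bound, $R^{\alpha}_D(h) \leq \epsilon$ with probability at least $1-\delta$, and $N$ depends only on $\epsilon$, $\delta$, $n$, $R$, and $g$.

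The main obstacle is precisely this OOD step. A naive union bound over the $N$ sampled balls would only give $R^{out}_D(h) \leq N\, g(\tau)\, c_n \tau^n$; since $N$ grows like $\tau^{-n}$ (up to logs), the $\tau^n$ factor cancels and one is left with a factor of order $g(\tau)\log(1/\tau)$, which need not tend to $0$ for an arbitrary $g$ satisfying only $g(r) \to 0$. The covering trick replaces $N$ by the covering number $M$, and the product $M \cdot \tau^n = (2\sqrt{n}\,R)^n$ is a dimension-dependent constant independent of $\tau$, leaving the clean factor $g(2\tau)$ which \emph{does} tend to zero under the stated hypothesis. Everything else is a direct reuse of the $\tau$-FSA proof.
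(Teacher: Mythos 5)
Your proof is correct and follows essentially the same route as the paper: the paper also partitions the bounding ball into $M < (2\sqrt{n}R/\tau)^n$ hypercubes of side $\tau/\sqrt{n}$, reuses the significant-hypercube ID analysis from Theorem \ref{unfar}, and bounds the OOD risk by a union bound over the $M$ cells (each of diameter $\tau$ and meeting $\supp(D_{in})$) rather than over the $N$ samples, exploiting exactly the cancellation $M\cdot\tau^n = O(R^n)$ that you identify as the crux. The only cosmetic difference is that the paper outputs the union of occupied hypercubes directly instead of the union of $\tau$-balls around samples, which avoids your anchor-point covering step at the cost of nothing; your version just pays an extra factor of $2^n$ in the constant.
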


\begin{proof}

Fix $0 < \epsilon < 1/2$ and $0 < \delta < 1/2$ and let $D \in \mathcal{D}$ be a domain in $\mathcal{D}$. As already mentioned above, we use the a similar algorithm as in Theorem \ref{oodfsa}.

More precisely, the algorithm generates a sample set $S$ of $N$ points from $EX(D_{in})$. Let $x_0$ be the first point sampled from in $\supp(D_{in})$ and let $B = \cl(B(x_0, R))$, which we partition in $M < (2\sqrt{n} R  /\tau)^n$ "small" $n$-dimensional cubes of side $\tau/\sqrt{n}$ for some real $\tau > 0$, which we will fix later.  Note that $\supp(D_{in}) \subseteq B$. Denote with $\mathcal{Q}$ the set of all the cubes in that partition. 

This time the algorithm returns a hypothesis $h$ consisting of all the cubes in $\mathcal{Q}$ from which we have generated samples in $S$. That is, the algorithm returns:

\begin{equation*}
    h = \bigcup_{Q \in \mathcal{Q} \wedge S \cap Q \neq \emptyset} Q
\end{equation*}

Note that as in this case we do not necessarily have that $h$ is disjoint from $\supp(D_{out})$, $h$ can have positive OOD and ID risk. However, the the analysis of the ID-risk from Theorem \ref{unfar} still applies here and allows us to bound the ID risk. Indeed, for a sample set of $N$ points. where  

\begin{equation*}
    N = \frac{M}{\epsilon} \log (\frac{M}{\epsilon})
\end{equation*}

we know that with probability of at least $1 - \delta$:

\begin{equation*}
    R_{in}(h) \leq \epsilon
\end{equation*}

On the other hand, as $D$ is $g$-separated and the diameter of each hypercube $Q$ in $\mathcal{Q}$ is $\tau$ we know that if $Q \cap \supp(D_{in}) \neq \emptyset$ then $\prob_{D_{out}}[Q] \leq g(\tau) c_n \tau^n$. This implies that:

\begin{equation*}
    R_{out}[h] \leq c_n g(\tau) \tau^n M \leq c_n (2\sqrt{n} R)^n g(\tau)
\end{equation*}

In particular, we can choose $\tau = g^{-1}(\epsilon c_n (2\sqrt{n} R)^n) $ which will guarantee
$R_{out}[h] \leq \epsilon$. This completes the proof.

\end{proof}

Similarly, we have that:

\begin{theorem}[Uniform OOD Detection under ID $g$-FSA and BoundedID]
\label{idfsa}
    Let the domain space $\mathcal{D}$ over $\reals^n$ satisfy the ID $g$-FSA and BoundedID for some priori given $g: \reals_{\geq 0} \rightarrow \reals_{\geq 0}$ and $R$. 
    Then OOD-detection is uniformly learnable for $\mathcal{D}$ in $\mathcal{P}(\reals^n)$.\cmmnt{The sample complexity and running time are polynomial in $1/\tau$, $1/\epsilon$, $1/\delta$, and $R$.}
\end{theorem}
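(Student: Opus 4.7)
The plan is to mimic the cube-partitioning scheme from the proof of Theorem \ref{oodfsa}, but replace the "include any cube that contains a sample" rule with a density-thresholding rule. The ID $g$-FSA assumption works in the opposite direction from OOD $g$-FSA: it tells us that cubes that intersect $\supp(D_{out})$ must have small ID mass. So the natural plan is to include exactly the cubes whose empirical ID mass is large enough to rule out an intersection with $\supp(D_{out})$.

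Concretely, I would proceed as follows. First, draw one sample $x_0 \sim D_{in}$ and let $B = \cl(B(x_0, R))$; BoundedID gives $\supp(D_{in}) \subseteq B$. Partition $B$ into $M \leq (2\sqrt{n}R/\tau)^n$ closed cubes of side $\tau/\sqrt{n}$ and diameter $\tau$, for a parameter $\tau>0$ to be fixed. By ID $g$-FSA, for every cube $Q$ that meets $\supp(D_{out})$ at some point $x$, $Q \subseteq \cl(B(x,\tau))$ and hence $\prob_{D_{in}}[Q] \leq g(\tau)\,c_n\tau^n$. Next, choose the threshold $\theta := 2 g(\tau)\, c_n \tau^n$, draw $N$ further samples, let $\hat p(Q)$ denote the empirical $D_{in}$-mass of $Q$, and output
\begin{equation*}
    h \;=\; \bigcup_{Q \in \mathcal{Q}\,:\, \hat p(Q) \geq \theta} Q .
\end{equation*}
Since $\tau$, $\theta$, $M$, and $N$ will all be determined from the inputs $\epsilon,\delta$ and the a priori data $g,R,n$, the algorithm is uniform.

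The analysis splits cubes into three types by their true ID mass $p(Q)$. By Hoeffding's inequality applied to $N$ i.i.d.\ Bernoulli indicators and a union bound over the $M$ cubes, choosing $N = O(\theta^{-2}\log(M/\delta))$ guarantees simultaneously that every cube with $p(Q) \leq \theta/2$ satisfies $\hat p(Q) < \theta$, and every cube with $p(Q) \geq 3\theta/2$ satisfies $\hat p(Q) \geq \theta$, each with probability at least $1-\delta$. In particular every cube that meets $\supp(D_{out})$ has $p(Q) \leq \theta/2$ and is therefore excluded from $h$, so $h \subseteq \supp(D_{out})^c$ and $R^{out}_D(h)=0$. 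The only cubes excluded from $h$ that contribute to $R^{in}_D(h)$ are then those with $p(Q) < 3\theta/2$, of which there are at most $M$, giving $R^{in}_D(h) \leq 3M\theta/2$.

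The final step is to pick $\tau$ so that the ID-risk bound falls below $\epsilon$. Using $M\tau^n \leq (2\sqrt{n}R)^n$, the bound becomes $3M\theta/2 \leq 3 c_n (2\sqrt{n}R)^n g(\tau)$, so it suffices to choose $\tau$ small enough that $g(\tau) \leq \epsilon / \bigl(3 c_n (2\sqrt{n}R)^n\bigr)$, which is possible by $\lim_{x\to 0} g(x)=0$. Plugging this $\tau$ into $N$ yields an explicit, $D$-independent sample complexity, establishing uniform learnability. The one real obstacle is bookkeeping the three interacting parameters $\tau$, $\theta$, and $N$: making $\tau$ small to kill the OOD contribution forces $M$ up and $\theta$ down, which pushes $N$ up, so one needs to verify that the inversion of $g$ remains well-defined and finite for every $\epsilon$ — which it does, since $g$ is an a priori fixed function with $\lim_{x\to 0}g(x)=0$.
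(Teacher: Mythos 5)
Your proposal is correct and takes essentially the same approach as the paper's proof: partition the bounding ball $\cl(B(x_0,R))$ into cubes of diameter $\tau$, include exactly those cubes whose empirical ID mass exceeds a threshold of order $g(\tau)c_n\tau^n$ (which ID $g$-FSA forbids for any cube meeting $\supp(D_{out})$), and conclude via concentration plus a union bound over the $M$ cubes. The only differences are cosmetic: you invoke Hoeffding where the paper uses Chernoff (costing a factor of $1/\theta$ in the sample complexity but not correctness), and you bound the ID risk of excluded cubes by $3M\theta/2$ directly rather than through the paper's significant/insignificant cube dichotomy with cutoff $\epsilon/M$.
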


\begin{proof}
    Again, let $0 < \epsilon < 1/2$ and $0 < \delta < 1/2$ and let $D  \in \mathcal{D}$ be any domain in $\mathcal{D}$. As per usual, let  $x_0$ be the first sample point and let $B = \cl(B(x_0, R))$ which we cover with the set $\mathcal{Q}$ of $M < (2 \sqrt{n} R /\tau)^n$ $n$-dimensional cubes of side $\tau/\sqrt(n)$. 

    The algorithm generates $N$ sample points and then return a hypothesis containing all hypercubes of $\mathcal{Q}$ containing at least $l$ sample points, where $l \in \reals$ and $N \in \nats$ are parameters which we will fix later. That is:

    \begin{equation*}
        h = \bigcup_{Q \in \mathcal{Q} \wedge |S \cap Q| \geq l} Q
    \end{equation*}

    Before we begin with the analysis of the algorithm, we observe that if $Q \cap \supp(D_{out}) \neq \emptyset$ then $\diam(Q) = \tau$ implies that $\prob_{D_{out}} (Q) \leq c_n g(\tau) \tau^n $

    Now as before, we call a hypercube $Q \in \mathcal{Q}$ significant if $P_{D_{in}}[Q] \geq A = \epsilon/M$ and insignificant if $P_{D_{in}}[Q] \leq B = c_n g(\tau) \tau^n$. Note that if all significant hypercubes are included in $h$ while all insignificant are not in $h$ then $h$ will have no OOD risk, while the ID risk of $h$ will be at most $\epsilon$ . 
    
    We will now show that we can choose the values of $\tau$, $N$, and $l$ so that the above holds with probability of at least $1 - \delta$. To this end, we fix the value of $\tau$ so that  $\epsilon / 4 c_n (2 \sqrt{n} R)^n > g(\tau)$. Note that this implies that $A > 4B$. We also fix $l$ to be equal to $2B = 2c_n g(\tau) \tau^n$.

    Let $Q \in \mathcal{Q}$ be an insignificant square. Then a direct application of Chernoff bounds implies that:
    
    \begin{equation*}
        \prob_{S \sim D_{in}^{N}} [Q \subseteq h] \leq \exp(-NB/3)
    \end{equation*}

    Similarly, if $Q \in \mathcal{Q}$ is a significant square, Chernoff bounds imply that: 

    \begin{equation*}
        \prob_{S \sim D_{in}^{N}} [Q \not\subseteq h] \leq \exp(-NB/2) <  \exp(-NB/3)
    \end{equation*}

    A union bound now implies that the algorithm is not successful with probability of at most:

    \begin{equation*}
        P_{S \sim D_{in}^{N}}[R(h) > \epsilon] > M \exp(-NB/3)
    \end{equation*}

    This implies that it suffices to choose $N \geq \frac{3}{B}\log(\frac{M}{\delta})$. Substituting everything in, we obtain that the sample complexity $N$ of the algorithm is:

    \begin{equation*}
        N  \leq \frac{3}{c_n g(\tau) \tau^n}\log\bigg(\frac{(2 \sqrt{n} R)^n}{\delta \tau^n}\bigg)
    \end{equation*}

    where $\tau = g^{-1}(\epsilon / 4 c_n (2 \sqrt{n} R)^n) $

\end{proof}

We now modify the algorithms from the two theorems above to obtain non-uniform learners of OOD Detection under ID or OOD $g$-FSA without the boundedness assumptions. We do that by first finding a sphere that with high probability  contains most of the mass of the ID distribution and then applying Theorem \ref{oodfsa} or Theorem \ref{idfsa}.  

We remark that under OOD $g$-FSA we can use Proposition \ref{prtol} similarly to how Theorem \ref{farood} followed from Theorem \ref{unfar}. This is possible because the hypotheses returned be the algorithm in Theorem \ref{oodfsa} have OOD risk of at most $\epsilon$ with probability $1$. This, however, is not the case for ID $g$-FSA and, and for that reason, we present a more general argument.

\begin{theorem}[Non-uniform OOD Detection under ID or OOD $g$-FSA]
\label{iofsa}
OOD-detection is non-uniformly learnable for $\mathcal{D}$ in $\mathcal{P}(\reals^{n})$ for a domain space $\mathcal{D}$ which satisfies ID $g$-FSA or OOD $g$-FSA for some a priori given function $g: R_{\geq 0} \rightarrow R_{\geq 0}$.
\end{theorem}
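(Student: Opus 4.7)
The plan is to reduce to the bounded cases handled by Theorems \ref{oodfsa} and \ref{idfsa} by first using samples to estimate a ball that captures most of the mass of $D_{in}$, and then running the bounded grid-based learner inside that ball. Since the radius of such a ball depends on the tail behavior of $D_{in}$ and is not uniformly bounded over $\mathcal{D}$, this reduction yields non-uniform rather than uniform learnability, exactly as the theorem claims. Intuitively, this is the analogue of how Theorem \ref{farood} extends Theorem \ref{unfar}, but because under ID $g$-FSA the OOD risk of the bounded learner is not deterministically zero, we cannot directly invoke Proposition \ref{prtol} and have to do the reduction by hand.

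The first step is estimation. Draw $N_1 = O(\epsilon^{-1}\log\delta^{-1})$ initial samples from $EX(D_{in})$ and let $R$ be the smallest radius such that at least $1 - \epsilon/6$ of these samples lie inside $B(0, R)$, i.e., an empirical quantile of the sample norms. By a standard Chernoff/Hoeffding bound applied to the Bernoulli variables $\mathbbm{1}\{\|x_i\| \leq R_*\}$, where $R_*$ is the true $(1 - \epsilon/3)$-quantile of $\|x\|$ under $D_{in}$, for $N_1$ large enough we obtain with probability at least $1 - \delta/2$ that $\prob_{D_{in}}[B(0, R)^c] \leq \epsilon/3$. The quantile $R_*$ is finite for every fixed $D \in \mathcal{D}$ by the Monotone Convergence Theorem, but is unbounded as $D$ ranges over $\mathcal{D}$, which is precisely why only non-uniform learnability is possible here.

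The second step is to invoke the algorithm of Theorem \ref{oodfsa} (under OOD $g$-FSA) or Theorem \ref{idfsa} (under ID $g$-FSA) with the estimated $R$ in place of the a priori diameter bound, precision $\epsilon/3$, and confidence $\delta/2$. That algorithm partitions $B(0, R)$ into $M = O((R\sqrt{n}/\tau)^n)$ cubes of side $\tau/\sqrt{n}$ for a suitable $\tau$ determined by $g$ and $R$, and returns a hypothesis $h \subseteq B(0, R)$ based on cube-wise sample counts. Their analyses show that, with probability at least $1 - \delta/2$, the ID risk restricted to $B(0, R)$ and the OOD risk $\prob_{D_{out}}[h]$ are each at most $\epsilon/3$. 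Adding the mass outside the ball, the total ID risk is at most $2\epsilon/3$, and a final union bound across the two stochastic events gives overall risk $R^{\alpha}_D(h) \leq \epsilon$ with probability at least $1 - \delta$.

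The main obstacle is to verify that the analyses of Theorems \ref{oodfsa} and \ref{idfsa} go through when BoundedID is replaced by the weaker event ``most ID mass lies in $B(0, R)$.'' The BoundedID assumption is used in those proofs only to conclude $\supp(D_{in}) \subseteq B(x_0, R)$ when bounding the risk inside the grid; the same counting argument instead bounds only the restricted-to-$B(0, R)$ ID mass of non-included cubes, with the remaining ID mass outside $B(0, R)$ already budgeted separately. The OOD/ID $g$-FSA conditions, being properties of the full domain rather than of the restriction, carry over unchanged, so in particular the crucial ``insignificant cubes are excluded with high probability'' step in the ID $g$-FSA analysis of Theorem \ref{idfsa} still yields zero OOD mass inside $B(0, R)$ with the required probability. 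Once this adaptation is checked, non-uniform learnability follows, with sample complexity $N_1 + N_2$ depending on the domain-dependent quantile $R_*$ and on the decay of $g$ near zero.
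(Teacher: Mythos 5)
Your proposal is correct and follows essentially the same route as the paper's proof: draw an initial batch of samples to find a ball capturing all but an $O(\epsilon)$ fraction of the ID mass (the paper uses the farthest of the first $N_1 = \frac{2}{\epsilon}\ln\frac{2}{\delta}$ samples from the first sample point rather than an empirical quantile of norms, but this is immaterial), then run the bounded learner of Theorem \ref{oodfsa} or \ref{idfsa} restricted to that ball and combine via a union bound. Your explicit check that the bounded-case analyses survive replacing BoundedID with ``most ID mass lies in the ball'' is the same point the paper handles by passing to the restricted domain $D' = (D_{in}\mid B,\, D_{out})$, and your remark that Proposition \ref{prtol} cannot be invoked directly in the ID $g$-FSA case matches the paper's own observation.
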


\begin{proof}
Again, fix the valuse of  $0 < \epsilon < 1/2$ and $0 < \delta < 1/2$ and let $D =\in \mathcal{D}$ be any domain in the domain space $\mathcal{D}$

The algorithm begins drawing samples: denote with $x_0$ the first sample. Let $y$ be the point furthest away from $x_1$ after the first $N_1$ samples. Observe that we can fix the value of $N_1$ so that a closed ball $B = \cl(B(x_0, ||x -y||))$ contains at least $1 - \epsilon/2$ of the mass of $D_{in}$ with probability of at least $1 - \delta/2$. Indeed, it is straightforward to verify that $N_1 = \frac{2}{\epsilon} \ln(\frac{2}{\delta})$ suffices.

We consider the domain $D'$ where $D'_{in} = D_{in} \cap B$ is the part of $D_{in}$ contained in the closed ball $B$ and $D'_{out} = D_{out}$. We can now apply Theorem \ref{oodfsa} or Theorem \ref{idfsa} to obtain a hypothesis $h$ which $D'$ risk is at most $\epsilon/2$ with probability of at least $1 - \delta/2$ by simply ignoring the samples that lie outside $B$. 

Finally, a simple union bound implies that with probability of at least $1 - \delta$ we have that:

\begin{equation*}
    \prob_{D_{in}}[B] \geq 1 - \epsilon/2 \text{ and } R_{D'}(h) \leq \epsilon/2
\end{equation*}

But then $R_D(h) \leq \epsilon$. This completes the proof.

\end{proof}

\paragraph{OOD Detection under Hölder}

Finally, we use the results above and the No Free Lunch Theorem to prove Theorem \ref{nonholder} and \ref{unholder}:

\nonholder*

\begin{proof}
Let $D$ be a domain satisfying the $(\gamma, C)$-Hölder Continuous ID Distribution Assumption and let $f_{in}$ be the Hölder continuous density function of $D_{in}$. This means that for any  $x \in \supp{D_{out}}$ and $y \in \reals^n$ such that $||x - y|| \leq \tau$ we have:

\begin{equation*}
f_{in}(y) \leq C \tau^\gamma
\end{equation*}

Integrating on $\cl B(x, \tau)$ we obtain that:

\begin{equation*}
\frac{\prob_{D_{in}}[B(x, \tau)]}{\vol(B(x, \tau))} = C \tau^\gamma / c_n \rightarrow 0
\end{equation*}

This means that $\mathcal{D}$ satisfies the ID $g$-Far Supports Assumption. Therefore, the conditions of Theorem \ref{iofsa} hold and OOD Detection for $\mathcal{D}$ is non-uniformly learnable.

Similarly, we can verify that the $(\gamma, C)$-Hölder Continuous OOD Distribution implies the OOD $g$-Far Supports Assumption and, thus, Theorem \ref{iofsa} implies the non-uniform learnablity of OOD Detection for $\mathcal{D}$
\end{proof}

Finally, we complete this subsection by presenting a proof of Theorem \ref{unholder}

\unholder*

\begin{proof}
In the proof of Theorem \ref{unholder} we saw that the domain space $\mathcal{D}$ satisfies the ID or the OOD $g$-Far Supports Assumption. As $\mathcal{D}$ also satisfies BoundedID, Theorems \ref{idfsa} and \ref{oodfsa} imply that OOD detection for $\mathcal{D}$ is indeed uniformly learnable.

To complete the proof we will now show that uniform OOD detection is not always possible under only ID-Hölder or OOD-HölderOOD. For the sake of clarity, we will consider domains over $\reals$ with $(1, 1)$-Hölder continuous ID and OOD density functions, however the argument readily extends for any Hölder continuous domain spaces over $\reals^n$.

As the distributions are absolutely continuous we will apply the Generalized No Free Lunch Theorem. To this end, we consider the family $\mathcal{F} = \{[10i, 10i+ 3] \mid i \in \nats\}$ . It is easy to see that $\mathcal{F}$ satisfies the condition of the \nameref{genfreelunch} for the domain space $\mathcal{D} = \mathcal{D}_{H_{1, 1}^{ID}} \cap \mathcal{D}_{H_{1, 1}^{OOD}}$. In particular, the domains in $\mathcal{D}$ achieve any distribution of their ID and OOD mass on $\mathcal{F}$. Indeed, for any such distribution on $\mathcal{F}$ a Hölder continuous domain can be constructed by first increasing and then decreasing the density of $D_{in}$ or $D_{out}$ with gradient $1$ inside the supported intervals $[10n, 10n+3]$ achieving the desired probability mass. 

Thus, OOD Detection is not uniformly learnable for $\mathcal{D}_{H_{1, 1}^{ID}}$ and $\mathcal{D}_{H_{1, 1}^{OOD}}$ in $\mathcal{P}(\reals^n)$.
\end{proof}

\subsection*{D.4 OOD Detection under ConvexID }

Below we explain explain how we can modify the argument from Theorem \ref{dsa:no} and extend it to domains spaces which satisfy ConvexID to prove Theorem \ref{nocon}

\nocon*

\begin{proof} Assume that OOD detection is non-uniformly learnable for the domain space $\mathcal{D}_c$. We will only construct the subfamily of $\mathcal{D}_c$ with which we can obtain a contradiction with this as afterwards the argument closely follows the proof of Proposition \ref{lrule}.

In particular, as "base" ID distribution $D_{in}$ we will consider a distributions supported on the two-dimensional unit circle $B^2$ with $\lambda$ of the its mass distributed uniformly on the its inside $\inter(B^2)$ and the rest $1 - \lambda$ of the mass distributed uniformly on its boundary $\partial B^2$. $D_{in}$ will correspond to the distribution $U([0,1])$ in of Proposition \ref{lrule}. 

For the ease of notation will refer to points by by their spherical coordinates $(\theta, r)$ for $\theta \in [0, 2\pi)$ and $r \in \reals_{\geq 0}$. We fix a small $\epsilon < \pi$ and for a $\theta \in [0, 2\pi)$  we denote with $C_{\theta}$ the larger of the two regions to which the line $(\theta, 1)$ and $(\theta + \epsilon, 1)$ splits $B^2$. Additionally, denote with $\kappa_{\epsilon}$ the $D_{in}$ mass of the smaller of these regions, which does not depend on the exact choice of $\theta$. 

The distribution $D^{\theta}_{in}$ close to $D_{in}$ we will consider are given by:

\begin{equation*}
    D^{\theta}_{in} = (1 - \kappa_{\epsilon}) D_{in} \mid C_{\theta} + \kappa_{\epsilon} \{(0, 2)\}
\end{equation*}

They correspond to $D^{x}_{in} = = (1 - \epsilon) U([0,1] \setminus [x, x + \epsilon]) + \epsilon \cdot \{3\}$ in the proof of Proposition \ref{lrule}. Having defined the ID distribtutions, we can now follow the argument from the proof of Proposition \ref{lrule}. In particular, letting $\epsilon$ and $\lambda$ tend to $0$ we obtain that no algorithm achieves error less than $\alpha(1 - \alpha)$ on all domains in $\mathcal{D}_c$.

\begin{center}
    \includegraphics[width=80mm,scale=1.5]{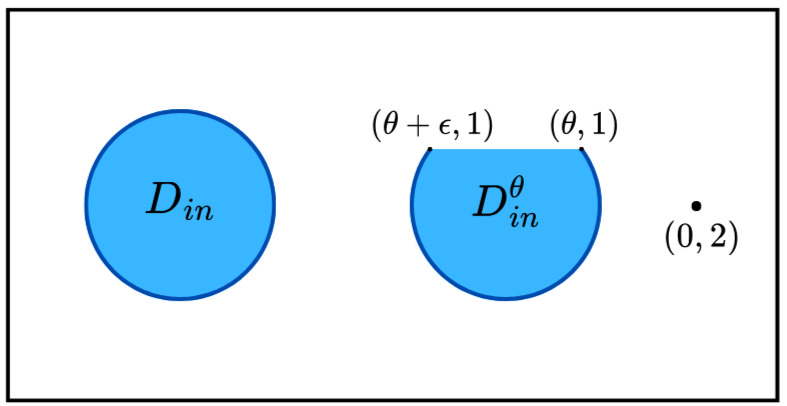}\\
    Figure 2. 
\end{center}

\end{proof}

Next, we present a proof of Theorem \ref{conood}:

\conood*

\begin{proof}
    As mentioned in the main body of this work, we show that Convex Hull Procedure satisfies the conditions of Proposition \ref{prtol}. We denote with $h_M$ be the hypothesis produced by the Convex Hull Procedure after the $M$ data point is sampled form $D_{in}$. That is, $h_M$ is the convex hull of the first $M$ samples. As already remarked, as $\supp(D_{in})$ is a convex set all the hypotheses $h_M$ will have no OOD risk. Therefore, it remains to show that for all domains $D \in \mathcal{D}$:

    \begin{equation*}
        \prob_{S^M \sim D^M_{in}}[R_{D_{in}}(h_M) < \epsilon] \rightarrow 1 \text{ as } n \rightarrow \infty
    \end{equation*}

    Note that a point $x \in \inter \supp(D_{in})$. $x$ belongs to $h_M$ iff every half-space defined with a hyperplane passing through $x$ contains a sample point in $S_M$. 
    
    Denote with $L_{\lambda}$ the set of all points $x \in \inter K$ which Tukey depth \cite{tukey1975mathematics} is at least $\lambda$. We will fix the value of $M$ so that with probability of at least $1 - \delta$, $h_M$ contains the entire $L_{\lambda}$. 

    The VC dimension of the set of all half-spaces in $\reals^n$ is well-known to be $n+1$ \cite{shalev2014understanding}. A standard generalization bound (as in Lemma 7 of \cite{chaudhuri2010rates}) yields that with probability of at least $ 1 - \delta$ every half-space $H$ with:
    
    \begin{equation*}
        \prob_{D_{in}}[H] > \frac{C_{\delta}d\log(M)}{M}
    \end{equation*}
    
    intersects $h_M$ where $C_{\delta}$ is constant that only depends on $\delta$ and $C_{\delta} \in O(\ln(1/\delta))$. That implies that for $M > \frac{C_{\delta}d}{\lambda} log^2(C_{\delta}d/\lambda)$ with probability of at least $1-\delta$, $h_M$ contains the entire $L_{\lambda}$.
    
    Thus, to complete the proof it suffices to show that for all domains $D \in \mathcal{D}$ we have:
    
    \begin{equation*}
        \lim_{\epsilon \rightarrow 0} \prob_{D_{in}}(L_{\epsilon}) = 1
    \end{equation*}

    But this follows readily as all $x \in \inter (\supp(D_{in}))$ have positive Tukey depth. A standard application of MCT then implies that:
    
    $$\lim_{\lambda \rightarrow 0} \prob_{D_{in}}[L_{\lambda}] = \prob_{D_{in}}[\inter (\supp(D_{in}))] = 1 - \prob_{D_{in}}[\partial (\supp(D_{in}))] = 1$$
    
    as $D_{\in}$ is absolutely continuous.

\end{proof}

\begin{remark}
    The sample and time complexity of the convex hull algorithm for a domain space depends on how quickly the $D_{in}$ mass of the sets $L(\epsilon)$ tends to $1$. In particular, if this convergence is polynomial for all domains $D \in \mathcal{D}$ then the convex hull will require sample and time complexity polynomial in $1/\epsilon$ and $1/\delta$.

    Intuitively, we can think of the behaviour of $1 - \prob_{D_{in}}[L(\epsilon)]$ as a proxy for the complexity of the ID-distribution of a domain space. It captures how heavy the boundary of $\supp{D_{in}}$ and its neighbourhood are.
\end{remark}

Lastly, we present a proof of the impossibility result of Theorem \ref{conun}:

\conun*

\begin{proof}
As already mentioned, we will apply the \nameref{genfreelunch}. For the sake of clarity, we work in $\reals^2$.

To this end, for any $n \in \nats$ we construct a regular $n$-gon $A_1 A_2 \ldots A_n$ with side 1 . We construct the family of sets $T = \{B(A_i, \epsilon_n) \mid i \in \{1, \ldots, n\}\}$ for $\epsilon_n > 0$ sufficiently small to guarantee that the family $T$ is shattered by $\mathcal{D}'_c$. 

It is easy to see that for any $\epsilon > 0$ it is possible to distribute $1 - \epsilon$ of the ID mass on any subfamily of $T$ (and use the remaining $\epsilon$ of the ID mass to complete a convex ID support). Thus, the conditions of the \nameref{genfreelunch} hold and, therefore, we obtain that the uniform learnability of OOD detection is impossible for the domain space $\mathcal{D}'_c$.

\end{proof}

\end{document}